\def\eqref#1{equation~\ref{#1}}
\def\1{\bm{1}}
\DeclareMathAlphabet{\mathsfit}{\encodingdefault}{\sfdefault}{m}{sl}
\SetMathAlphabet{\mathsfit}{bold}{\encodingdefault}{\sfdefault}{bx}{n}
\newtheorem{theorem}{Theorem}[section]
\newtheorem{lemma}[theorem]{Lemma}
\newtheorem{corollary}[theorem]{Corollary}
\newcommand{\beginsupplement}{
  \setcounter{table}{0}  
  \renewcommand{\thetable}{S\arabic{table}} 
  \setcounter{figure}{0} 
  \renewcommand{\thefigure}{S\arabic{figure}}
  \renewcommand{\theHfigure}{S\arabic{figure}}
}
\definecolor{darkgreen}{rgb}{0.0, 0.5, 0.0}
\title{Disentangling Representations through \\ Multi-task Learning}
\author{Pantelis Vafidis\thanks{Equal contribution.} , Aman Bhargava\footnotemark[1] \\
Computation and Neural Systems\\
California Institute of Technology\\
\texttt{\{pvafeidi,abhargav\}@caltech.edu} \\
\And
Antonio Rangel \\
Humanities and Social Sciences \\
California Institute of Technology \\
\texttt{arangel@caltech.edu} \\
}
\begin{document}

\maketitle

\begin{abstract}
Intelligent perception and interaction with the world hinges on internal representations that capture its underlying structure (``disentangled'' or ``abstract'' representations). Disentangled representations serve as world models, isolating latent factors of variation in the world along approximately orthogonal directions, thus facilitating feature-based generalization. We provide experimental and theoretical results guaranteeing the emergence of disentangled representations in agents that optimally solve multi-task evidence accumulation classification tasks, canonical in the neuroscience literature. The key conceptual finding is that, by producing accurate multi-task classification estimates, a system implicitly represents a set of coordinates specifying a disentangled representation of the underlying latent state of the data it receives. The theory provides conditions for the emergence of these representations in terms of noise, number of tasks, and evidence accumulation time, when the classification boundaries are affine in the latent space.
Surprisingly, the theory also produces closed-form expressions for extracting the disentangled representation from the model's latent state $\mathbf Z(t)$. We experimentally validate these predictions in RNNs trained on multi-task classification, which learn disentangled representations in the form of continuous attractors, leading to zero-shot out-of-distribution (OOD) generalization in predicting latent factors. We demonstrate the robustness of our framework across autoregressive architectures, decision boundary geometries and in tasks requiring classification confidence estimation. We find that transformers are particularly suited for disentangling representations, which might explain their unique world understanding abilities. Overall, our framework establishes a formal link between competence at multiple tasks and the formation of disentangled, interpretable world models in both biological and artificial systems, and helps explain why ANNs often arrive at human-interpretable concepts, and how they both may acquire exceptional zero-shot generalization capabilities.
\end{abstract}




\section{Introduction}

The ability to construct representations that capture the underlying structure of the world from data, is a hallmark of intelligence. Humans and animals leverage their experiences to construct such faithful representations of the world ("world models", "cognitive maps"), resulting in a near-effortless ability to generalize to new settings \citep{Lake2015,Lake2016}. Modern foundation models also display emergent out-of-distribution (OOD) generalization abilities, in the form of zero- or few-shot learning \citep{Brown2020,Pham2021,Jia2021,Oquab2023}; however whether artificial systems learn world models remains unclear. Understanding the conditions under which that occurs is bound to lead to better generalizable systems, and explain why artificial systems often converge to human interpretable, aligned representations of the world \citep{Templeton2024}.

A promising direction towards understanding the construction of world models is abstract, or disentangled representations \citep{Higgins2017,Kim2018,Johnston2023}. These two concepts are interrelated yet somewhat distinct (see definitions adapted from \citet{Ostojic2024} in \cref{app:def}). Shortly, an abstract representation of $x_1, \dots, x_n$ represents each $x_i$ linearly and approximately mutually orthogonally. Disentangled representations encode each $x_i$ orthogonally, without the necessity of linearity. Both representations preserve the latent structure present in the world in their geometry by isolating \textit{factors of variation} in the data, which facilitates downstream generalization. When a representation is abstract, a linear decoder (i.e. downstream neuron) trained to discriminate between two categories can readily generalize to stimuli not observed in training, due to the structure of the representation. Furthermore, the more disentangled the representation is, the lower the interference from other variables and hence the better the performance. This corresponds to decomposing a novel stimulus into its familiar features, and performing feature-based generalization. For instance, imagine you are at a grocery store, deciding whether a fruit is ripe or not. If the brain's internal representation of food attributes (ripeness, caloric content, etc.) is disentangled, then learning to perform this task for bananas would lead to zero-shot generalization to other fruit (e.g. mangos, \cref{fig:problem_statement}a). Crucially, the visual representation of a mango is high-dimensional, non-linear and noisy, making it particularly challenging to extract a low dimensional latent like "ripeness".

Several brain areas including the amygdala, prefrontal cortex and hippocampus encode variables of interest in an abstract format \citep{Saez2015,Bernardi2020,Boyle2022,Nogueira2023,Courellis2024}. This raises the question of under which conditions do such representations emerge in biological and artificial agents alike. Previous work showed that feedforward neural networks develop abstract representations when trained to multitask \citep{Johnston2023}. However, real-world decisions typically rely on imperfect, noisy information, evolving dynamically over time \citep{Britten1992,Krajbich2010}. To account for this important feature of the world, we train autoregressive models (RNNs, LSTMs, transformers) to multitask canonical neuroscience tasks involving the accumulation of evidence over noisy streams. The tasks tie closely to Bayesian filtering theory, and should be solved by any agent that deals with a noisy world.

\paragraph{Contributions.} Our main contributions are the following:

\vspace{-.5em}

\begin{itemize}
\item We prove that any optimal multi-task classifier is guaranteed to learn an abstract representation of the ground truth contained in the noisy measurements in its latent state, if the classification boundary normal vectors span the input space (\cref{sec:theory_appendix}). Furthermore,  the representations are guaranteed to be disentangled as the number of tasks $N_{\text{task}}$ greatly exceeds the input dimensionality $D$.  Intriguingly, noise in the observations is necessary to guarantee the latent state would compute a disentangled representation of the ground truth.

\item We confirm that RNNs trained to multitask develop abstract representations that zero-shot generalize OOD, when $N_{\text{task}} \geq D$, and orthogonal, disentangled representations for greater $N_{\text{task}}$. The computational substrate of these representations is a 2D continuous attractor \citep{Amari1977} storing a ground truth estimate in a product space of the latent factors. In addition, the representations are sparse and mixed, attributes of biological neural networks.

\item We reproduce these findings in GPT-2 transformers, which generalize better due to them learning disentangled representations already from $N_{\text{task}} \geq D$, confirming their appropriateness for constructing disentangled world models.

\item We demonstrate that our setting is robust to a number of manipulations, including correlated inputs, interleaved learning of tasks and free reaction-time tasks canonical in the cognitive neuroscience literature \citep{Britten1992,Krajbich2010}.

\item Finally, we discuss implications for generalizable representation learning in biological and artificial systems, and demonstrate the strong advantage of multi-task learning over previously proposed mechanisms of representation learning in the brain \citep{Mante2013}.

\end{itemize}

Although framed in the context of canonical neuroscience tasks, our results are general; they apply to any system aggregating noisy evidence. While our experiments focus on supervised multi-task learning for tractability, the theory only assumes \textbf{competence} at multiple tasks, thus enabling alternative methods of acquiring such competence, such as self-supervised or unsupervised pre-training.

\subsection{Related work}
Disentanglement has long been recognized as a promising strategy for generalization \citep{Bengio2012} (although note \citet{Locatello2019,Montero2020} for a contrarian view), yet most classic work focuses on feedforward architectures \citep{Higgins2017,Kim2018,Whittington2022,Maziarka2023}. In autoregressive models, \citet{Hsu2017,Li2018} showed that variational LSTMs disentagle representations of underlying factors in sequential data allowing style transfer; however the underlying representational geometry was not characterised. Other work focuses on fitting RNNs to behavioral data while enforcing disentanglement for interpretability \citep{Dezfouli2019,Miller2023}. Work on context-dependent decision making has shown that RNNs re-purpose learned representations in a compositional manner when trained in related tasks \citep{Yang2019,Driscoll2022}; however, the abstractness of the resulting representations was not established. Finally \citet{John2018} show that multitasking results in disentanglement, however unlike us they directly enforce latent factor separation through their adversarial optimization objectives. Our approach is most closely related to weakly supervised disentanglement, without comparing across samples \citep{Shu2019}.

Our work relates to previous work on linear identifiability. \citet{Roeder21} show that representations of models trained on the same distribution must be linear transformations of each other; yet we go beyond their results to show that abstract representations are {\bf guaranteed} to emerge under moderate conditions, irrespectively of the dimensionality of the input and model architecture. \citet{Lachapelle23} proved that disentangled representations emerge in feedforward architectures from multitask learning in sparse tasks when a sparsity regularization constraint is placed on the predictors; we place no such constraints and still uncover disentangled representations. 

Previous neuroscience-inspired work showed that multitasking feedforward networks learn abstract representations, as quantified by regression generalization \citep{Johnston2023}. We expand upon these findings in several ways. 
First, we extend the framework to autoregressive architectures (RNNs, LSTMs, transformers) that can update their representations as further information arrives. 
Second, we prove theorems that guarantee the emergence of abstract representations \textbf{in any optimal multitask classifier} if the number of tasks exceeds the input dimensionality $D$, and showcase disentanglement in our trained networks. 
Third, we rigorously analyze the role of noise in forming disentangled representations, extending the noise-free regime studied in \citet{Johnston2023}. 
Finally, we explore a range of values for $D$, providing experimental validation of our theory.

\section{Problem formulation}
\label{sec:prob_form}

\paragraph{Multi-Task Classification with Evidence-Aggregation: } We study the evidence accumulation multi-task classification paradigm shown in Figure~\ref{fig:problem_statement}b. 
An agent with latent state $\mathbf Z(t)$ receives noisy, non-linearly mapped observations $\{f\big(\mathbf X(t)\big)\}_{t=1}^T$ where each $\mathbf X(t) = \mathbf x^* + \sigma \, \mathcal N(0, I_D)$ is a noisy measurement of unknown ground truth vector $\mathbf x^*\in \mathbb R^D (x_i^* \sim \textit{Uniform}(-0.5,0.5))$, $\mathcal N$ being Gaussian noise.
The noisy measurements are transformed by an injective observation map $f$, which can be non-linear and high dimensional, representing the wide range of sensory transformations found in real-world scenarios.
The agent is tasked with simultaneously solving $N_\textit{task}$ classification problems by accumulating information over time (a canonical neuroscience task \citep{Britten1992}), each defined by a random linear decision boundary\footnote{Due to the observation map $f$, the tasks may appear non-linear from the perspective of the multi-task classification agent.} in the ground truth space $\mathbb R^D$ i.e.
\begin{equation}
\label{eqn:y_i_def}
y_i(\mathbf x^*) = \begin{cases}
    1 & \text{ if } \mathbf c_i^\top \mathbf x^* > b_i \\ 
    0 & \text{ otherwise } 
\end{cases}
\end{equation}

 where $\mathbf y(\mathbf x^*) \in \{0, 1\}^{N_\textit{task}}$ represents the $N_\textit{task}$ classifications of $\mathbf x^*$, $\{(\mathbf c_i, b_i)\}_{i=1}^{N_\textit{task}}$ are the classification boundary normal vectors and offsets, and let $\hat{\mathbf Y}(t) = g(\mathbf Z(t)) \in [0, 1]^{N_\textit{task}}$ represent the agent's predicted likelihood of $y_i(\mathbf x^*)=1$ over each of the binary classifications $i$ at time $t$. The classification lines reflect criteria based on which decisions will be made. Imagine for example that $x_1$ corresponds to food and $x_2$ to water reward. 
Depending on the agent's internal state, one takes precedence over the other, and the degree of preference is reflected in the slope of the line. 

\begin{figure}[h]
    \centering
    \includegraphics[width=0.8\linewidth]{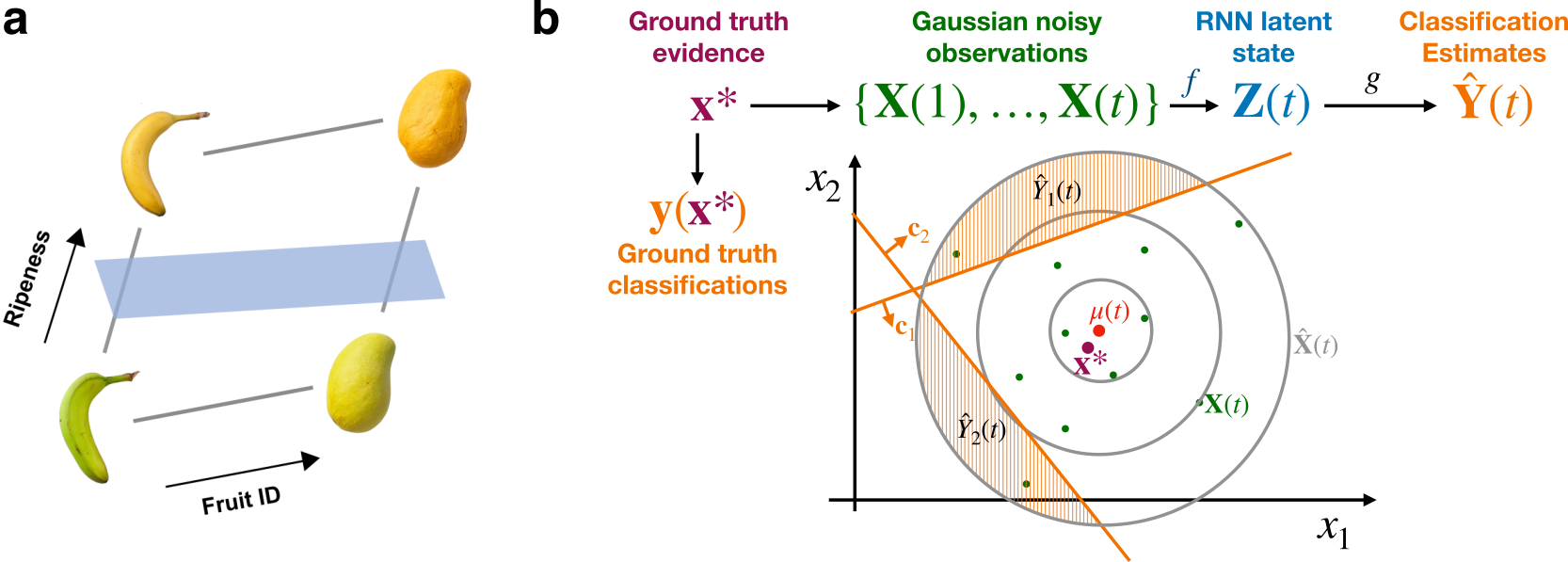}
    \caption{\textbf{Disentangled representations and a framework to learn them. (a)} A disentangled representation directly lends itself to OOD generalization: a downstream linear decoder that can differentiate ripe from unripe bananas can readily generalize to mangos, even though it has never been trained on mangos. \textbf{(b)} Overview of our multi-task classification framework. A ground truth $\mathbf{x}^*$ is sampled and Gaussian noise is added to arrive at observations $\{\mathbf X(1), ..., \mathbf X(t)\}$. 
    These observations are processed by the filter-based model illustrated graphically in Figure~\ref{fig:theory_setup}, maintaining a latent state $\mathbf{Z}(t)$. 
    The latent state $\mathbf{Z}(t)$ is then used to produce classification outputs $\hat{Y}_1(t)$, $\hat{Y}_2(t)$. Theorem~\ref{thm:optimal_reps} proves that $\mathbf{Z}(t)$ must encode the optimal estimator of $\mathbf{x}^*$ given the noisy observations, $\mu(t)$.}
    \label{fig:problem_statement}
\end{figure}

\paragraph{Criterion for Disentangled Representation Learning: } We investigate how solving the multi-task classification problem (Figure~\ref{fig:problem_statement}b) leads to agents learning disentangled representations of the latent ground truth $\mathbf x^*$ in its internal state $\mathbf Z(t)$.
Specifically, we ask whether there exists a linear-affine transformation $(\mathbf A, \mathbf b)$ such that $\mathbf x^* = \mathbf A\mathbf Z(t) + \mathbf b$.
Such a mapping would imply $\mathbf Z(t)$ linearly represents $\mathbf x^*$.
If the rows of $\mathbf A$ are approximately orthogonal, the representation is disentangled.

\section{Theoretical results}
\label{sec:theory}
Here we provide conditions and guarantees for the emergence of disentangled representations in optimal multi-task classifiers with latent state $\mathbf Z(t)$ in the paradigm described in Section~\ref{sec:prob_form} and Figure~\ref{fig:problem_statement}.
By ``optimal multi-task classifier'', we refer to any agent or system whose outputs $\hat{\mathbf Y}(t)$ correspond to the correct posterior classification probabilities given the noisy, non-linearly transformed observations; that is, for each task $i = 1, \dots, N_\textit{task}$
\begin{align}
    \hat Y_i(t) &= \Pr\left(y_i(\mathbf x^*) = 1 \mid f(\mathbf X(1)), \dots, f(\mathbf X(t)) \right) 
\end{align}

The notion of optimality allows us to make precise statements about the informational content of the agent's internal state since $\hat {\mathbf Y}(t) = g(\mathbf Z(t))$. Let $\mathbf C \in \mathbb R^{N_\textit{task} \times D}$ be a matrix where each row is a decision boundary normal vector. Then

\begin{theorem}[Disentangled Representation Theorem]
\label{thm:dr_theorem}
    If $\mathbf C \in \mathbb R^{N_\textit{task}\times D}$ is a full-rank matrix and $N_\textit{task} \geq D$ and noise $\sigma > 0$, then 

    \begin{enumerate}
        \setlength{\itemsep}{2pt} 
        \setlength{\parskip}{2pt} 
        \item Any optimal estimator of $\mathbf y(\mathbf x^*)$ \textbf{must encode a finite-sample, maximum likelihood estimate} $\mu(t)$ of the ground truth evidence variable $\mathbf x^*$ in its latent state $\mathbf Z(t)$. 
        \item If the activation function is sigmoid-like, $\mu(t)$ will be \textbf{linearly decodable from} $\mathbf Z(t)$, thus implying that $\mathbf Z(t)$ contains an abstract representation of $\mu(t)$ \citep{Ostojic2024}.
        \item The representation is guaranteed to be disentangled (orthogonal) as $N_\textit{task} \gg D$ for random decision boundaries.
    \end{enumerate} 
    Specifically, $\mu(t)$ is the maximum likelihood estimate (MLE) of $\mathbf x^*$ given observations $f(\mathbf X(1)), \dots, f(\mathbf X(t))$. 
    A closed-form expression for extracting $\mu(t)$ from $\mathbf Z(t)$ if $N_\textit{task} \geq D$ is:

    \vspace{-1em}

    \begin{align}
        \mu(t) &= (\mathbf C^\top \mathbf C)^{-1} \mathbf C^\top 
        \begin{pmatrix}
            \frac{\sigma}{\sqrt{t}} 
            \Phi^{-1} \big(
                g(\mathbf Z(t))
            \big)
            + \mathbf b
        \end{pmatrix} \label{eqn:main_decode} 
    \end{align}
    where $\Phi$ is the CDF of the normal distribution, $\sigma$ is the noise magnitude and $t$ the trial duration.
    Furthermore, if the activation function $g$ is of the sigmoid family of functions ($\tanh,\textup{sigmoid}$), then the term $\Phi^{-1} \big(g(\cdot))$ approximately cancels out, leading to:
    \begin{align}
        \mu(t) &\approx 
        \underbrace{\frac{a_g \, \sigma}{\sqrt{t}} (\mathbf{C}^\top \mathbf{C})^{-1} \mathbf{C}^\top \mathbf{Z}(t)}_{\text{Linear Function of $\mathbf Z(t)$}}
        + \underbrace{
        (\mathbf{C}^\top \mathbf{C})^{-1} \mathbf{C}^\top \mathbf{b}
        }_{\text{Affine Term}}
        \label{eqn:main_sigmoid_decode}
    \end{align}
    where we have approximated $a_{\tanh} = \frac{2\sqrt{3}}{\pi}$ for $g=\tanh$ and $a_\sigma = 0.5886$ for $g=\textup{sigmoid}$. For Gaussian IID noise, $\mu(t)$ is the sample mean of $\{\mathbf X(t)\}_{t=1}^T$, i.e. with non-linearity $f$ removed.
\end{theorem}
\begin{proof}
    Point 1 and Equation~\ref{eqn:main_decode} are proven in Appendix~\ref{sec:theory_appendix} in Theorem~\ref{thm:optimal_reps}. Point 2 and Equation~\ref{eqn:main_sigmoid_decode} are proven in Corollary~\ref{corr:disentangled_tanh} for $\tanh$ and Corollary~\ref{corr:sigmoid_approx} for $\text{sigmoid}$. Point 3 is proven in Corollary~\ref{corr:orthogonal_reps}. 
\end{proof}

\vspace{-10pt}

The key conceptual insight in the proof of Theorem~\ref{thm:dr_theorem} is that each of the multi-task classification probability estimates $\hat Y_i(t)$ represents an estimated projection distance between the MLE $\mu(t)$ and the given classification boundary $(\mathbf c_i, b_i)$. 
Once distances to classification boundaries are recovered, $\mu(t)$ can be inferred if the $N_{\text{task}}$ classification boundaries span the $D$-dimensional space of $\mathbf x^*$. 

\vspace{-3pt}

\paragraph{Robustness of results} While \cref{thm:dr_theorem} applies to optimal multi-task classifiers, Corollary~\ref{corr:suboptimal_recovery} shows that a sub-optimal multi-classifier with zero-mean independent errors will represent $\tilde \mu(t)$ in state $\mathbf Z(t)$ (Equation~\ref{eqn:main_decode}) with residual errors w.r.t. optimal $\mu(t)$ expected to decrease at a rate of approximately $\mathcal O(1/\sqrt{N_{task}})$.
See Appendix~\ref{sec:exotic_noise}, \ref{corr:sigmoid_approx} for extensions of the theory to anisotropic and non-Gaussian noise distributions (Elliptical, t-distribution, Laplace distributions). 
The linear approximation for decoding $\mu(t)$ from $\mathbf Z(t)$ in Equation~\ref{eqn:main_sigmoid_decode} is enabled by the remarkable similarity between sigmoid functions and the Gaussian CDF $\Phi$ (Corollary~\ref{corr:disentangled_tanh}). 
The sigmoid-like structure of $\Phi$ suggests many similar activations $g$ (e.g., softmax) would exhibit approximate linear decodability.

\vspace{-3pt}

\paragraph{More general decision boundaries} Decision boundaries $y_i$ on latents $\mathbf x^*$ may appear non-linear in the image of observation map $f$, but \cref{thm:dr_theorem} applies to linear boundaries $y_i$ on latent space (Eq. ~\ref{eqn:y_i_def}). 
Our results extend naturally to smooth manifold decision boundaries through local linearization when the manifold $y_i$'s reach $\tau_i$\footnote{``Reach'': maximum distance at which each point on a manifold has a unique closest point on the manifold} is much larger than the noise scale $\sigma$. Intriguingly, classification boundary distances are only guaranteed to be recoverable when there is non-zero noise $\sigma > 0$ such that $\hat Y_i(t)$ does not saturate to 1 or 0, and thus still carries useful decision boundary distance information (see Lemma~\ref{lemma:prob_to_dist}) \footnote{In fact, Equations~\ref{eqn:main_decode} and \ref{eqn:main_sigmoid_decode} do not hold for $\sigma \to 0$, as they were derived by means of Bayesian estimation which assumes the presence of noise. }. 
An intriguing open question is what conditions on manifolds $\{y_i\}_{i=1}^{N_{\text{task}}}$ are necessary and sufficient to preserve the decodability of $\mu(t)$. We leave a complete characterization of representation learning with multiple manifold decision boundaries for future work.



\section{Methods}
\label{sec:methods}

\begin{wrapfigure}{r}{0.5\textwidth}
  \vspace{-35pt}
  \centering
  \includegraphics[width=\linewidth]{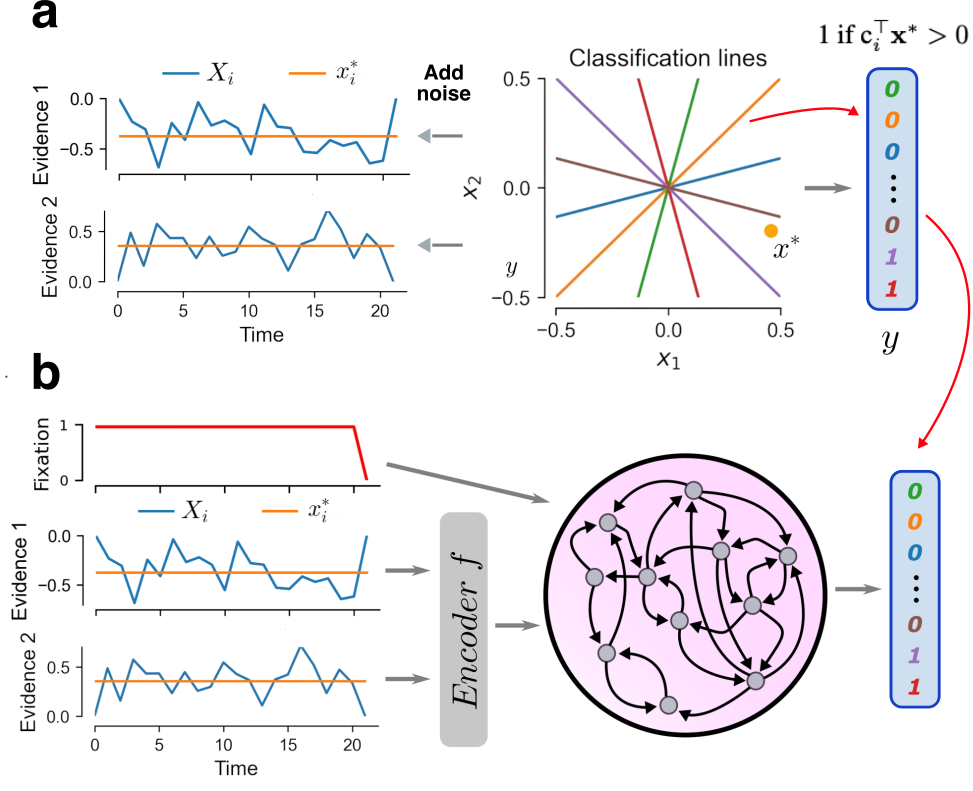}
  \caption{\textbf{Data generation and architecture.} \textbf{(a)} For each trial, we sample a ground truth vector $\mathbf x^*$, and add IID noise to arrive at $\mathbf X(t)$. The task is to report whether $\mathbf x^*$ lies above ($1$) or below ($0$) each of the classification lines (color matches corresponding boolean variable in $y$), given the noisy and non-linearly transformed samples $f(\mathbf X(1)), \dots, f(\mathbf X(t))$. \textbf{(b)} Models (RNN depicted) are trained to report the outcome of all the binary classifications in \textbf{a} at the end of the trial (indicated by the fixation input turning 0).}
  \label{fig:data_arch}
  \vspace{-25pt}
\end{wrapfigure}

We trained autoregressive models (RNNs, LSTMs, GPT-2 transformers) with latent state $\mathbf z(t)$, to output multi-task classifications $\mathbf y(\mathbf x^*) \in \{0, 1\}^{N_\textit{task}}$ given noisy and non-linearly mapped inputs $f(\mathbf X(1)), \dots, f(\mathbf X(t))$ (\cref{fig:data_arch}). We subsequently trained linear probes $\mathbf A$ on $\mathbf z(t)$ to estimate $\mathbf x^*$, denoted $\hat \mu(t) = \mathbf A \, \mathbf z(t)$. We here focus on leaky RNNs, representing a brain area making decisions; for more details on GPT-2 experiments see \cref{app:gpt_methods}. The networks contain $N_\textit{neu}$ neurons, and their activations $\mathbf z(t)$ obey:

\begin{equation}
\tau \dot{\mathbf{z}} = - \mathbf z + \left [ \, \mathbf W_\textit{rec} \, \mathbf z + \mathbf W_\textit{in} \, \mathbf x_\textit{in} + \mathbf b \, \right ]_+
\label{eqn:RNN_dynamics}
\end{equation}

where $\mathbf W_\textit{rec}$ is the recurrent weight matrix, $\mathbf W_\textit{in}$ is the matrix carrying the input vector $\mathbf x_\textit{in}$, $\mathbf b$ is a unit-specific bias vector, $\tau$ is the neuronal time constant, $\left[ . \right]_+$ is the ReLU applied element-wise and time dependencies have been dropped for brevity. We discretize Equation~\ref{eqn:RNN_dynamics} using the forward Euler method for $T = 20$ timesteps of duration $\Delta t = \tau = 100 \, \textrm{ms}$, which we find to be stable. The RNN's output $\hat{\mathbf y}(t) \in \mathbb R^{N_\textit{task}}$ is given by $\hat{\mathbf y}(t) = g(\mathbf W_\textit{out} \, \mathbf z(t))$, where $\mathbf W_\textit{out}$ is a readout matrix and $g=\text{sigmoid}$ the output activation function applied elementwise. The encoder $f$ is a 3-layer MLP with hidden dimensions $100,100,40$ and ReLU non-linearities, and it is randomly initialized and kept fixed during training as it represents a static mapping from latents to observations (observation map). An additional fixation input is directly passed to the hidden layer. It is 1 during the trial and turns 0 at the end of the trial, indicating that the network should report its decisions (\cref{fig:data_arch}b). The fixation input is concatenated with $f(\mathbf X(t))$ to form $\mathbf x_\textit{in}$, and it precludes the RNN from learning a specific timing in its response. We refer to this kind of tasks as fixed reaction-time (RT).
The network is trained with a cross-entropy loss and Adam default settings, except learning rate $\eta_0=10^{-3}$, to produce the target outputs $\mathbf y(\mathbf x^*)$. By minimizing loss across trials, the network is incentivized to estimate $\hat{\mathbf Y}(t) = \Pr\{y_i(\mathbf x^*) = 1\}$. \Cref{tab:params} summarizes all hyperparameters and their values, which are shared across all architectures.

\section{Experiments}
\label{sec:experiments}

\subsection{Multi-task learning leads to disentangled representations}

We train RNNs to do simultaneous classifications for $N_{\text{task}}$ linear partitions of the latent space for $D=2$ (\cref{fig:data_arch}a, 6 partitions shown). To quantify the disentanglement of the representations after learning, we evaluate regression generalization by training a linear decoder to predict the ground truth $\mathbf x^*$ while network weights are frozen. We perform out-of-distribution 4-fold crossvalidation, i.e. train the decoder on 3 out of 4 quadrants and test in the remaining quadrant (\cref{app:r_sq} for details). We also evaluate in-distribution (ID) performance by training the decoder in all quadrants. An example of train and test losses is shown in \cref{fig:details}f. We find that the network's OOD and ID generalization performance are excellent (median $r^2=0.96, 0.97$ respectively across 5 example networks); therefore the network has learned an abstract representation that zero-shot generalizes OOD. In addition, ID performance increases with the number of tasks $N_\textit{task}$, and the OOD generalization gap decreases (\cref{fig:fixed_rt}a). Performance is identical when choosing a more nonlinear, power-law nonlinearity for the encoder (\cref{app:nonlin_enc}). Therefore we conclude that multi-task learning leads to abstract representations in the RNN's hidden layer, when tasks span the latent space.

\begin{figure}[t!]
  \centering
  \includegraphics[width=\columnwidth]{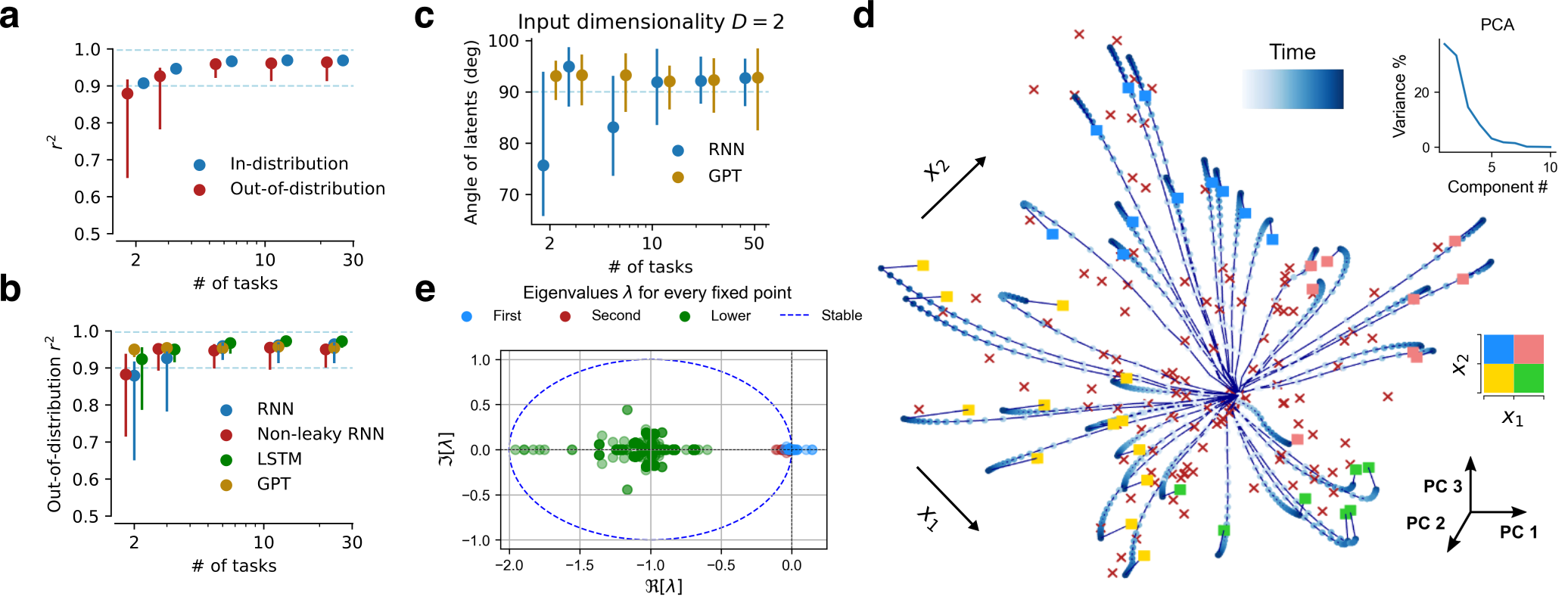}
  \caption{\textbf{Learning disentangled representations.} \textbf{(a)} ID and OOD generalization performance for networks trained in different number of tasks $N_\textit{task}$. We report the 25, 50 and 75 percentile of $r^2$ for each network size (see \cref{app:r_sq}). ID and OOD performance increase with $N_\textit{task}$, and the generalization gap decreases, indicating that the networks have learned abstract representations. \textbf{(b)} The results hold for other autoregressive architectures, including LSTMs and GPT-2 transformers. \textbf{(c)} Angles between latent factor decoders (see \cref{app:angles} for how they were estimated). The angles approach 90 degrees as $N_{\text{task}} \gg D$ for RNNs, but already fror $N_{\text{task}} \geq D$ for transformers. Remaining errors around 90 degrees are attributed to variability in the linear decoder fits. \textbf{(d)} Top 3 PCs of RNN activity ($N_\textit{task}=24$, $D=2$), capturing 85\% of variance (see inset). Each line is a trial, while color saturation indicates time. All trials start from the center and move outwards, towards the location of $\mathbf x^*$ in state space. We color the last timepoint in each trial (squares) according to the quadrant this trial was drawn from. Red x's correspond to attractors (see \cref{app:fixed_points}). Here we remove input noise so that trajectories can be visualized easier. The network learns a two-dimensional continuous attractor that provides a disentangled representation of the 2D state space. \textbf{(e)} Spectral plot resulting from linearizing RNN dynamics around every fixed point (\cref{app:fixed_points}). First two eigenvalues of the difference system are near $0$, while the rest decay much faster, indicating marginal stability across two dimensions for every fixed point, a signature of a 2D continuous attractor.}
\label{fig:fixed_rt}
\end{figure}

Since $\mathbf x^*$ can be decoded by this representation in unseen (by the decoder) parts of the state space, it follows that the representation can be used to solve \textbf{any} task involving the same latent variables, without requiring further pretraining. In other words, to solve any other task we do not need to deal with the denoising and unmixing of the latent factors $x_1$, $x_2$; we would just need to learn the (potentially non-linear) mapping from $x_1$, $x_2$ to task output. Furthermore, the representation scales linearly with input dimensionality $D$ (see \cref{fig:rel_theory}b). This marks a significant improvement from previously proposed models for representation learning in the brain where one task is executed at a time \citep{Mante2013,Yang2019}, which scale linearly with $N_\textit{task}$, and exponentially with $D$ (see \cref{app:context_dependent} for details).  Crucially, these findings are architecture-agnostic: they hold for non-leaky ("vanilla") RNNs, which outperform leaky ones for small $N_\textit{task}$, LSTMs which perform the best, and GPT-2 transformers (details in \cref{app:gpt_methods}) which have excellent performance already from $N_\textit{task}=2$ (\cref{fig:fixed_rt}b). Note that state-space models have superior asymptotic performance, which is expected due to the nature of the task. We focus on leaky RNNs because of their closer correspondence to biological neurons, which have a membrane voltage that decays over time.

So far we showcased abstractness, but not disentanglement. For disentanglement, it is crucial that the latents lie in orthogonal subspaces. Looking at the angles between the decoders of the latents, we find that they become orthogonal as $N_{\text{task}} \gg D$ for RNNs (\cref{fig:fixed_rt}c), as predicted by our theory. Intriguingly, this already occurs from $N_{\text{task}} \geq D$ in transformers, showcasing their superior ability to separate latent factors. Furthermore, orthogonality strongly correlates with OOD generalization performance, which emphasizes the close link between disentanglement and abstractness: the more orthogonal the representations are, the cleaner the readout of the latent factors by linear decoders.

We further demonstrate the robustness of our setting by showing that abstract representations emerge for different noise distributions and correlated inputs (\cref{app:robustness}), non-linear boundaries (\cref{app:interleaved}), and for cognitive neuroscience integrate-to-bound tasks where the agent can make their decision whenever confident enough, not at a fixed time \citep{Krajbich2010} (\cref{app:free_rt}).

\subsection{Representational structure in RNNs and Transformers}

In this section, we open the black box and investigate the representations learned by the networks, starting with RNNs. \Cref{fig:fixed_rt}d shows the top 3 PCs (capturing $\sim85\,\%$ of the variance) of network activity after training (final accuracy $\sim 95\%$) for multiple trials, along with the fixed points of network dynamics. To find the fixed points, we follow a standard procedure outlined in \citet{Sussillo2013} (see \cref{app:fixed_points} for details). Looking at \cref{fig:fixed_rt}d the fixed points span the entire two-dimensional manifold that the trials evolve in, which corresponds to a continuous attractor with stable states across a 2D "sheet". Linearizing the dynamics around each fixed point and computing the eigenvalues of the linearized system (\cref{app:fixed_points} for details), reveals marginal stability across two eigenvectors, i.e. near-$0$ eigenvalues which correspond to slow, integration dimensions in network dynamics, therefore confirming the continuousness of the attractor (\cref{fig:fixed_rt}e). This implies that the network can store a short-term memory \citep{Wang2001} of the current amount of accumulated evidence in a product space of the latent variables, and update it as further evidence arrives.

Furthermore, compared to the representations after the encoder which are non-linearly mixed, high-dimensional and overlapping (\cref{fig:details}a), the representation in \cref{fig:fixed_rt}d looks disentangled as we would expect from the theory and metrics above. Individual trials with noise show how the representation maintains a sense of metric distances in the RNN representation space (\cref{fig:details}b). \Cref{fig:details}c demonstrates how this representation comes about during learning, and \cref{fig:details}d that the short-term memory persists when a delay period is included before the decision. Therefore, multi-task learning has led to disentangled, persistent representations of the latent variables. Importantly, and in line with our theory, this only happens when noise is present in the input, which forces the network to learn a notion of distance from classification boundaries (Lemma~\ref{lemma:prob_to_dist}). Indeed, when the network is trained without input noise, it does not learn a 2D continuous attractor (\cref{fig:details}e). 

\begin{figure}[t!]
\vskip 0.2in
\begin{center}
\centerline{\includegraphics[width=\columnwidth]{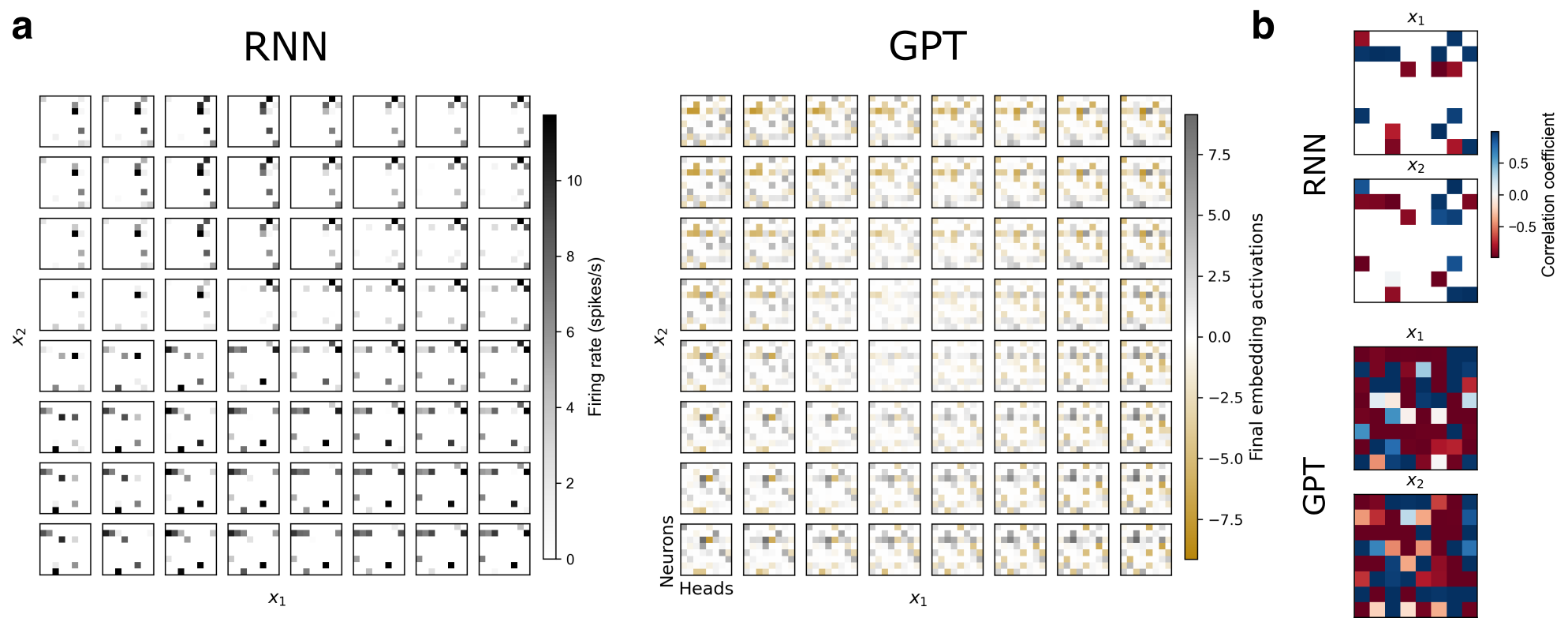}}
\caption{\textbf{RNN and GPT representations and relation to latent variables. (a)} Hidden layer activations of RNN in \cref{fig:data_arch}b (left) and GPT-2 transformer (right), while systematically varying the latent factors $x_1$ and $x_2$ from -0.5 to 0.5. Activations are plotted in 8*8 grids, one for each value of $x_1$ and $x_2$. Each grid contains firing rates for a total of 64 neurons for the RNN, and activations for 8 units for each of the 8 heads from the final embedding of the sequence for GPT-2.  \textbf{(b)} Correlation coefficient of activations for both models with $x_1$ and $x_2$, respectively.}
\label{fig:activations}
\end{center}
\vskip -0.2in
\end{figure}

Finally, we examined RNN and GPT unit activations, and their relation to the latent variables. In \cref{fig:activations}a we plot activations for all 64 units for both networks, while regularly sampling $x_1$ and $x_2$. RNNs representations are sparse, with only $\sim10\,\%$ of neurons active at any time, which is in line with sparse coding in the brain (see \cref{app:sparse} for quantification of sparsity as a function of $N_\textit{task}$, $D$ and RNN architecture). In addition, the average firing rate is $\sim 1$ spike/s, which is surprisingly close to cortical values. Transformers on the other hand, do not have these features, shared by RNNs and their biological counterparts. Furthermore, we find that both networks display mixed selectivity, i.e. neurons are tuned to both variables, which is a known property of cortical neurons \citep{Rigotti2013} (\cref{fig:activations}b). This suggests that metrics of disentanglement that assume that individual neurons encode distinct factors of variation \citep{Higgins2017,Kim2018,Chen2018,Eastwood2023,Hsu2023} might be insufficient in detecting disentanglement in networks that generalize well. While recent work incorporates such axis-alignment in the definition of disentaglement, our work along with others \citep{Johnston2023} showcases the advantages of approaching disentanglement from a mixed representations perspective. Importantly, these properties were not imposed during training, nor was there any parameter fine tuning involved; they emerged from task and optimization objectives.

\subsection{Experiments confirm and extend theoretical predictions}
\label{sec:rel_theory}

Here we expore the relation between the theory in \cref{sec:theory} and \cref{sec:theory_appendix} and experiments in \cref{sec:experiments} in more depth. First, we wondered why performance saturates in our networks to a high yet non-1 $r^2$. The central limit theorem predicts that the estimate of the ground truth $\mathbf x^*$ in any optimal multi-task classifier becomes more accurate with $\sqrt t$, providing a theoretical maximum $r^2$ given trial duration $T$ (\cref{app:theor_r_sq}). Since the RNNs trained on the free reaction-time (free RT) task in \cref{app:free_rt} are required to output their decision confidence at any time in the trial, we can compute OOD $r^2$ of free RT network predictions at any timepoint $t$, and compare that to the theoretical prediction. \cref{fig:rel_theory}a shows that indeed the highest RNN $r^2$ falls in the vicinity of or just short of the theoretical maximum. This indicates that RNNs trained with BPTT on these tasks behave like near-optimal multi-task classifiers that create increasingly accurate predictions with time, tightening the relation between our theoretical and experimental results.

\begin{figure}[t!]
  \centering
  \includegraphics[width=\linewidth]{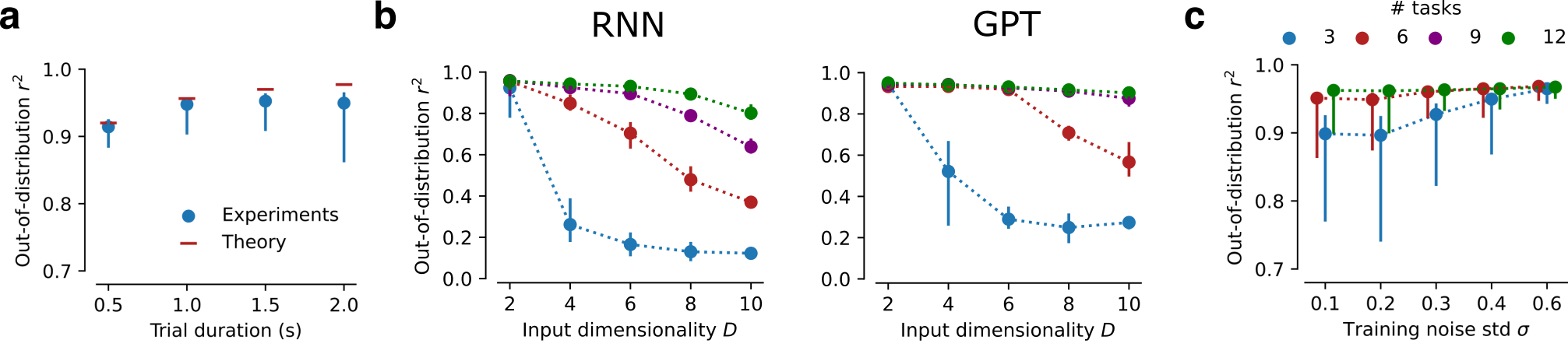}
  \caption{\textbf{Experiments confirm theoretical predictions.} \textbf{(a)} OOD $r^2$ for free RT RNN required to report its estimate of $\mathbf x^*$ at different times (see \cref{app:free_rt} for details, $N_\textit{task}=24$, $D=2$). Maximum network $r^2$ matches optimal multi-task classifier theory predictions (Equation~\ref{eqn:theor_r_sq} in \cref{app:theor_r_sq}). \textbf{(b)} OOD $r^2$ as a function of input dimensionality $D$ and number of tasks $N_\textit{task}$. Good values of $r^2$ are obtained when $N_\textit{task} \geq D$, especially for GPT models, confirming our theoretical results. \textbf{(c)} Increasing amounts of noise in pretraining results in better OOD generalization ($D=2$).}
  \label{fig:rel_theory}
\end{figure}

An important prediction of our theory is that to learn abstract representations, $N_\textit{task}$ should exceed $D$. To test this, we increase $D$ (adding more inputs to \cref{fig:data_arch}b), while varying $N_\textit{task}$. Sampling classification hyperplanes homogeneously (similar to \cref{fig:data_arch}a, center) in high-dimensional spaces is non-trivial; therefore we resort to randomly sampling them. \Cref{fig:rel_theory}b shows OOD generalization performance for various combinations of $D$ and $N_\textit{task}$. We observe that performance is bad when the $N_\textit{task}<D$, but it increases when $N_\textit{task} \geq D$. For RNNs, this increase is abrupt for smaller $D$ and more gradual for higher, which is in line with remarks by \citet{Johnston2023} that it is easier to learn abstract representations when $D$ is high. Transformers on the other hand display higher generalization performance than RNNs, and always perform almost perfectly when $N_\textit{task} \geq D$, demonstrating their superior performance in learning abstract representations. Looking at the angles between latents for higher $D$ (\cref{fig:angles}), we find that transformers have excellent disentanglement as long as $N_\textit{task} \geq D$, which might explain their superior generalization performance to RNNs for lower $N_\textit{task}$. These results, together with \cref{fig:fixed_rt}c demonstrate the superior ability of transformers in disentangling latent factors. Overall, our findings confirm our theory that abstract representations emerge when $N_\textit{task} \geq D$, and even go beyond to suggest that disentangled representations emerge earlier than the theoretical condition $N_\textit{task} \gg D$, as long as the architecture is appropriate. These results are remarkable, especially for high $D$, because they go against our intuition that $N_\textit{task}$ should scale exponentially with $D$ to fill up the space adequately; instead it need only scale linearly.

\paragraph{Importance of noise for generalization} Our theory and experiments provide insight on the importance of noise for developing efficient, abstract representations (\cref{fig:details}e). The closer to a classification boundary the ground truth $\mathbf x^*$ is, the more likely noise will cross over the boundary. Since, as our theory shows, any optimal multi-task classifier has to estimate $\Pr\{y_i(\mathbf x^*) = 1\}$, and said probability directly relates to the actual distance from the boundary, it follows that noise allows the model to learn distances from boundaries (Lemma~\ref{lemma:prob_to_dist})), leading to efficient localization. We reasoned that additional noise might be even more beneficial, as it would allow more accurate estimation of $\Pr\{y_i(\mathbf x^*) = 1\}$, especially when $\mathbf x^*$ is far from the boundary. To test this, we increase noise strength $\sigma$ when pretraining RNNs, while testing with the same $\sigma=0.2$. Indeed, increasing amounts of noise consistently result in better OOD generalization (\cref{fig:rel_theory}c). This benefit comes for smaller numbers of tasks, allowing us to consider less supervised tasks (e.g. 3), train on them with more noise, and achieve the same performance as more tasks (e.g. 12). So even though networks with more noise perform worse in pretraining (low 90\%s classification accuracy), they learn more abstract representations. These findings are highly non-trivial, and have informed our thinking about generalization and inherent variability of the underlying latent factors.

\section{Discussion}

In this work, we proved that disentangled, generalizable representations {\bf must} emerge in agents optimally solving multi-task evidence accumulation tasks canonical in the neuroscience literature. We also conducted experiments in a suite of autoregressive models (RNNs, LSTMs, transformers) which confirmed all of the main theoretical predictions. A key takeaway is that transformers more readily disentangle representations, which may explain their unique world understanding abilities. Here we discuss the broader impact of this work for representation learning and neuroscience alike. Limitations of this study and how it can be extended in the future are discussed in \cref{app:limitations}.

\subsection{Implications for representation learning}

\paragraph{Topology-preserving representation learning} Our work has profound implications for learning representations that inherit the topological structure of the world. We prove this naturally happens as long as there are enough tasks to uniquely identify the location of $\mathbf x^*$. Crucially, the constraints from different tasks should be placed simultaneously on the representation, which explains why representations from context-dependent computation \citep{Mante2013} are typically not disentangled.

\paragraph{Representational alignment across individuals} Our results provide a new perspective on the Platonic representation hypothesis \citep{huh2024platonic}, which suggests that the convergence in deep neural network representations is driven by a shared statistical model of reality, like Plato's concept of an ideal ``Platonic'' reality.
Theorem~\ref{thm:dr_theorem} suggests that the key factor driving convergence is the diversity and comprehensiveness of the tasks being learned. 
As long as individuals are faced with similar day-to-day tasks that collectively span the space of the underlying data representation, convergence to a shared, reality-aligned representation can occur. This could explain why for example modern LLMs come to encode high-level, human-interpretable concepts \citep{Templeton2024}.

\paragraph{Manifold hypothesis} While our problem is framed in terms of arbitrary injective observation map $f$, the formulation encompasses many scenarios relevant to the manifold hypothesis \citep{Fefferman2013}.
The function $f$ can represent a smooth manifold embedded in a high dimensional space, directly modelling the manifold hypothesis of deep learning. 
In neuroscience, $f$ could be a non-linear encoding of stimuli in a neural population response, connecting our work to neural manifold research \citep{Langdon2023}. 
By developing and testing theoretical guarantees for the emergence of disentangled representations in this multi-task problem formulation, we provide insight on how neural networks can inherently discover and linearize low-dimensional manifolds within high-dimensional, non-linear observations, enhancing our understanding of how complex data structures are captured and represented in deep learning models and biological systems alike.

\paragraph{Interplay between number of tasks and fine-grainness of representations} Finally, the theorem and experimental results presented here are not a one-way-street from dimensionality $D$ of the latents to how many tasks $N_{\text{task}}$ are required to uncover them. Instead, there is a fundamental interplay between richness of tasks performed and detail of the representation learned. In a high-dimensional world, the richness of the tasks at hand directly affects the dimensionality $D$ of the latents that can be extracted, allowing for "ground truths" $\mathbf x^*$ at different levels of granularity to be explored. The richer the label information available, the more fine-grained the resulting world model will be.

\subsection{Implications for neuroscience}

The brain encodes variables of interest in a disentangled format, in processes as disparate as memory \citep{Boyle2022}, emotion \citep{Saez2015}, and decision making \citep{Bongioanni2021}. Furthermore, performance in tasks has been shown to degrade once abstract representations collapse \citep{Saez2015}, supporting their role in guiding generalizable behavior. Our findings put forth \textbf{parallel processing} as a unifying mechanism for generalization in brains. The cortex, with its massively parallel architecture \citep{Markram2015,Hawkins2019}, is a prime candidate area for the construction of disentangled, generalizable world models. Another candidate area is the thalamus; it is posited that thalamocortical loops operate in parallel, and combined with internal state-dependent mechanisms lead to state-dependent action selection (e.g. prioritizing water when thirsty), while evidence integration occurs in corticostriatal circuits \citep{Rubin2020}. The representations discovered here (continuous attractors, CANs) have been widely found in the brain when solving similar tasks, highlighting their role as a general computational substrate for cognitive functions in the brain (for relations of our work to the neuroscience literature, see \cref{app:rel_neuro}). Notably, the receipt of rich supervisory signals from the environment is not a requirement for our setting, as it can leverage the output of previously learned tasks (see \cref{app:bioplausible} on the biological plausibility of multi-task learning). 

The algorithmic efficiency of multi-task learning compared to alternatives (``context-dependent computation''( \citet{Mante2013}, \cref{app:context_dependent})), makes us think that it is no coincidence that the cortex can support parallel processing; all the pieces are there, and we feel that the brain has to leverage this feature to construct faithful models of the world, as it does.

\subsection*{Reproducibility Statement}

All code used to generate the results can be found in \href{https://github.com/panvaf/DisentangleRes}{https://github.com/panvaf/DisentangleRes}. The experiments are seeded, ensuring exact reproducibility of results. Five networks have been trained for every configuration shown, to provide sufficient statistics to support our conclusions. To ensure full clarity of the theoretical results in the main text, a full proof is provided in \cref{sec:theory_appendix}.

\subsubsection*{Author Contributions}
P.V. conceived the project, wrote the code, performed the experiments and wrote the manuscript. A.B. developed the theory, wrote the theory sections and parts of the manuscript. A.R. supervised the research and provided funding.

\subsubsection*{Acknowledgments}
P.V. would like to thank the Onassis Foundation and A.R. the NOMIS Foundation for funding. 
A.B. thanks the NIH PTQN program for funding. 
No competing interests to declare. We would like to thank Yisong Yue for early discussions, Aiden Rosebush for discussions of proof methods and Stefano Fusi for early feedback.

\bibliography{iclr2025_conference}
\bibliographystyle{iclr2025_conference}

\newpage
\appendix

\beginsupplement

\section{Supplementary materials}

\subsection{Definitions}
\label{app:def}

In the main text we used the terms "abstractness" and "disentanglement". Since there exists some ambivalence about their meaning in the literature, we would like to strictly define them here. We will be using definitions adapted from \citet{Ostojic2024}:

\begin{itemize}
    \item An abstract representation of latent factors $x_1, \dots, x_n$ represents each $x_i$ linearly and approximately mutually orthogonally. 
    Thus, abstractness ensures a simple linear map can decode each $x_i$ regardless of variation in $x_{j\neq i}$. 
    \item Disentangled representations of $x_1, \dots, x_n$ encode each $x_i$ orthogonally, without the necessity of linearity. 

\end{itemize}

Note that under this definition, axis-alignment is not a requirement for disentanglement (also see \citet{Higgins2018}). Our work suggests that the computer science and neuroscience communities should adopt this broader definition of disentanglement, because otherwise we might be missing cases where the factors are not axis-aligned, but they are still orthogonal and can still be isolated by a linear decoder. Our argument is that there is nothing special about individual factors being encoded by individual neurons.  Rather, we think that allowing for mixed representations within the definition of disentanglement leads to a more holistic view of disentanglement. A contribution of this work, along with others \citep{Johnston2023}, is to bring this argument to the forefront.

\subsection{Quantification of generalization performance}

\label{app:r_sq}

To assess OOD generalization performance, we keep the trained networks fixed and train a linear decoder $\mathbf A$ to predict the ground truth $\mathbf x^*$ from network activity at the end of the trial. We train the decoder in 3 out of 4 quadrants and test OOD in the remaining quadrant, repeating this process 5 times for each quadrant, which results in a total of 20 OOD $r^2$ values for each network. To account for randomness in initialization and sythetic generation of datasets, we train 5 networks for each combination of number of tasks $N_\text{task}$ and dimensionality $D$, resulting in a total of 100 OOD $r^2$ values for each pair of $(N_\text{task},D)$. We report the 25, 50 (median) and 75 percentiles of those values in \cref{fig:fixed_rt}a,b and throughout the text. For ID generalization performance, we train on all quadrants and test in one quadrant at a time. For input dimensionality $D>2$, we keep the same logic by choosing every 4-th quadrant to be sampled only in testing, repeating the process for every $\textit{mod} \, 4$ group of quadrants.

\subsection{Estimation of angles between latent factors}

\label{app:angles}

To estimate the angles between latent factors in the representation, we obtain the normal vectors of the decoders $\mathbf A$ for each of the latents, and compute pairwise angles for all of them. To account for variability in the decoder fits, we repeat the decoder fit 5 times for each out-of-distribution region (see \cref{app:r_sq} for details). We also repeat this process across 5 trained networks for each combination of $(N_\text{task},D)$, and report the 25, 50 (median) and 75 percentiles of all values for each $(N_\text{task},D)$ combination in \cref{fig:fixed_rt}c and \cref{fig:angles}.

\subsection{{\texorpdfstring{Derivation of theoretical $r^2$ for optimal multi-task classifiers}{Derivation of theoretical r2 for optimal multi-task classifiers}}}

\label{app:theor_r_sq}

Here we derive the theoretical $r^2$ for the estimation of ground truth $\mathbf x^*$ from noisy data for a discrete time optimal multi-task classifier at time $t$. $r^2$ is defined as:

\begin{equation}
	r^2 = 1 - \frac{\textit{MSE}(\mathbf x^*,\mu)}{\textit{Var}(\mathbf x^*)}
\end{equation}

where $\mu$, the mean of $\mathbf X(1), \dots, \mathbf X(t)$, is the prediction of the multi-task classifier (see \cref{sec:theory_appendix}). 
The optimal estimator of $\mathbf x^*$ given observations $\mathbf X(1), \dots, \mathbf X(t)$ is denoted $\hat{\mathbf X}(t) \sim \mathcal N(\mu(t), t^{-1}\sigma^2 I_D)$ where $\sigma$ is the noise strength. 
Note that $\mu(t) \to \mathbf x^*$ as $t\to \infty$ by the central limit theorem, and $\mu(t)$ is the optimal estimator of $\mathbf x^*$ given Gaussian-noised observations. 
Since the dimensions in both noise and ground truth are independent, we can focus on one dimension at a time i.e.:

\begin{equation}
    \label{eqn:theor_r_sq}
	r^2 = 1 - \frac{\textit{MSE}(x^*_i, \mu_i(t))}{\textit{Var}(x^*_i)} = 1 - \frac{\mathop{\mathbb{E}}[(x^*_i-x^*_i+\mathcal N(0, t^{-1} \sigma^2 ))^2]}{Var(x^*_i)} = 1 - \frac{\sigma^2}{t \, Var(x^*_i)}.
\end{equation}

Remembering that $x^*_i \sim \textit{Uniform}(-0.5,0.5)$ it follows that $Var(x^*_i)=\frac{2}{3}0.5^3$, and replacing $\sigma=0.2$ from \cref{tab:params} we arrive to $r^2 = 1 - \frac{0.48}{T}$ for given trial duration $T$ which we compare to RNN OOD generalization performance in \cref{fig:rel_theory}a.

\subsection{More nonlinear encoding}
\label{app:nonlin_enc}

\begin{wrapfigure}{r}{0.3\textwidth}
  \vspace{-25pt}
  \centering
  \includegraphics[width=\linewidth]{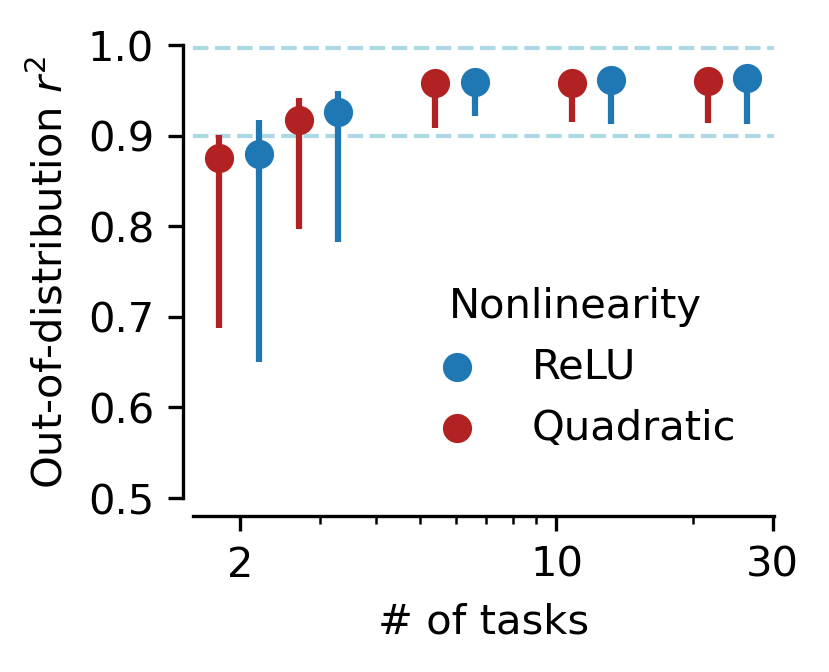}
  \vspace{-20pt}
  \caption{OOD generalization is robust to choice of encoder nonlinearity.}
  \label{fig:enc_nonlin}
  \vspace{-25pt}
\end{wrapfigure}

In the main text we used ReLU nonlinearities for the encoder $f$. Here we extend our findings to more nonlinear observation maps which are likely to be encountered in the real world, e.g. ones with power-law nonlinearities. Recent work showed that the choice of activation function can influence the geometry of representations \citep{Alleman2024}.  Interestingly, we find that replacing \mbox{ReLUs} in the encoder with a quadratic nonlinearity results in virtually identical OOD generalization performance compared to \cref{fig:fixed_rt}a (\cref{fig:enc_nonlin}). Therefore we conclude that our setting is robust to the choice of encoder nonlinearity, even when the nonlinearity is not injective, going beyond our theoretical proofs (\cref{sec:opt_reps_gen}).

\subsection{GPT-2 experiments}
\label{app:gpt_methods}

We train GPT-2 causal transformers with $d_\textit{model}=N_{\textit{neu}}=64$, $N_\textit{layer}=1$, $N_\textit{head}=8$ in the multi-task classification task of the main text. The networks receive continuous, noisy and non-linearly mapped inputs $f(\mathbf X(1)), \dots, f(\mathbf X(t))$, and should output multi-task classifications $\mathbf y(\mathbf x^*) \in \{0, 1\}^{N_\textit{task}}$. The output of the network is $\hat{\mathbf Y}(t) := g(\mathbf Z(t))$, where $\mathbf Z(t)$ is the last embedding of the sequence in the last layer and $g=\text{sigmoid}$. Since the input is continuous, we omit the tokenization and embedding steps, and project the input directly to the hidden state with a linear map. Furthermore, since the inputs are IID, we do not include positional encodings. The networks are trained with binary cross-entropy loss for $N_\textit{batch}=2*10^4$ batches, while the rest of the parameters are identical to the fixed RT networks of the main text (\cref{tab:params}).

\subsection{Finding fixed points and linearization of dynamics}
\label{app:fixed_points}

To find approximate fixed points of RNN dynamics after training, we follow a standard procedure outlined in \citet{Sussillo2013}. Specifically, we keep network weights fixed, provide no inputs to the network, and instead optimize over hidden activity. Specifically, we penalize any changes in the hidden activity, motivating the network to find stable states of the dynamics in the absence of input, i.e. attractors of the dynamics. This process finds all states of accumulated evidence that can be stored in this network as short-term memory. Network dynamics could then leverage these states to maintain and update the internal representation of the ground truth $\mathbf x^*$ on a single trial level, and drive downstream decisions.

Then for every approximate fixed point $\mathbf z^f$, we linearize RNN dynamics around it and estimate the eigenmodes which describe how the system behaves in a small region $\delta \mathbf z$ around $\mathbf z^f$. Specifically, following \citet{Sussillo2013,Mante2013} we take the difference system $\delta \mathbf z(t + t_0) = \mathbf z(t+ t_0) - \mathbf z(t_0)$ and linearize it, i.e.

\begin{equation}
	\dot{\delta \mathbf{z}} = \mathbf{F}'(\mathbf{z}^f) \, \delta \mathbf{z}
 \label{eqn:jacobian}
\end{equation}

where $\dot{\mathbf{z}} = \mathbf F(\mathbf{z})$ is the function describing the RNN dynamics and $\mathbf M \equiv \mathbf{F}'(\mathbf{z}^f)$ is its Jacobian at $\mathbf z^f$. To estimate $\mathbf{F}'(\mathbf{z}^f)$, we let network dynamics run in the absence of inputs for one time step $\Delta t$ starting from $\mathbf{z}^f$, i.e. $\delta \mathbf z(\Delta t) = \mathbf z(\Delta t) - \mathbf z^f$, and autodifferentiate $\delta \mathbf z(\Delta t)$. We then perform eigendecomposition of $\mathbf M$ and report the eigenvalues around each approximate fixed point. Eigenvalues near $0$ indicate that the difference system $\delta \mathbf z(t) = \mathbf z(t) - \mathbf z^f$ changes slowly over time, i.e. they correspond to "slow" dimensions in network dynamics which can integrate inputs and maintain them over time (continuous attractors) \citep{Amari1977,Mante2013}.

\subsection{State-space efficiency of context-dependent computation}
\label{app:context_dependent}

The workhorse model for computational neuroscience has traditionally been context-dependent computation, where tasks are carried out one at a time and task identity is cued to the RNN by a one-hot vector \cite{Mante2013}. However, this approach can be algorithmically inefficient, because as we show here it scales linearly with the number of tasks $N_\textit{task}$, and exponentially with input dimensionality $D$. This is because context-dependent computation utilizes different parts of the state space for different tasks, and the resulting representations collapse to what is minimally required for each task (also see \citep{Mante2013,Yang2020}). This can be detrimental for brains, which need to pack a lot of computation within a large yet limited neural substrate. In contrast, abstract representations are general, compact \citep{Ma2022}, can be used for \textbf{any} downstream task involving the same variables, scale linearly with $D$, and as we show readily emerge from relatively simple tasks.

To compare context-dependent decision making, where one task is performed at a time \citep{Mante2013}, to multitasking, in terms of state-space usage efficiency, we train RNNs to perform context-dependent decisions on the same tasks encountered in the main text. Compared to the network in the main text (\cref{fig:data_arch}b), the RNN now also receives a one-hot task rule vector indicating the current task, and it outputs the decision for that task only (\cref{fig:context-dependent}b). We have also omitted the non-linear encoder, making the tasks easier. We train the RNN for two tasks, one task at a time, in interleaved batches (\cref{fig:context-dependent}a). In one task, the RNN is required to decide which stream has more evidence, and at the other whether the sum of evidence across streams exceeds a certain decision threshold (here 0).

\begin{figure}[ht]
\vskip 0.1in
\begin{center}
\centerline{\includegraphics[width=0.9\columnwidth]{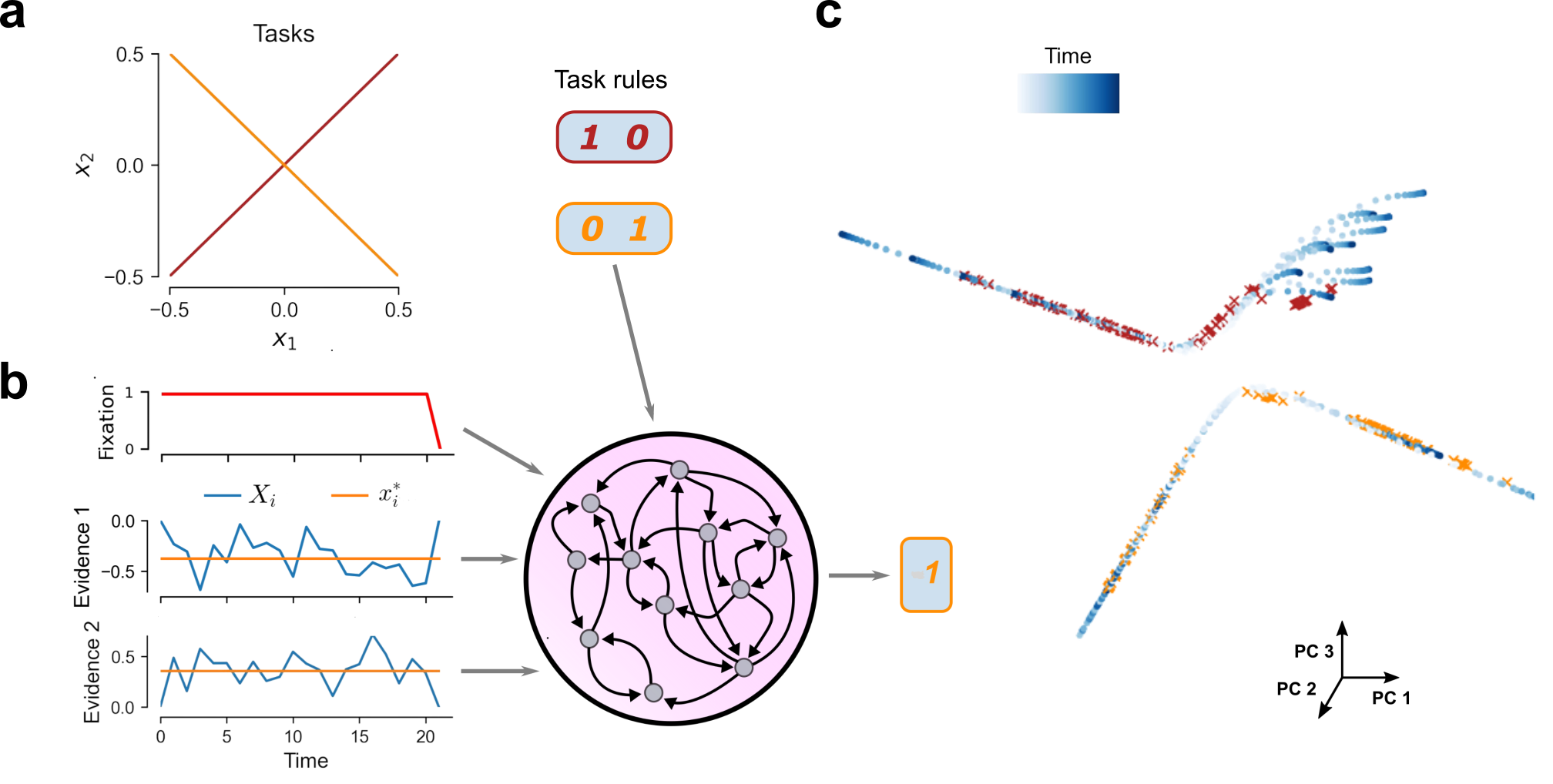}}
\caption{\textbf{Representations in an RNN trained in context-dependent decision making.} \textbf{(a)} We trained RNNs for two classification tasks: two-alternative forced choice (where the decision boundary is the $x_1=x_2$ line) and evidence integration (corresponding to the $x_1=-x_2$ line). Each task corresponds to a different one-hot task rule vector. \textbf{(b)} Network architecture. In addition to the inputs in \cref{fig:data_arch}b, the network also a one-hot vector indicating the current task. \textbf{(c)}, Top 3 PCs of RNN activity example trials (40 in total). The task rule biases the network towards learning separate solutions in different parts of the state space for different tasks, in the form of separate line attractors; red x's for two-alternative forced choice and orange x's for evidence integration.}
\label{fig:context-dependent}
\end{center}
\vskip -0.1in
\end{figure}

We find that in this setting the network is not incentivized to learn abstract representations. Instead a separate line attractor is present in the dynamics for each task (red and orange x's in \cref{fig:context-dependent}c); one of them is presumably tracking the difference of evidence (similar to \citet{Yang2020} but for independent evidence streams) and the other the sum of evidence. That is to say, the task rule biases the network to learn different computations in separate regions of state space, as in \citet{Mante2013}. As a result, the 2D latent space has collapsed and cannot be decoded from network activity; therefore generalization to any task that involves these two variables is not possible. 

It follows that the network can be inefficient in terms of state space usage, because instead of compressing all of its activity around the same region, it spreads it across multiple regions, one for each task, which scales badly (linearly with the number of tasks $N_\textit{task}$ and exponentially with input dimensionality $D$). To demonstrate the latter, imagine a family of tasks with classification boundaries of the form $\oplus \, x_1 \oplus x_2 \oplus .. \oplus x_D = 0$, where $\oplus \in \{+1,-1,0\}$ is an operator indicating contribution with a positive sign, negative sign or absence of contribution for a factor to a specific task, respectively. As just shown, each one of this tasks will require its own line attractor, resulting in a total of $3^D$ line attractors lying in separate regions of the state space, just for this simple family of tasks. As mentioned in the main text, such inefficiency can be detrimental for brains, which need to pack a lot of computation within a large yet limited neural substrate. Compare that to multitasking, which builds representations that can serve any task that involves the same latent variables, scaling linearly with $D$ (as we saw that we only need $N_\textit{task} \geq D$ to learn them). Note that context-dependent computation can still be efficient, if tasks have a compositional structure where the solution for one task is part of the solution for another \citep{Yang2019,Driscoll2022}; in this case, representations developed for the former can act as a scaffold for representations for the latter.

Overall, we believe that multitasking may present a paradigm swift for generalizable representation learning in biological and artificial systems alike. That is not to say that context-dependent representations are not useful; they are great at leveraging the compositional structure of tasks \citep{Yang2019,Driscoll2022}, but tend to overfit to the specifics of the task, while multitask representations serve as world models applicable to various scenarios. Both types of representations are likely to be found in the brain. One possibility is that context-dependent representations may emerge as a first quick solution to a task, while disentangled representations come about with more experience or when more tasks are performed over time to support better generalization.

\subsection{Robustness to other noise distributions and correlated inputs}
\label{app:robustness}

We here show that our setting is robust to Gaussian anisotropic and autocorrelated noise, and other asymmetric distributions of noise (Gumbel) whose CDF no longer matches the sigmoid functions in shape, with almost no drop in performance, and correlated inputs. This demonstrates that abstract representations are also learned outside of the specific assumptions made by our theory.

Starting from anisotropic noise, we observe that doubling the standard deviation of noise across one dimension ($\sigma = 0.4$) does not result in a reduction in OOD generalization performance (median $r^2 = 0.96$ for $N_\textit{task}=24$, $D=2$). This is in line with our theory that can be extended to cover anisotropic noise (Lemma~\ref{lemma:zscore_anisotropic}). Non-IID noise should not be a problem either, since we are training our network for many samples and the effects of correlations will cancel out over long ensembles. Indeed, we find that including autocorrelated AR(1) noise with an AR coefficient of 0.7 results in only minor reduction in performance (median $r^2 = 0.95$ for $N_\textit{task}=24$, $D=2$).

We were also curious to see the impact of correlated inputs. A problem with high correlations is that they render parts of the state space virtually invisible to the network (\cref{fig:corr}a). Surprisingly, OOD generalization performance is very weakly affected by input correlations, even though the state space is sampled uniformly in test (\cref{fig:corr}b). The behavior is highly non-linear: performance is great until $\rho=0.97$, but for perfectly correlated inputs ($\rho=1$), the performance drop is sharp. 

\begin{figure}[h]
\vskip 0.2in
\begin{center}
\centerline{\includegraphics[width=0.7\columnwidth]{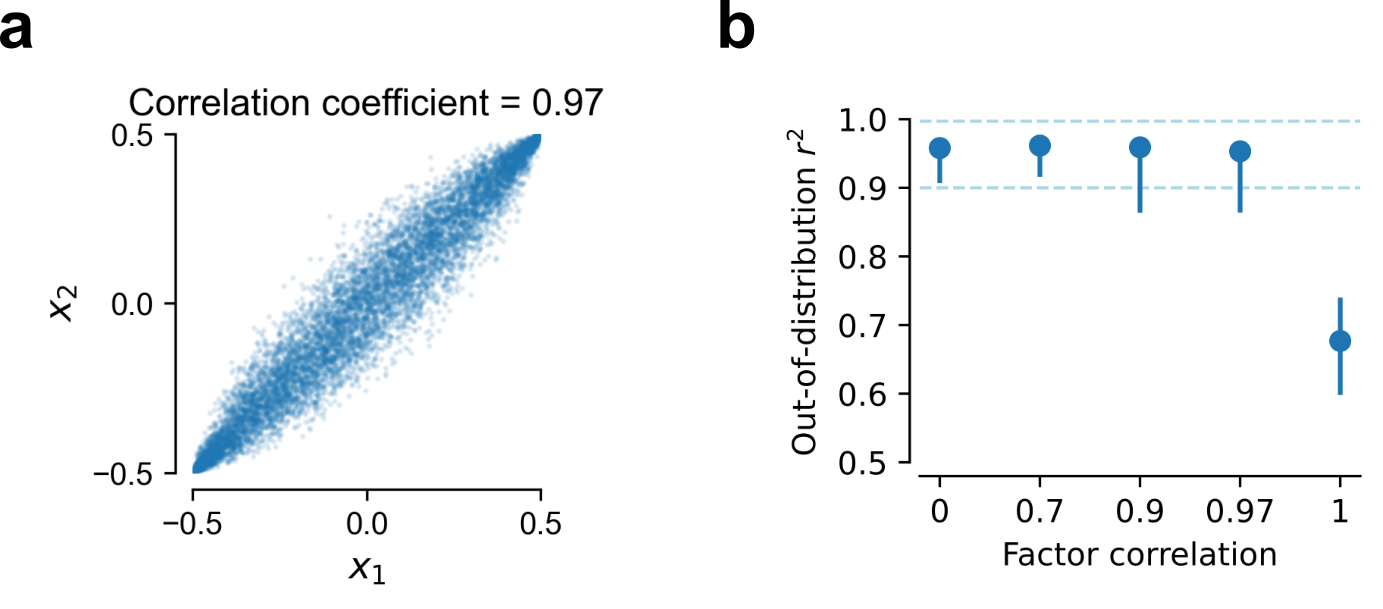}}
\caption{\textbf{Disentanglement and factor correlations. (a)} We introduce strong correlations in the latent factors, rendering parts of the state space virtually invisible to the network during pre-training (trained for a total of 24 classification tasks). \textbf{(b)} Despite that, generalization performance is excellent for correlations very close to 1. Once the factors are perfectly correlated, performance drops significantly. This implies that the network can learn an abstract representation from correlated inputs, as long as there is some signal about the factors independently. This finding goes beyond \citep{Johnston2023} to show that the multi-task learning setting allows OOD generalization when the distribution during training the RNN itself is vastly different that the one during testing.}
\label{fig:corr}
\end{center}
\vskip -0.2in
\end{figure}

Finally, our theory pointed out sigmoid functions as a choice for activation function because of their close resemblance to the Gaussian CDF, resulting in the best OOD $r^2$. Still we find that for an asymmetric noise distribution (Gumbel) whose CDF does not match sigmoid functions well, there is only a slight drop in performance (median $r^2 = 0.95$ from $0.96$). Therefore the conditions for the activation function/CDF should be quite lax; any monotonic bijective function should work with small performance drop. This drop in performance is because the representation would be “stretched out” and “compressed” in a non-linear manner in regions where there is discrepancy between the noise CDF and the activation function. But this nonlinear squishing (determined by the term $\Phi^{-1}(g(\mathbf Z(t)))$) would be geometrically inoffensive — no cutting or gluing together would be required to map from $\mathbf Z(t)$ to a linear representation of $\mu(t)$. As a result, the representations would remain approximately linearly decodable. Monotonicity and bijectivity are quite mild assumptions for the activation function used by neurons in the brain.

\subsection{Nonlinear classification boundaries and interleaved learning}
\label{app:interleaved}

In the main text we trained networks on linear classification boundaries. The tasks are still non-linear, since the encoder renders these boundaries non-linear to the network. However, there are cases where the latent factors themselves might need to be combined non-linearly, to make decisions. For instance, if the two factors represent the amount and probability of reward respectively, an agent needs to multiply the two and decide whether the expected value exceeds a certain (metabolic) cost  $\gamma$ of performing an action to obtain said reward. \Cref{fig:mult}a shows the classification lines for the multiplicative task, where the network should decide whether the ground truth $\mathbf x^*$ lies above or below the curve $x_1 \, x_2 = \gamma$, for multiple values of $\gamma$. This family of tasks is not covered by \cref{thm:optimal_reps}, because they violate the injectivity condition. Hence, we wondered how the representation would look like if the network was trained on both the linear and multiplicative boundaries, as animals do.

\begin{figure}[h]
\vskip 0.2in
\begin{center}
\centerline{\includegraphics[width=0.7\columnwidth]{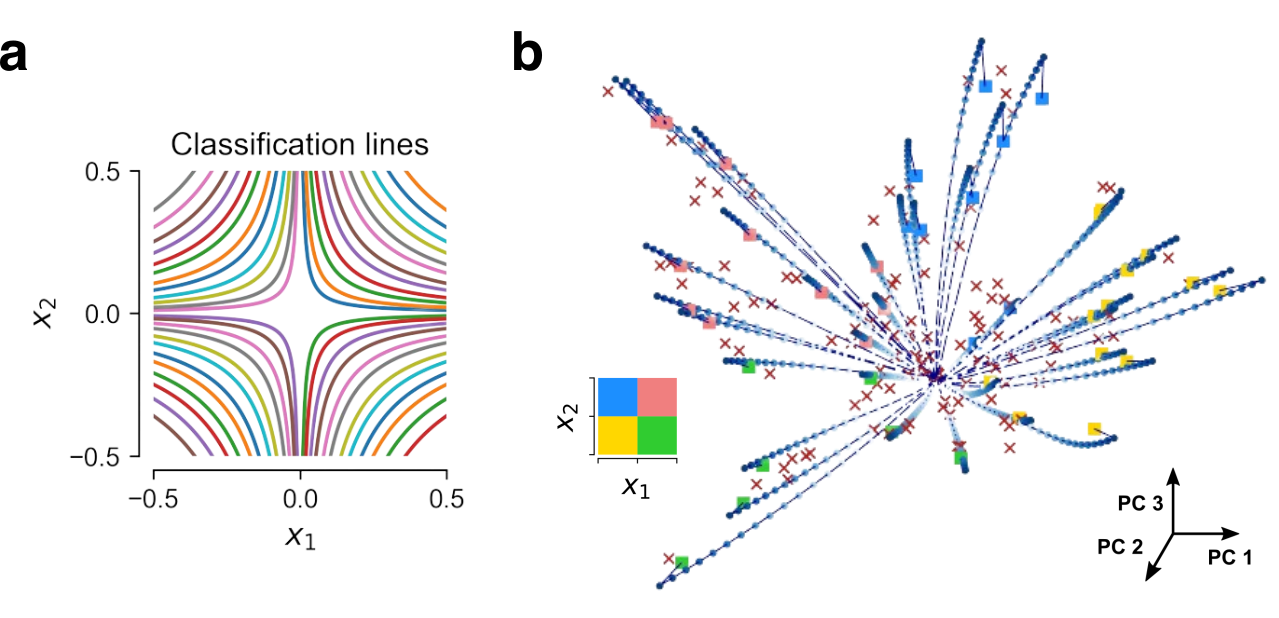}}
\caption{\textbf{Interleaved learning of linear and non-linear boundaries. (a)} Classification lines for the multiplicative task. There is a total of 48 classification lines, 12 per quadrant. \textbf{(b)} The network learns an abstract representation when trained for the linear and multiplicative boundaries in interleaved batches.}
\label{fig:mult}
\end{center}
\vskip -0.2in
\end{figure}

For that we perform interleaved training of both tasks (i.e. train in batches sampled from one of the tasks at a time), a setting where neural networks excel at, compared to humans who excel at blocked training, where tasks are learned sequentially (but see \citet{Flesch2022}). \Cref{fig:mult}b shows that the network still learns an abstract, two-dimensional continuous attractor. OOD generalization for this network is excellent, and almost identical to ID performance (median $r^2=0.94,0.97$ respectively). Overall, we conclude that our framework extends to interleaved learning of a mixture of linear and nonlinear boundaries, which better reflects the challenges encountered by agents in the real world. Note that during interleaved training, linear and non-linear tasks are not performed simultaneously; yet they are in immediate succession which can also place pressure to the network to gradually learn representations that satisfy all tasks. The relation between multi-task and interleaved learning is a promising topic for future research.


\subsection{Abstract representations are learned for a free reaction time, integrate to bound task}
\label{app:free_rt}

In the main text we trained networks to produce a response at the end of the trial. However, in many situations agents are free to make a decision whenever they are certain enough. Therefore, we here seek to extend our framework to free reaction time (RT) decisions. A canonical model accounting for choices and reaction times in humans and animals is the drift-diffusion model \citep{Krajbich2010,Brunton2013}. It is composed of an accumulator that integrates noisy evidence over time, until a certain amount of certainty, represented by a bound, is reached, triggering a decision. In the linear classification task setting, the accumulated amount of evidence at time $t$ for a line with slope $\alpha$, $A_\alpha(t)$ is given by:

\begin{equation}
A_\alpha(t) = A_\alpha(t-1) + X_1(t) - \alpha \, X_2(t)
\label{eqn:DDM}
\end{equation}

\begin{figure}[h]
  \centering
  \includegraphics[width=\columnwidth]{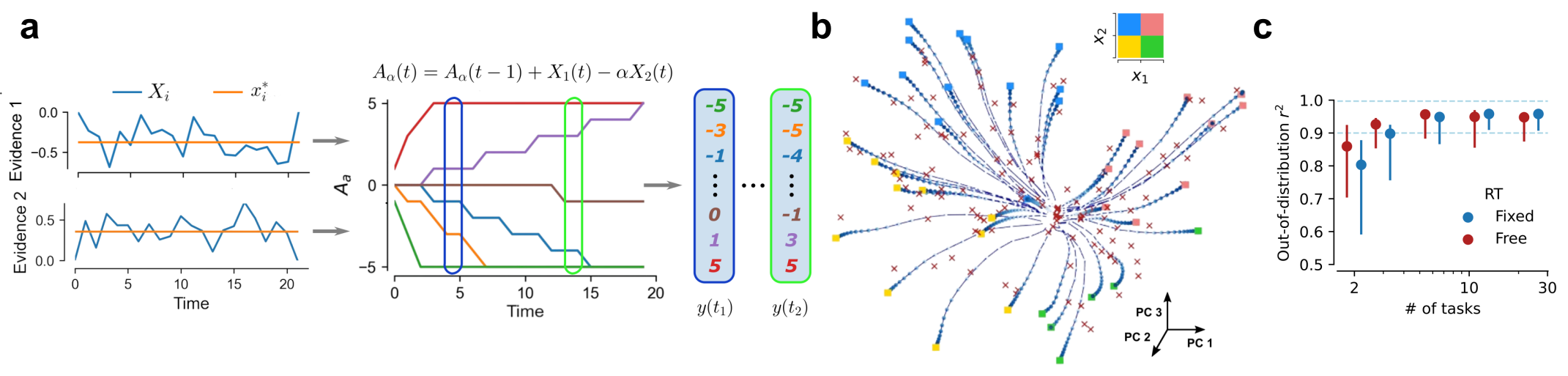}
  \caption{\textbf{Free reaction time task.} \textbf{(a)} Data generating process. Every classification line from \cref{fig:data_arch}a now corresponds to an accumulator (see corresponding colors), and the desired output for the RNN is the accumulator values for the entire trial. The accumulator is quantized to integer values between $\pm5$. \textbf{(b)} Representation for RNN trained on free reaction time task. The network learns a two-dimensional continuous attractor, similar to \cref{fig:fixed_rt}d. A 3D rotating figure to better visualize this representation is provided in the Supplementary Material. \textbf{(c)} OOD generalization performance for the free reaction time (RT) task. Free RT outperforms fixed RT for a small number of tasks.}
\label{fig:free_rt}
\end{figure}

Intuitively, $A_\alpha(t)$ reflects the amount of \textbf{confidence} at time $t$ that the ground truth $\mathbf x^*$ lies above or below the classification line with slope $\alpha$. Essentially, the network has to explicitly report distance from the classification lines, not just in which side of the line $\mathbf x^*$ lies for that trial. We set the decision bound to $\pm5$, and plot the accumulators $A_\alpha$ for all lines in \cref{fig:data_arch}a. Note that once the bound is reached a decision is effectively made and $A_\alpha$ is kept constant. Also, instead of using continuous values, we quantize $A_\alpha$, because it is going to be used as target signal to train the network, and we do not want to introduce a strong inductive bias towards integrating the evidence streams.

We then train the RNN to reproduce confidence estimates from \cref{fig:free_rt}a for the entire trial. Compared to previous experiments, the fixation input is no longer available to determine when to produce a decision. Instead, decisions evolve dynamically throughout the trial. We also use a MSE loss, change the activation function to $g=5\tanh$, and the Adam learning rate $\eta_0=3*10^{-3}$, but all other parameters remain the same as in the main text. To have a closer correspondence to the free RT experiments here, we also train the fixed RT task from the main text with MSE loss, symmetric labels $\mathbf y(\mathbf x^*) \in \{-1, +1\}^{N_\textit{task}}$ and output non-linearity $g=\tanh$. We find that the change of objective and loss only has minor effects on generalization performance.

\Cref{fig:free_rt}b shows that in this setting the network still learns a two-dimensional continuous attractor of the latent space. Furthermore, the free RT outperforms the fixed RT network from the main text (\cref{fig:free_rt}c) for a small number of tasks, since it is explicitly required to report distance from the classification lines. However, as our theory shows (Lemma~\ref{lemma:prob_to_dist})) the fixed RT network is also implicitly reporting distance from the boundaries, when behaving like an optimal multi-task classifier, which explains the similar performance for a larger number of tasks. Overall, we showed that our setting accounts for naturalistic free RT decisions, and provides theoretical justification for the importance of confidence signals in the brain \citep{Rutishauser2018,Masset2020}.

The importance of the confidence (i.e., calibrated likelihoods) of a network’s output, is a recurring theme in machine learning too (e.g. knowledge distillation \citep{bhargava2024promptbaking}). \textbf{We here show that confidence fundamentally connects to how neural networks construct world models}, either directly (integrate-to-bound task here) or indirectly (classification tasks in main text). Under this framework, knowledge distillation can be cast as smaller models directly copying the world models (logits) of larger ones.

\subsection{Quantification of sparsity} 
\label{app:sparse}

In the main text, we observed that RNN representations are sparse. We here seek to more precisely quantify the sparsity in these networks, and investigate how it is affected by the number of tasks $N_\textit{task}$, latent dimensionality $D$, and specific recurrent architecture. To do so, we sample $n=1000$ ground truth vectors $\mathbf{x}^*$ randomly for every network, and compute the sparseness \citep{Vinje2000} of a neuron in the hidden layer as:

\begin{equation}
S = \frac{1 - \left(\frac{\sum \left( z_i/n \right)^2}{\sum \left( z_i^2/n \right)}\right)}{1 - \frac{1}{n}} * 100 \, \%
\label{eqn:sparseness}
\end{equation}

where $z_i$ is the steady-state response of the neuron to ground-truth stimulus $i$. Sparseness ranges from $0$ to $100 \, \%$, with greater sparseness indicating greater selectivity of the neuron to stimuli. Then, the sparsity of a network is given as the average of the sparseness of all its neurons.

\begin{figure}[h]
\vskip 0.2in
\begin{center}
\centerline{\includegraphics[width=0.7\columnwidth]{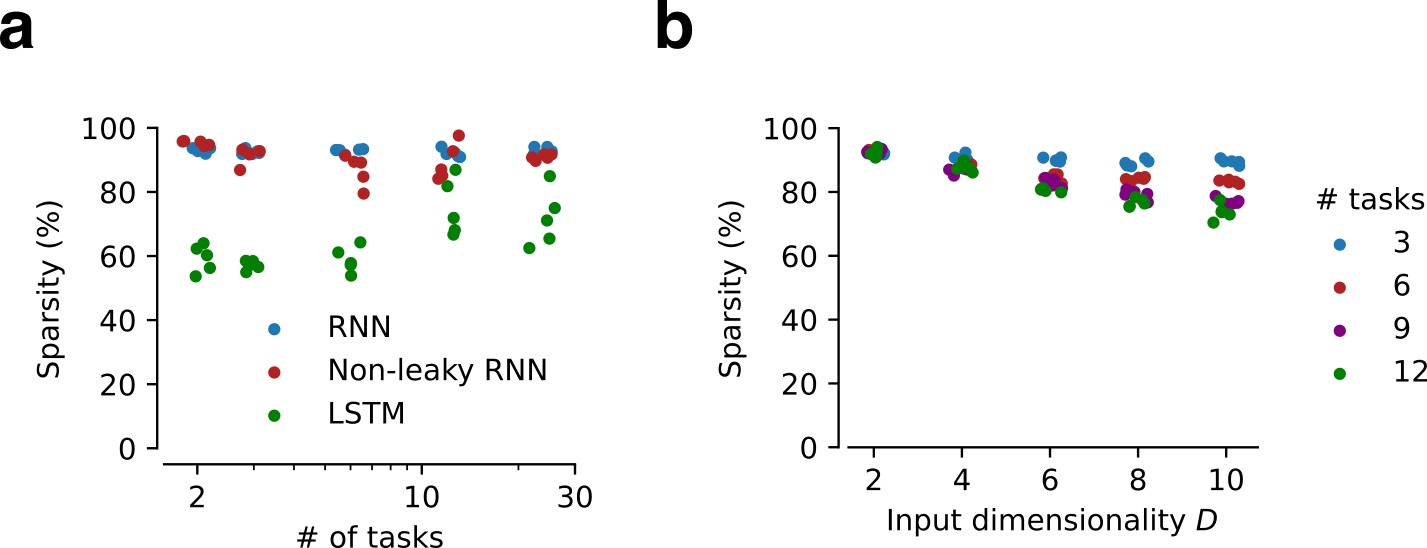}}
\caption{\textbf{Quantification of sparsity as a function of $N_\textit{task}$, $D$ and recurrent architecture choice. (a)} Sparsity of a recurrent network as a function of number of tasks and network architecture. Five networks trained for each network configuration. Greater levels of sparsity indicate that the network activations are more sparse. \textbf{(b)} Sparsity of RNNs as a function of number of tasks and latent dimensionality $D$. Five network are trained for each combination of $(N_\text{task},D)$.}
\label{fig:sparse}
\end{center}
\vskip -0.2in
\end{figure}

\Cref{fig:sparse}a shows that RNNs and non-leaky RNNs are very sparse, with sparsity values around $90 \, \%$ for different values of $N_\textit{task}$, supporting the claim in the main text. LSTMs on the other hand, which are less brain-like\footnote{LSTMs architecturally enforce intricate, high-capacity multiplicative gating mechanisms, while in biological neural networks gating has to be learned. For other aspects of biological implausibility of LSTMs compared to RNNs, see Appendix B in \citep{Soo2023}.}, have lower sparsity values, although interestingly sparsity increases with $N_\textit{task}$. Notably, we did not do anything to promote sparsity (e.g. regularization) in these networks. Therefore we conclude that sparsity naturally emerges from the optimization objective of multitask learning, particularly in architectures that are more brain-like.

Next we wondered how latent space dimensionality $D$ would affect sparsity in our trained RNNs. \Cref{fig:sparse}b shows that networks remain very sparse for the whole range of dimensionality $D$ tested in the main text, with sparsity values above $75 \, \%$. Greater dimensionality results in less sparsity on average, which is expected since $N_{\textit{neu}}=64$ in our networks, therefore a significant amount of their capacity must be used as $D$ increases. This effect plays in only as $N_\textit{task}$ increases, as networks will only learn to disentangle the input dimensions that are spanned by the tasks, as our theory predicts. Overall, there seems to be a proportional relationship between the number of active neurons and dimensionality $D$, as long as there are enough tasks to uncover the $D$ latents.

\subsection{Relation to neuroscience literature}
\label{app:rel_neuro}

An ongoing debate in the brain sciences is whether to solve tasks the brain learns abstracts representations, or simple input-output mappings. Here we show that training RNNs to multitask results in shared, disentangled representations of the latent variables, in the form of continuous attractors.
In this multitask setting, one task acts as a regularizer for the others, by not letting the representation collapse, or overfit, to specific tasks \citep{Zhang2017}.

Our findings directly link to two important neuroscientific findings: spatial cognition and value-based decision-making. First, the tasks here bear close resemblance to path-integration, i.e. the ability of animals to navigate space only relying on their proprioceptive sense of linear and angular velocity \citep{Mittelstaedt1980,Burak2009,Vafidis2022,Sorscher2023}. In path-integration animals integrate velocity signals to get location, while here we integrate noisy evidence to get rid of the noise. In path-integration, networks have to explicitly report distances, while in our setting distances are estimated implicitly (Lemma~\ref{lemma:prob_to_dist})). We learn abstract representations in the form of a 2D "sheet" continuous attractor, while the computational substrate for path integration is a 2D toroidal attractor \citep{Gardner2022,Sorscher2023} -- not an abstract representation. The conditions under which a 2D sheet vs. toroidal continuous attractor is learned is a potential area of future research. Second, decision making experiments in monkeys result in a 2D abstract representation in the medial frontal cortex, which supports novel inferential decisions \citep{Bongioanni2021}. Likewise, context-dependent decision-making experiments in humans also resulted in orthogonal, abstract representations \citep{Flesch2022}.

\subsection{Biological plausibility of multi-task learning}
\label{app:bioplausible}

While our theory stems from parallel processing, i.e. multi-task learning, it is not contingent upon the parallel \textit{execution} of multiple tasks, i.e. multitasking, or the receipt of rich supervisory feedback from the environment in parallel. Behaviorally, the agent need only perform one action, the one most appropriate to its current internal state (e.g. its level of thirst vs. hunger might control the slope of the decision boundary in the 2D latent space of water \& food). What we posit is that tasks that have been performed by the agent before and rely on the same input are still resolved somewhere in the brain, by the brain circuits (e.g. cortical columns \citet{Hawkins2019}) previously responsible for them, instead of the entire decision-making brain area focusing only on the current task \citep{Mante2013}. Therefore, the output of these tasks is still placing pressure on the representation, even though they are not actively driving behavior. In other words, our theory assumes \textbf{competence} at $N_{\text{task}}$ tasks, independently of when and how that competence was achieved. We feel that this is a more natural way of thinking about how the brain manages different tasks, with older tasks still leaving traces somewhere in the brain \citep{Losey2024}; after all, biological agents are remarkable \textit{because} they achieve high performance on many tasks.
This theory is also closely related to the widely observed phenomenon of memory replay \citep{Foster2006}, or mental simulation of counterfactuals \citep{Jensen2024}. A future direction to further enhance the biological relevance of our work would be to investigate the relation between multi-task learning and slow, interleaved learning (see \cref{app:interleaved}), in a continual learning setting.

\subsection{Limitations and future directions}
\label{app:limitations}

A limitation of the present work is that factorization is assumed. Yet not all problems are factorizable, or should be factorized. For instance, a more coarse-grained understanding of the world, that doesn't disentangle all factors, might be more suitable in many cases, and that might be reflected in the nature of the tasks. Furthermore, we focus on canonical cognitive neuroscience tasks which are somewhat removed from standard ML benchmarks. Normally, disentanglement methods would be tested against a benchmark such as dSprites \citep{Matthey17}; however to the best of our knowledge no such benchmark exists for sequential tasks where evidence has to be aggregated over time. Future work could endeavor to apply our setting to richer tasks, like extracting latent item attributes from item embeddings when sequential decisions are made in online retailer settings.

Our theory is agnostic to the way by which competence at multiple tasks is achieved. Thus, a natural next step is to investigate whether disentangled representations exist in a wider range of models capable of solving multiple tasks. A prime example is large language models that display excellent zero- and few-shot generalization capabilities, with progress already made in that direction \citep{Templeton2024}. Moreover, the pre-training objective for LLMs (cross entropy loss/likelihood maximization) fits well within our theoretical framing on (approximately) optimal multi-classifiers. Another application area, as already mentioned, is neuroscience; animals are naturally competent at multiple tasks, thus our work provides theoretical justification for why disentangled representations have been found in many brain areas, and motivates looking for more.

Our experiments showed parsimony of our theoretical results under conditions not covered by our theory, including non-injective observation maps (\cref{app:interleaved}) and decision boundaries (\cref{app:free_rt}) which is encouraging for testing our findings on settings beyond what is strictly covered by the theory. It would be interesting to see how the theoretical insights generalize to different task geometries, for example those implied by self-supervised learning applications (e.g. image patch-filling, next-token prediction, iterative de-noising). The connection between our framework and self-supervised learning is deep and promising. Both frameworks share a common structure, where an underlying latent truth (e.g., objects in an image and their relationships) is inferred. Each objective (e.g., filling in missing image patches) contributes synergistically to understanding the latent space as a whole. A similar logic applies to predicting words, where the latent “meaning” of a sentence is shared, whether in a causal (e.g., LLMs) or masked setting (e.g., BERT). Our study is a first effort towards understanding such parallel learning, and providing guarantees for its performance.

\newpage
\section{Theoretical Derivations}
\label{sec:theory_appendix}
Here we prove our main theoretical result outlined in Section~\ref{sec:theory}. 

\paragraph{High-level summary of proof} We prove that competence at $N_{\text{task}}$ tasks guarantees linear decodability when $N_{\text{task}}\geq D$ for non-degenerate tasks (\cref{thm:optimal_reps_lin}), and orthogonality when $N_{\text{task}}\gg D$ and task boundaries are sampled randomly (Corollary~\ref{corr:orthogonal_reps}). To that end, we first show that optimal evidence aggregation in a multi-task classification framework enforces the multi-task classifier to encode a notion of distances from classification lines (Lemma~\ref{lemma:prob_to_dist}). Given a suitable set of distances from classification lines, we show that one can uniquely identify an optimal estimate of the latents given noisy data in closed form (Trilateration Theorem, Theorem~\ref{thm:trilateration}). In addition, if the readout function of the multi-task classifier is sigmoid-like, the optimal estimate will be approximately linearly decodable from the representation (Corrolary~\ref{corr:disentangled_tanh},\ref{corr:sigmoid_approx}). We then prove by contradiction that all the above results hold when an arbitrary injective observation map $f$ is applied to the input after noising (\cref{thm:optimal_representation}). The theory generalizes to sub-optimal classifiers via least-mean squares approximation and the Moore-Penrose Pseudoinverse (\cref{corr:suboptimal_recovery}), and to different noise distributions (Section~\ref{sec:exotic_noise}). Finally, we discuss implications of the theorem for representation learning, manifold learning and the Platonic representation hypothesis, and future directions (Section~\ref{sec:discussion}).

\textit{\textbf{Notation:} lower case variables denote scalars (e.g., $x$), upper case variables denote random variables (e.g., $X$), and boldfaced variables denote vector quantities (e.g., $\mathbf x, \mathbf X$). 
We denote the $D\times D$ identity matrix as $\mathbf I_D$.} \\

\begin{figure}[h]
    \centering
    \includegraphics[width=\textwidth]{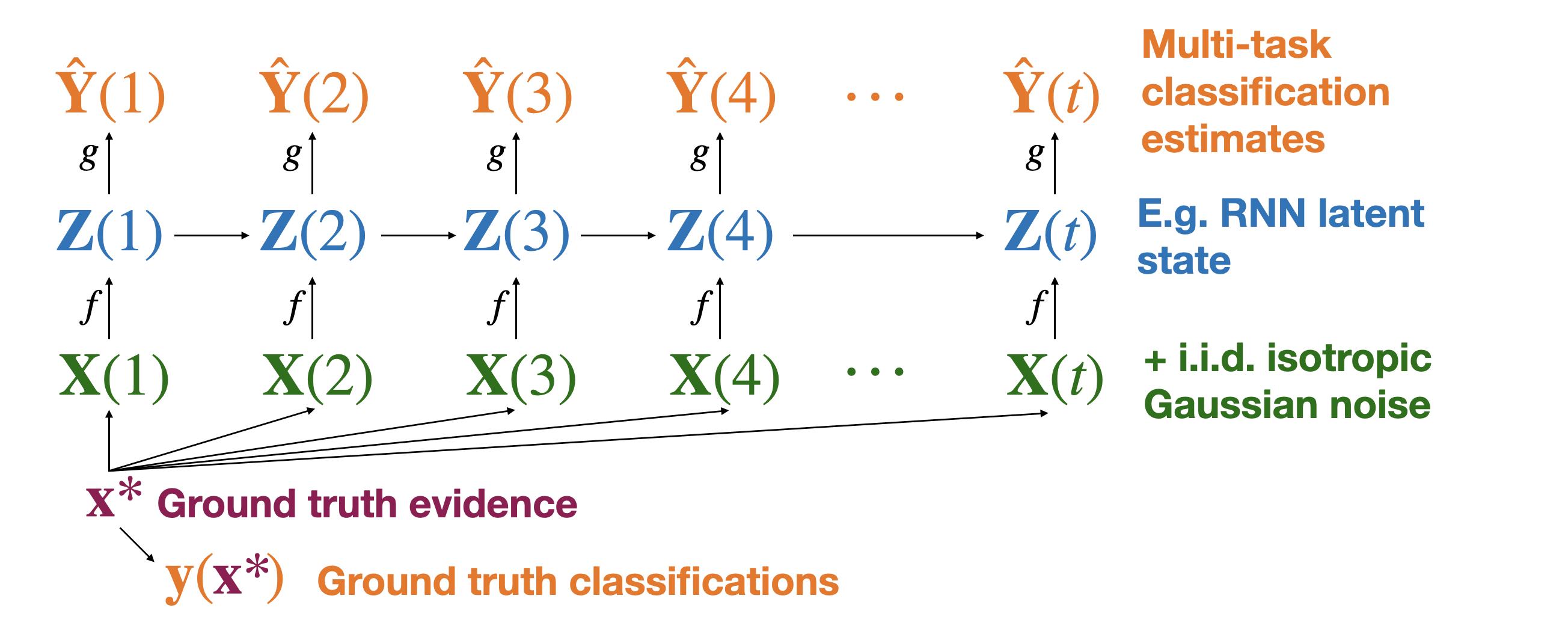}
    \caption{Bayesian graphical model framework representing our theoretical framework for multi-task classification. The agent with latent state $\mathbf Z(t)$ estimates the ground truth decision output $\mathbf{y}(\mathbf{x^*}) \in \{0, 1\}^{N_{\text{task}}}$ from noisy observations $\mathbf{X}(t)$ transformed by injective observation map $f$. 
    We prove that latent state $\mathbf{Z}(t)$ must encode an optimal, linearly decodable estimate of the de-noised environment state $\mathbf{x^*}$ when the decision boundary normal vectors $\{\mathbf c_i\}_{i=1}^{N_{\text{task}}}$ span $\mathbb R^D$.
    }
    \label{fig:theory_setup}\label{fig:2a}
\end{figure}

\paragraph{Variable Glossary: }
\begin{itemize}
    \item $\mathbf x^* \in \mathbb R^D: $ Ground truth (un-noised) input variable of dimension $D$.
    \item $\mathbf X(t) \sim \mathbf x^* + \sigma \mathcal N(\mathbf 0, \mathbf I_D)$ are i.i.d. noisy measurements of $\mathbf x^*$, where 
    \begin{itemize}
        \item $\sigma$ is the amount of equivariant Gaussian noise, and 
        \item $t$ is the discrete time index within a trial.
    \end{itemize}
    \item $f: \mathbb R^D \to \mathcal{Z}: $ An injective observation map that transforms the noisy measurements $\mathbf X(t)$ before they reach the latent state $\mathbf Z(t)$ of the optimal estimator. The map $f$ is injective, meaning that it preserves the uniqueness of the input, i.e., if $f(\mathbf x_1) = f(\mathbf x_2)$, then $\mathbf x_1 = \mathbf x_2$. The codomain $\mathcal{Z}$ can be any suitable space, such as $\mathbb{C}^M$, $\mathbb{R}^\infty$, or other spaces.
    \item $N_\textit{task}$ is the number of classification tasks, 
    \item $\{(\mathbf c_i, b_i)\}_{i=1}^{N_\textit{task}}$ are the classification boundary normal vectors and offsets respectively, with $\mathbf c_i \in \mathbb R^D$ and $b_i\in \mathbb R$. We assume each $\| \mathbf c_i \| = 1$.
    \item $(\mathbf C, \mathbf b)$ are a matrix and vector representing each of the $N_\textit{task}$ classification tasks where $\mathbf C\in \mathbb R^{N_\textit{task}\times D}$ 
    \item  $\mathbf y(\mathbf x^*) \in \{0, 1\}^{N_\textit{task}}:$ Ground truth classification outputs, where each ground truth classification $y_i(\mathbf x^*)$ is given by 
        \begin{equation}
        	\label{def:decision_rule}
        	y_i(\mathbf x) = \begin{cases}
        		1 & \text{ if } \mathbf c_i^\top \mathbf x > b_i \\ 
        		0 & \text{ otherwise } 
        	\end{cases}
        \end{equation}

    \item $\mathbf Z(t): $ Latent variable of a multi-task classification model, conditional on $\mathbf X(1), \dots, \mathbf X(t)$. 
    \item $g: $ Map from latent state $\mathbf Z(t)$ to multi-task classification estimates $\hat{\mathbf Y}(t)$. For most of our experiments, readout map $g = \text{sigmoid}$, for instance.
    \item $\hat{\mathbf Y}(t) := g(\mathbf Z(t)) \in [0, 1]^{N_\textit{task}}: $ Output vector of the multi-task classification model at time $t$, where each $\hat Y_i(t)$ is a Bernoulli random variable estimator, estimating the conditional probability $\Pr\{y_i(\mathbf x^*) = 1\}$ given the noisy observations (via latent variable $\mathbf Z(t)$ -- see Equation~\ref{eqn:full_picture}). 
    \item $\hat{\mathbf X}(t) = \mathcal N(\mu(t), \Sigma(t)): $ Optimal estimate of $\mathbf x^*$ given measurements $\mathbf X(1), \dots, \mathbf X(t)$, derived in Lemma~\ref{lemma:x_star_estimate}. 
\end{itemize}

\paragraph{Problem Statement: } We consider optimal estimators of $\mathbf y(\mathbf x^*)$ in the multi-classification paradigm in Equation~\ref{eqn:full_picture}, shown graphically in Figure~\ref{fig:2a}. 
\begin{equation}
\label{eqn:full_picture}
\mathbf x^*
\to {\mathbf X(1), \dots, \mathbf X(t)}
\overset{f}{\to} \mathbf Z(t)
\overset{g}{\to} \hat{\mathbf Y}(t)
\end{equation}

\paragraph{Contribution: } We prove in Theorem~\ref{thm:optimal_reps} (``Optimal Representation Theorem'') that any optimal estimator of $\mathbf y(\mathbf x^*)$ described above will represent an optimal estimate of $\mathbf x^*$ in latent state $\mathbf Z(t)$. 
We begin by proving results on optimal estimators in Sections~\ref{sec:single_boundary},~\ref{sec:trilateration} with identity observation map $f$, developing the linear case of the optimal representation theorem (Theorem~\ref{thm:optimal_reps_lin}) showing that the latent state $\mathbf Z(t)$ must encode an estimate of $\mathbf x^*$ (visualized in Figure~\ref{fig:theorem_overview}). 
We generalize this result any injective observation map $f$ in Section~\ref{sec:opt_reps_gen} and derive closed-form solutions for extracting the estimate of $\mathbf x^*$ from $\mathbf Z(t)$.
We derive approximation results for $g=\tanh$ in Corollary~\ref{corr:disentangled_tanh} and $g=\text{sigmoid}$ in Corollary~\ref{corr:sigmoid_approx} that show the representation of $\mathbf x^*$ in $\mathbf Z(t)$ will be linear-affine decodable if $g$ is in the sigmoid family of functions. 

\begin{figure}
    \centering
    \includegraphics[width=0.8\textwidth]{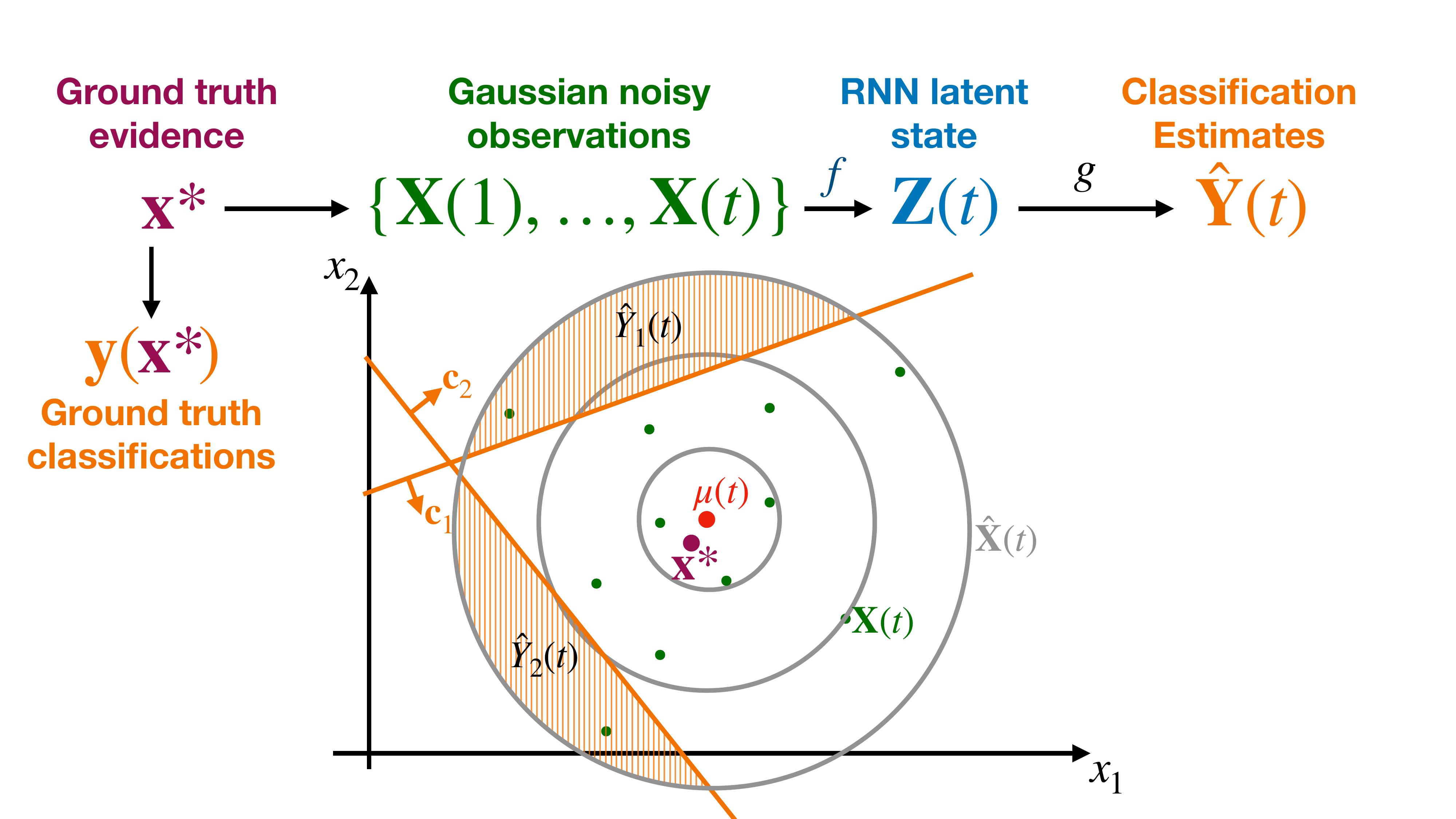}
    \caption{An overview of the classification process using an RNN with Gaussian noisy observations. The ground truth $\mathbf{x}^*$ generates the noisy observations $\{\mathbf X(1), ..., \mathbf X(t)\}$. 
    These observations are processed by the filter-based model illustrated graphically in Figure~\ref{fig:theory_setup}, maintaining a latent state $\mathbf{Z}(t)$. 
    The latent state $\mathbf{Z}(t)$ is then used to produce classification outputs $\hat{Y}_1(t)$ and $\hat{Y}_2(t)$. Theorem~\ref{thm:optimal_reps} proves that $\mathbf{Z}(t)$ must encode an estimate of $\mathbf{x}^*$, visualized in this figure, shown as $\hat{\mathbf X^*}$, including its mean $\mu(t)$, which is the optimal estimator for $\mathbf x^*$ given the noisy observations.
    }
    \label{fig:theorem_overview}
\end{figure}

\subsection{Single Decision Boundary}
\label{sec:single_boundary}

First, we will derive $\hat Y(t)$ for a single decision boundary with parameters $(\mathbf c, b)$. 
We focus on $P(\hat Y(t) | \mathbf X(1), \dots, \mathbf X(t))$, reintroducing the latent variable $\mathbf Z(t)$ later on. 

Since $y(\mathbf x^*)$ is a deterministic function of non-random variable $\mathbf x^*$, we will derive the probability distribution over $P(\mathbf x^* | \mathbf X(1), \dots, \mathbf X(t))$ -- denoted $\hat{\mathbf X}(t)$ -- to determine $\hat Y = y(\hat{\mathbf X}(t))$.
\footnote{Note that the intermediate computation of $\hat{\mathbf X}(t)$ does not imply that a system \textit{must} compute this value to predict $\hat Y$, as the full computation of $\hat{\mathbf X}(t)$ may not be necessary to determine $\hat Y(t)$.}

\begin{lemma}
    \label{lemma:x_star_estimate}
	Assuming no prior on $\mathbf x^*$, the conditional probability distribution $\hat{\mathbf X}(t) \sim P(\mathbf x^* | \mathbf X(1), \dots, \mathbf X(t))$ is given by 
	\begin{equation}
		\hat{\mathbf X}(t) = \mathcal N(\mu(t), \Sigma(t))
        \label{eqn:estimator}
	\end{equation}

	where $\mu(t)= \text{mean}(\mathbf X(1), \dots, \mathbf X(t))$ and $\Sigma(t) = t^{-1} \sigma^2 \mathbf I_D$. 

	\begin{proof}
		Since $\mathbf X(1), \dots, \mathbf X(t)$ are i.i.d. from a Gaussian distribution with mean $\mathbf x^*$ and identity covariance, the sample mean is known to be distributed normally centered at the ground truth $\mathbf x^*$. 
		We apply the known standard deviation of the underlying distribution (identity covariance scaled by $\sigma$) to arrive at $\Sigma(t) = t^{-1} \sigma^2 \mathbf I_D$ as the variance on the sample mean (derived from the central limit theorem). 
	\end{proof}
\end{lemma}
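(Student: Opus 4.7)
The plan is to compute the posterior directly via Bayes' rule. Since $\mathbf X(1), \dots, \mathbf X(t)$ are conditionally i.i.d.\ given $\mathbf x^*$ with each $\mathbf X(k) \sim \mathcal N(\mathbf x^*, \sigma^2 \mathbf I_D)$, the joint likelihood factorizes into a product of Gaussians, yielding
\[
P(\mathbf X(1), \dots, \mathbf X(t) \mid \mathbf x^*) \propto \exp\!\left(-\frac{1}{2\sigma^2}\sum_{k=1}^t \|\mathbf X(k) - \mathbf x^*\|^2\right).
\]
The assumption of "no prior" on $\mathbf x^*$ is interpreted as the improper uniform prior on $\mathbb R^D$, so Bayes' rule tells us the posterior $P(\mathbf x^* \mid \mathbf X(1), \dots, \mathbf X(t))$ is proportional to this likelihood, viewed as a function of $\mathbf x^*$.

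Next, I would complete the square in $\mathbf x^*$ inside the exponent. Expanding $\sum_{k=1}^t \|\mathbf X(k) - \mathbf x^*\|^2$ and collecting terms, the linear and quadratic parts in $\mathbf x^*$ combine into $t\,\|\mathbf x^* - \mu(t)\|^2$ plus a constant independent of $\mathbf x^*$, where $\mu(t) = \tfrac{1}{t}\sum_{k=1}^t \mathbf X(k)$ emerges naturally as the point that minimizes the sum. Absorbing the $\mathbf x^*$-independent terms into the normalization constant, the posterior becomes proportional to $\exp\!\bigl(-\tfrac{t}{2\sigma^2}\|\mathbf x^* - \mu(t)\|^2\bigr)$, which is the kernel of $\mathcal N(\mu(t),\, t^{-1}\sigma^2 \mathbf I_D)$. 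Since this kernel is integrable over $\mathbb R^D$, the posterior is a proper probability distribution and normalization uniquely identifies it as the claimed Gaussian.

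There is no serious obstacle here; the computation is a textbook conjugacy argument. The one subtle point worth flagging is the use of an improper prior: formally one may obtain the stated posterior as the limit of proper Gaussian priors $\mathcal N(\mathbf 0, \lambda^2 \mathbf I_D)$ as $\lambda \to \infty$, which gives the same expression and confirms the result is well-defined. The diagonal structure of the noise covariance $\sigma^2 \mathbf I_D$ also means the argument factorizes across the $D$ coordinates, so one could alternatively prove the one-dimensional scalar case and take a product, but the vector form above handles both at once.
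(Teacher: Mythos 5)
Your proof is correct, and it takes a genuinely different (and more careful) route than the paper's. The paper argues via the \emph{sampling distribution of the sample mean}: it observes that for i.i.d.\ Gaussians the sample mean $\mu(t)$ is normally distributed about $\mathbf x^*$ with covariance $t^{-1}\sigma^2 \mathbf I_D$, and invokes the central limit theorem. Strictly speaking this establishes $P(\mu(t) \mid \mathbf x^*)$, not the posterior $P(\mathbf x^* \mid \mathbf X(1),\dots,\mathbf X(t))$ that the lemma actually claims, and the CLT is not really needed since the result is exact for Gaussian noise. You instead compute the posterior directly: write down the Gaussian likelihood, take the improper flat prior, complete the square in $\mathbf x^*$, and read off $\mathcal N(\mu(t), t^{-1}\sigma^2 \mathbf I_D)$ as a conjugacy calculation. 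The two routes agree numerically here because of the Gaussian/flat-prior symmetry that swaps the roles of $\mathbf x^*$ and $\mu(t)$, but yours is the one that proves the stated claim without that extra step of reinterpretation, and your remark on regularizing the improper prior as a limit of $\mathcal N(\mathbf 0, \lambda^2 \mathbf I_D)$ is a worthwhile addition that the paper omits.
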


We can use estimator $\hat{\mathbf X}(t)$ to construct $\hat{\mathbf Y}(t)$ by expanding $\hat{\mathbf Y}(t) = y(\hat{\mathbf X}(t))$ via Equation~\ref{def:decision_rule}. 

In essence, we are interested in the amount of the probability density of $\hat{\mathbf X}$ that lies on each side of the decision boundary. 
Deriving this probability is simplified by the fact that $\hat{\mathbf X}$ is isotropic -- i.e., it inherits the spherical covariance of the underlying data generation process (Lemma~\ref{lemma:zscore}).

\begin{lemma}
	\label{lemma:zscore}
	$\hat{\mathbf X}(t) = \mathcal N(\mu(t), \Sigma(t))$ with isotropic covariance $\Sigma(t) = t^{-1} \sigma^2 \mathbf I_D$ and mean $\mathbf \mu(t) \in \mathbb R^D$.
	The probability density of $\hat{\mathbf X}(t)$ on the positive side of the decision boundary $\{\mathbf x : \mathbf c^\top \mathbf x > b\}$ can be expressed as 
	\begin{equation}
		\label{eqn:class_prob}
        \hat Y(t) \triangleq
		\Pr\{\mathbf c^\top \mathbf x^* > b\} = \Phi(k\sqrt{t} / \sigma)
	\end{equation}
	where $\Phi$ is the CDF of the normal distribution and $k = \mathbf{c^\top}\mu(t) - b$ is the signed projection distance between the decision boundary and the mean $\mu(t)$ of $\hat{\mathbf X}(t)$. 
\end{lemma}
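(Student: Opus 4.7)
The plan is to reduce the $D$-dimensional tail probability to a one-dimensional Gaussian tail by projecting $\hat{\mathbf X}(t)$ onto the unit normal $\mathbf c$. Concretely, I would first observe that the event $\{\mathbf c^\top \mathbf x > b\}$ is a half-space, so
\[
\Pr\{\mathbf c^\top \hat{\mathbf X}(t) > b\} = \Pr\{W > b\}, \quad W := \mathbf c^\top \hat{\mathbf X}(t).
\]
Since $\hat{\mathbf X}(t)$ is Gaussian by Lemma~\ref{lemma:x_star_estimate}, $W$ is a scalar Gaussian whose parameters I would compute directly from the affine-image formulas: $\mathbb E[W] = \mathbf c^\top \mu(t)$ and $\mathrm{Var}(W) = \mathbf c^\top \Sigma(t)\, \mathbf c = t^{-1}\sigma^2\, \|\mathbf c\|^2 = t^{-1}\sigma^2$, where the last equality uses the assumption $\|\mathbf c\|=1$ from the variable glossary.

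Next I would standardize. Setting $k = \mathbf c^\top \mu(t) - b$, I would write
\[
\Pr\{W > b\} = \Pr\!\left\{ \frac{W - \mathbf c^\top \mu(t)}{\sigma/\sqrt{t}} > \frac{b - \mathbf c^\top \mu(t)}{\sigma/\sqrt{t}} \right\} = 1 - \Phi\!\left(-\frac{k\sqrt{t}}{\sigma}\right) = \Phi\!\left(\frac{k\sqrt{t}}{\sigma}\right),
\]
where the last equality uses the symmetry $1-\Phi(-u) = \Phi(u)$ of the standard normal CDF. This yields exactly Equation~\ref{eqn:class_prob}, completing the argument.

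There is no real obstacle in this lemma; the only subtlety worth flagging explicitly is the unit-norm assumption on $\mathbf c$, which is what collapses the quadratic form $\mathbf c^\top \Sigma(t)\mathbf c$ into a clean $t^{-1}\sigma^2$ and makes the signed distance $k$ coincide with the (un-normalized) offset $\mathbf c^\top \mu(t) - b$. If $\mathbf c$ were not unit-norm, one would instead obtain $\Phi(k\sqrt{t}/(\sigma\|\mathbf c\|))$, matching the usual formula for the signed perpendicular distance from a point to a hyperplane; the normalization convention simply absorbs this factor.
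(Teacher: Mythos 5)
Your proof is correct and is essentially the same argument as the paper's: the paper phrases it as rotating coordinates so one axis aligns with the projection onto the boundary normal, whereas you express the same reduction more directly as the scalar Gaussian $W = \mathbf c^\top \hat{\mathbf X}(t)$ via the affine image of a multivariate normal. Your explicit flagging of the $\|\mathbf c\|=1$ assumption is a welcome clarification that the paper's prose only implicitly uses.
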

\begin{proof}
	Since the $\hat{\mathbf X}(t)$ is isotropic, the variance on every axis is equal and independent. 
	We may rotate our coordinate system such that the projection line between the plane and the mean of $\hat{\mathbf X}(t)$ aligns with an axis we denote as ``axis 0''. 
	The rest of the axes must be orthogonal to the plane. 
	Since each component of an isotropic Gaussian is independent, the marginal distribution of $\hat{\mathbf X}(t)$ on axis 0 is a univariate Gaussian with variance $t^{-1} \sigma^2$ mean at distance $k$ from the boundary. 
	Equation~\ref{eqn:class_prob} applies the normal distribution CDF $\Phi$ to determine the probability mass on the positive side of the boundary. 
\end{proof}

Observe that $\hat Y(t)$ in Equation~\ref{eqn:class_prob} \textbf{monotonically scales} with the signed distance $k$ between the hyperplane and $\mathbf \mu(t)$ (CDFs are monotonic). 

\begin{lemma}
	\label{lemma:prob_to_dist}
	Knowledge of time $t$ and optimal classification estimate $\hat Y(t)$ is sufficient to determine the projection distance $k$ between $\mathbf \mu(t) = \text{mean}\big(\mathbf X(1), \dots, \mathbf X(t)\big)$ and the decision boundary $(\mathbf c, b)$. 
\end{lemma}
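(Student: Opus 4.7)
The plan is to invert the expression for $\hat Y(t)$ derived in Lemma~\ref{lemma:zscore}, which directly ties $\hat Y(t)$ to the signed projection distance $k$ in a one-to-one fashion. Specifically, Equation~\ref{eqn:class_prob} gives
\begin{equation*}
\hat Y(t) = \Phi\!\left(\frac{k\sqrt{t}}{\sigma}\right),
\end{equation*}
so the entire content of the lemma is that this map between $k$ and $\hat Y(t)$ can be uniquely inverted whenever $t$ (and the fixed noise scale $\sigma$) are known.

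First I would invoke the standard fact that the standard normal CDF $\Phi : \mathbb R \to (0,1)$ is strictly increasing and hence a bijection onto its image, with a well-defined inverse $\Phi^{-1} : (0,1) \to \mathbb R$. Combined with strict positivity of $\sqrt{t}/\sigma$ for $t > 0$, the affine rescaling $k \mapsto k\sqrt{t}/\sigma$ is also a bijection of $\mathbb R$, so the composition $k \mapsto \Phi(k\sqrt{t}/\sigma)$ is a strictly increasing bijection from $\mathbb R$ onto $(0,1)$.

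Next I would simply apply $\Phi^{-1}$ to both sides and solve for $k$, obtaining the closed-form expression
\begin{equation*}
k = \frac{\sigma}{\sqrt{t}}\, \Phi^{-1}\!\left(\hat Y(t)\right),
\end{equation*}
which exhibits $k$ as an explicit function of the pair $(t, \hat Y(t))$, completing the proof. Since $\sigma$ is a fixed known noise magnitude (part of the problem specification, not of the trial-level observations), no additional information beyond $t$ and $\hat Y(t)$ is required.

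There is essentially no substantive obstacle here; the only subtlety worth flagging is the boundary behavior of $\Phi^{-1}$ at the endpoints $\hat Y(t)\in\{0,1\}$, which correspond to $k = \pm\infty$ and arise only in measure-zero degenerate events (e.g., infinite separation from the boundary). For any $\hat Y(t)\in (0,1)$, as is the case almost surely for finite $t$ under Gaussian noise, $\Phi^{-1}(\hat Y(t))$ is finite and well defined, so the inversion is valid. This lemma is the scalar building block I expect to reuse in Section~\ref{sec:trilateration} to recover the vector mean $\mu(t)$ from multiple classification outputs $\hat Y_i(t)$ via the decision boundary matrix $\mathbf C$.
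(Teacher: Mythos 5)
Your proposal is correct and takes essentially the same route as the paper: invert Equation~\ref{eqn:class_prob} to get $k = \frac{\sigma}{\sqrt{t}}\,\Phi^{-1}(\hat Y(t))$, justifying invertibility by the strict monotonicity of $\Phi$ (the paper phrases this as the normal density being nonzero everywhere). Your additional remarks on the boundary cases $\hat Y(t)\in\{0,1\}$ and on $\sigma$ being a known fixed constant are sensible clarifications but do not change the argument.
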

\begin{proof}
	Recall Equation~\ref{eqn:class_prob} from Lemma~\ref{lemma:zscore}. 
	We may solve for projection distance $k$ separating the decision boundary and the mean $\mathbf \mu(t)$ of observations $\mathbf X(1), \dots, \mathbf X(t)$ as 
	\begin{equation}
		\label{eqn:solve_proj_dist}
		k = \frac{\sigma}{\sqrt t}\Phi^{-1}(\hat Y(t))
	\end{equation}

	Since $\Phi$ is the CDF of the normal distribution, and the normal distribution is not zero except at $\pm \infty$, the inverse $\Phi^{-1}$ is well-defined.
\end{proof}

Note that non-zero noise is required for Lemma~\ref{lemma:prob_to_dist} to hold, as zero noise would yield zero probability mass on one side of each decision boundary, meaning that no distance information would be recoverable from $\hat Y(t)$ (and \cref{eqn:solve_proj_dist} would lead to a $0 \cdot \infty$ indeterminacy).

\subsection{Trilateration via Multiple Decision Boundaries}
\label{sec:trilateration}

\paragraph{To recap Section~\ref{sec:single_boundary}}: We derived an optimal estimator of $\mathbf x^*$ (denoted $\hat{\mathbf X}(t)$) based on noisy i.i.d. measurements $\mathbf X(1), \dots, \mathbf X(t) \sim \mathcal N(\mathbf x^*, \sigma^2 \mathbf I_D)$ in Lemma~\ref{lemma:x_star_estimate}. 
In Lemma~\ref{lemma:zscore} we derived the equation for Bernoulli variable estimator $\hat{Y}(t)$ to estimate a single classification output $y(\mathbf x^*)$ based on the same noisy measurements via $\hat{\mathbf X}(t)$.
Finally, we showed in Lemma~\ref{lemma:prob_to_dist} that the uncertainty in $\hat{Y}(t)$ and the time $t$ is sufficient to determine the projection distance between the decision boundary and $\mu(t) = \text{mean}(\mathbf X(1), \dots, \mathbf X(t))$ via Equation~\ref{eqn:solve_proj_dist}. \\

Let $\hat{\mathbf Y}(t)$ denote the vector of classification estimates $\hat Y(t)$ from Equation~\ref{eqn:solve_proj_dist}. We now have the tools to prove our final result via \textbf{trilateration}. Much like distance information from cell towers can be used to trilaterate\footnote{Trilateration differs from triangulation, and it is more frequently used in practice. Triangulation is when one has angle information w.r.t. the cell towers. Usually, this is not available -- so one \textbf{trilaterates} their position \cite{oguejiofor2013trilateration}. This more closely matches our setting, where we just have distances information w.r.t. the decision boundaries and must determine the position.} one's position, we will leverage Lemma~\ref{lemma:prob_to_dist} and use distances from decision boundaries $\{(\mathbf c_i, b_i)\}_{i\in [N_\textit{task}]}$ to constrain the positions.

\begin{theorem}[Trilateration Theorem]
	\label{thm:trilateration}
    If $\mathbf C \in \mathbb R^{N_{task}\times D}$ is full-rank and $N_{task} \geq D$, then $\hat{\mathbf Y}(t)$, $t$, $\mathbf b$, and $\mathbf C$ are sufficient to reconstruct the exact value of $\mathbf \mu(t)$, the mean of $\mathbf X(1), \dots, \mathbf X(t)$, which is also the optimal estimator for $\mathbf x^*$. 
\end{theorem}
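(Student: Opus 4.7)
The plan is to combine Lemma~\ref{lemma:prob_to_dist} across all $N_{task}$ decision boundaries to obtain a linear system whose unique solution is $\mu(t)$, and then invoke Lemma~\ref{lemma:x_star_estimate} to identify $\mu(t)$ as the optimal estimator of $\mathbf x^*$.

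First, I would apply Lemma~\ref{lemma:prob_to_dist} separately to each boundary $(\mathbf c_i, b_i)$. For boundary $i$, the signed projection distance is $k_i = \mathbf c_i^\top \mu(t) - b_i$, and from Equation~\ref{eqn:solve_proj_dist} this equals $\tfrac{\sigma}{\sqrt{t}} \Phi^{-1}(\hat Y_i(t))$. Rearranging gives the scalar linear constraint $\mathbf c_i^\top \mu(t) = b_i + \tfrac{\sigma}{\sqrt{t}} \Phi^{-1}(\hat Y_i(t))$ on the unknown $\mu(t) \in \mathbb R^D$. Each of the $N_{task}$ outputs thus pins down the component of $\mu(t)$ along the corresponding normal direction $\mathbf c_i$, so the collection constrains $\mu(t)$ along every direction spanned by the $\mathbf c_i$'s.

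Next, I would stack these equations into the matrix form
\begin{equation*}
    \mathbf C \, \mu(t) \;=\; \mathbf b \;+\; \frac{\sigma}{\sqrt{t}}\, \Phi^{-1}\!\bigl(\hat{\mathbf Y}(t)\bigr),
\end{equation*}
where $\Phi^{-1}$ is applied componentwise. Because $\mathbf C \in \mathbb R^{N_{task} \times D}$ is full rank and (for the system to pin down all $D$ coordinates of $\mu(t)$) its rank must equal $D$, the matrix $\mathbf C^\top \mathbf C \in \mathbb R^{D\times D}$ is invertible. Left-multiplying by $(\mathbf C^\top \mathbf C)^{-1} \mathbf C^\top$ yields the closed-form reconstruction
\begin{equation*}
    \mu(t) \;=\; (\mathbf C^\top \mathbf C)^{-1} \mathbf C^\top \!\left(\mathbf b + \frac{\sigma}{\sqrt{t}}\, \Phi^{-1}\!\bigl(\hat{\mathbf Y}(t)\bigr)\right),
\end{equation*}
which uses only $\hat{\mathbf Y}(t)$, $t$, $\mathbf b$, and $\mathbf C$ (plus the global $\sigma$). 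Finally, Lemma~\ref{lemma:x_star_estimate} identifies $\mu(t) = \text{mean}(\mathbf X(1), \dots, \mathbf X(t))$ as the mean of the posterior $P(\mathbf x^* \mid \mathbf X(1), \dots, \mathbf X(t))$, hence the optimal (MLE / posterior-mean) estimator of $\mathbf x^*$, completing the statement.

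The main obstacle is subtle but worth flagging: the full-rank hypothesis has to be interpreted as $\operatorname{rank}(\mathbf C) = D$, i.e., the boundary normals must span $\mathbb R^D$, which implicitly requires $N_{task} \geq D$; otherwise $\mathbf C^\top \mathbf C$ is singular and $\mu(t)$ is under-determined along the unspanned directions. A secondary subtlety is well-definedness of $\Phi^{-1}(\hat Y_i(t))$, which is fine because the Gaussian CDF is a strictly increasing bijection $\mathbb R \to (0,1)$ and every $\hat Y_i(t)$ obtained from Equation~\ref{eqn:class_prob} lies in the open interval $(0,1)$ for finite $k_i$. These points aside, the argument is essentially a linear-algebraic inversion once Lemma~\ref{lemma:prob_to_dist} has done the heavy probabilistic lifting.
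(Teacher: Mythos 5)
Your proof is correct and follows essentially the same route as the paper's: invoke Lemma~\ref{lemma:prob_to_dist} per boundary to recover the signed projection distances, stack them into the linear system $\mathbf C\,\mu(t)=\mathbf k+\mathbf b$, and observe that full rank of $\mathbf C$ (interpreted as $\operatorname{rank}(\mathbf C)=D$) pins down $\mu(t)$ uniquely. You go one small step further by writing out the explicit Moore--Penrose pseudoinverse solution (which the paper defers to Equation~\ref{eqn:mp_inverse_mu}) and by explicitly flagging well-definedness of $\Phi^{-1}$, but the substance of the argument is identical.
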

\begin{proof}
	We may prove this claim by providing an algorithm to reconstruct $\mathbf \mu(t) = \text{mean}(\mathbf X(1), \dots, \mathbf X(t))$ from $\hat{\mathbf Y}(t), \mathbf C$, and $t$.
	Invoke Lemma~\ref{lemma:prob_to_dist} to compute the signed projection distance between $\mathbf \mu(t)$ and each decision plane $(\mathbf c_i, b_i)$.
	Let $\mathbf k = [k_1, \dots, k_{N_{task}}]^\top$ where each $k_i$ corresponds to decision boundary $\mathbf c_i$.
    Then the mean $\mathbf \mu(t)$ must satisfy 
	\begin{equation}
		\label{eqn:final_lin_sys}
		\mathbf {C \mu} (t) = \mathbf k + \mathbf b 
	\end{equation}
	Thus, for full rank $\mathbf C$ and $N_{task} \geq D$, we will have a uniquely determined $\mu(t)$ value. 
\end{proof}

\paragraph{Sufficient statistics and optimal estimators:} ``A statistic $\mu(t)$ is called sufficient for $\mathbf x^*$ if it contains all the information in ${\mathbf X(1), \dots, \mathbf X(t)}$ about $\mathbf x^*$.'' (from Cover and Thomas' Elements of Information Theory, 1999, Section 2.10, substituting variable names).

More formally, ``A function $T({\mathbf X(1), \dots, \mathbf X(t)})$ is said to be a sufficient statistic relative to the family [of probability density functions indexed by $\mathbf x^*$] $f({\mathbf X(1), \dots, \mathbf X(t)} | \mathbf x^*)$ if ${\mathbf X(1), \dots, \mathbf X(t)}$ is independent of $\mathbf x^*$ given $T({\mathbf X(1), \dots, \mathbf X(t)})$, 
i.e. $\mathbf x^* \to T\big({\mathbf X(1), \dots, \mathbf X(t)}\big) \to \mathbf X(1), \dots, \mathbf X(t)$ forms a Markov chain. This is the same as the condition for equality in the data processing inequality, $$I(\mathbf x^*; {\mathbf X(1), \dots, \mathbf X(t)}) = I(\mathbf x^*; \mu(t))$$ for all distributions on $\mathbf x^*$. Hence sufficient statistics preserve mutual information and conversely." (Cover and Thomas' Elements of Information Theory, 1999, Section 2.10, substituting variable names)

$\mu(t) = \text{mean}(\mathbf X(1), \dots, \mathbf X(t))$ is a sufficient statistic for $\mathbf x^*$ given measurements $\mathbf X(i) \sim \mathbf x^* + \sigma \mathcal N(0, \mathbf I_D)$: For Gaussian noise it's a well known result that the sufficient statistic for the underlying mean given i.i.d. samples is the sample mean of the observations (Cover and Thomas, Elements of Information Theory, 1999, Section 2.10).

\begin{theorem}[Optimal Representation Theorem, Linear Case]
\label{thm:optimal_reps_lin}
    Any system that optimally estimates classification probabilities $\hat{\mathbf Y}(t)$ based on noisy measurements $\{\mathbf X(1), \dots, \mathbf X(t)\}$ must implicitly encode a representation of $\mu(t) = \text{mean}(\mathbf X(1), \dots, \mathbf X(t))$ in its latent state $\mathbf Z(t)$ if decision boundary matrix $\mathbf C$ is full rank and $N_{task} \geq D$. 
\end{theorem}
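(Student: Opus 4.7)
The plan is to promote the constructive trilateration argument from Theorem~\ref{thm:trilateration} into a statement about the system's latent state by composing the readout map $g$ with the trilateration decoder. Concretely, I want to show there exists a function $h$ (depending only on $\mathbf C, \mathbf b, t, \sigma$) such that $\mu(t) = h(\mathbf Z(t))$, which is precisely what ``$\mathbf Z(t)$ implicitly encodes $\mu(t)$'' should mean.

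First I would appeal to optimality. By Lemma~\ref{lemma:x_star_estimate} and Lemma~\ref{lemma:zscore}, the optimal (Bayes-optimal) Bernoulli parameter for each classification is $\hat Y_i(t) = \Phi\bigl(k_i \sqrt{t}/\sigma\bigr)$ with $k_i = \mathbf c_i^\top \mu(t) - b_i$. Any system that optimally estimates $\hat{\mathbf Y}(t)$ must, by definition, output exactly this vector of probabilities almost surely. Second, since the pipeline is $\mathbf Z(t) \xrightarrow{g} \hat{\mathbf Y}(t)$, the entire vector $\hat{\mathbf Y}(t)$ is a deterministic function of $\mathbf Z(t)$ — no further stochasticity is introduced downstream of $\mathbf Z(t)$. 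Third, I would apply Lemma~\ref{lemma:prob_to_dist} componentwise to invert $\Phi$ and recover the signed distance vector $\mathbf k = \frac{\sigma}{\sqrt{t}} \Phi^{-1}(\hat{\mathbf Y}(t))$. Fourth, because $\mathbf C$ is full-rank and $N_{\text{task}} \geq D$ (so $\mathbf C^\top \mathbf C$ is invertible), I would solve the overdetermined linear system $\mathbf C \mu(t) = \mathbf k + \mathbf b$ from Theorem~\ref{thm:trilateration} via the normal equations to obtain
\begin{equation}
\mu(t) = (\mathbf C^\top \mathbf C)^{-1} \mathbf C^\top \left( \tfrac{\sigma}{\sqrt{t}} \Phi^{-1}(g(\mathbf Z(t))) + \mathbf b \right),
\end{equation}
which is exactly Equation~\ref{eqn:main_decode}. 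Composing these three deterministic maps yields the desired decoder $h$.

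The main subtlety I expect is not in the algebra but in the modeling interpretation of ``optimal'' and ``encode.'' The readout $g$ need not be injective in general, so we cannot invert $\mathbf Z(t) \mapsto \hat{\mathbf Y}(t)$; however, injectivity is not needed here because we are constructing a map \emph{from} $\mathbf Z(t)$, not to it, and composing functions in that direction is always well-defined. A second small point to address is that $\Phi^{-1}$ is only defined on the open interval $(0,1)$, so I would note that optimality together with finite $t$ keeps each $\hat Y_i(t)$ strictly between $0$ and $1$ almost surely, so the inverse is well-defined on the relevant support. Finally, I would flag that this argument only establishes the existence of a (generally nonlinear) decoder through $\Phi^{-1} \circ g$; the separate question of whether this decoder is linear — i.e., whether the representation is \emph{disentangled} in the sense of Higgins et al. — is deferred to Corollary~\ref{corr:disentangled_tanh}, where the small-argument approximation of $\tanh$ will linearize $\Phi^{-1}\circ \tanh$ to yield Equation~\ref{eqn:main_tanh_decode}.
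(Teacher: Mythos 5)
Your proof is correct, and it takes a genuinely more direct route than the paper's. The paper proves this theorem via an information-theoretic argument: it sets up the Markov chain $\mathbf x^* \to \{\mathbf X(1), \dots, \mathbf X(t)\} \to \mathbf Z(t) \to \hat{\mathbf Y}(t) \to \mu(t)$, invokes the data processing inequality to get $I(\mathbf x^*; \mathbf Z(t)) \geq I(\mathbf x^*; \mu(t))$, and then argues that $I(\mathbf x^*;\mu(t)) = H(\mu(t))$, concluding $\mu(t)$ must be a deterministic function of $\mathbf Z(t)$. Note, however, that the Markov chain itself is already justified by exactly the two facts you use -- $\hat{\mathbf Y}(t) = g(\mathbf Z(t))$ is deterministic and, by Theorem~\ref{thm:trilateration} plus optimality, $\mu(t)$ is a deterministic function of $\hat{\mathbf Y}(t)$ -- so the mutual-information machinery is essentially window dressing around your compositional argument, and the entropy step is technically delicate for continuous variables. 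Your direct construction of $h = (\mathbf C^\top\mathbf C)^{-1}\mathbf C^\top \circ (\cdot + \mathbf b) \circ \tfrac{\sigma}{\sqrt t}\Phi^{-1} \circ g$ is cleaner: it avoids differential entropy subtleties, immediately yields the closed-form decoder of Equation~\ref{eqn:main_decode} (which the DPI argument alone only establishes by existence), and your handling of the two subtleties -- that $g$ need not be injective because you compose \emph{from} $\mathbf Z(t)$, and that $\Phi^{-1}$ is well-defined on $(0,1)$ at finite $t$ -- is precisely the right care to take. The only deviation worth noting is that the paper packages the explicit reconstruction formula with the general-case Theorem~\ref{thm:optimal_representation} rather than the linear case; your version folds it in where it logically belongs.
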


\begin{proof}
We showed in Theorem B.4 (Trilateration Theorem) that if $\mathbf C \in \mathbb R^{N_{task} \times D}$ is full-rank and $N_{task} \geq D$, then $\hat {\mathbf Y}(t)$, $t$, $\mathbf b$, and $\mathbf C$ are sufficient to reconstruct the exact value of $\mu(t)$, the mean of $\mathbf X(1), \dots, \mathbf X(t)$. Rearranging Equation 16 and applying Equation 15,

$$ \mu(t) = (\mathbf{C^\top C})^{-1}\mathbf C^\top (\frac{\sigma}{\sqrt{t}} \Phi^{-1}(\hat{\mathbf Y}(t)) + \mathbf b) $$

Replacing $\hat{\mathbf Y}(t) = g(\mathbf Z(t))$ from our problem setup reveals that $\mu(t)$ is a deterministic function of $\mathbf Z(t)$.

Therefore, optimal multi-task classifier latent state $\mathbf Z(t)$ contains a sufficient statistic $\mu(t)$ of $\mathbf x^*$, which implies that $\mathbf Z(t)$ must also contain all information about $\mathbf x^*$ given noisy measurements ${\mathbf X(1), \dots, \mathbf X(t)}$ if $\mathbf C \in \mathbb R^{N_{task} \times D}$ is full-rank and $N_{task} \geq D$.
\end{proof}

Theorem~\ref{thm:optimal_reps_lin} boils down to the observation that the confidence associated with each $\hat Y_i$ in $\hat{\mathbf Y}(t)$ are measures of distance between an implied estimate of $\mathbf x^*$ (denoted $\mu(t)$) and classification boundary $i$ (denoted $(\mathbf c_i, b)$). 
$\hat{\mathbf Y}$ specifies the position of $\hat{\mathbf X} = \mu$ via ``coordinates'' defined by decision boundary normal vectors $\mathbf c_1, \dots, \mathbf c_{N_\textit{task}}$. \\

For sub-optimal estimators of $\hat{\mathbf Y}$, we may still obtain an understanding of the implied estimate $\hat{\mathbf X}$ using the same methods. 
In fact, the machinery of least-squares estimation for $\mathbf {Ax = b}$ provides a readily accessible formula for $\tilde \mu$ in sub-optimal estimators of $\hat{\mathbf Y}$ (Equation~\ref{eqn:final_lin_sys}) in the form of the Moore-Penrose pseudoinverse: 

\begin{equation}
    \label{eqn:mp_inverse_mu}
    \tilde \mu = (\mathbf{C^\top C})^{-1}\mathbf C^\top ( \mathbf k  + \mathbf b )
\end{equation}

Conveniently, if the estimation errors in sub-optimal $\hat{\mathbf Y}$ have a mean of zero, additional decision boundaries in $\mathbf C$ (e.g., beyond the minimum $D$ linearly independent boundaries) result in improved estimation of $\mathbf x^*$ by the central limit theorem, thus generalizing our results to sub-optimal estimators (see Corollary~\ref{corr:suboptimal_recovery}).

\subsection{Optimal Representation Theorem (General Case)}
\label{sec:opt_reps_gen}
We extend the results from the linear case (Theorem~\ref{thm:optimal_reps_lin}) to the general case where observations are transformed by an injective observation map $f$ in Theorem~\ref{thm:optimal_representation}.

\begin{theorem}[Optimal Representation Theorem]
\label{thm:optimal_representation}
\label{thm:optimal_reps}
    Let $\mathbf x^* \in \mathbb R^D$ be a latent representation for linear binary classification task $\mathbf y(\mathbf x^*)\in \{0, 1\}^{N_\textit{task}}$ 
    and $\mathbf X(t) = f(\mathbf x^* + \sigma \mathcal N(\mathbf 0, \mathbf I_D))$ be noisy observations transformed by an injective observation map $f$. 
    
    If $\mathbf C \in {N_\textit{task}\times D}$ is a full-rank matrix representing the decision boundary normal vectors in $\mathbb R^D$ and $N_\textit{task} \geq D$, \textbf{then any optimal estimator of $\mathbf y(\mathbf x^*)$ must encode an optimal estimator $\mu(t)$ of the latent variable $\mathbf x^*$ in its latent state $\mathbf Z(t)$.} 
    Furthermore, $\mu(t)$ is a sufficient statistic of $\mathbf x^*$, ensuring that all the information about $\mathbf x^*$ contained in $\{\mathbf X(t)\}$ is also contained in $\mathbf Z(t)$. 
    Consequently, $\mu(t)$ -- the optimal estimate of $\mathbf x^*$ based on $f(\mathbf X(1)), \dots, f(\mathbf X(t))$ -- can be written as a deterministic function (Equation~\ref{eqn:optimal_reps}) of latent state $\mathbf Z(t)$. 

    \begin{equation}
        \label{eqn:optimal_reps}
        \mu(t) = (\mathbf C^\top \mathbf C)^{-1} \mathbf C^\top 
        \begin{pmatrix}
            \frac{\sigma}{\sqrt{t}} 
            \Phi^{-1} \big(
                g(\mathbf Z(t))
            \big)
            + \mathbf b
        \end{pmatrix}
    \end{equation}
\end{theorem}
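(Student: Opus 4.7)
The plan is to reduce the general (injective $f$) case to the linear case already settled in Theorem~\ref{thm:optimal_reps_lin} and Theorem~\ref{thm:trilateration}, and then to bolt the closed-form expression in Equation~\ref{eqn:optimal_reps} onto the structural conclusion by chaining Lemma~\ref{lemma:prob_to_dist} with Equation~\ref{eqn:final_lin_sys}. Conceptually, the theorem is really just the linear case plus an information-preservation argument plus an algebraic bookkeeping step.

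First, I would use the injectivity of $f$ to argue that no information is lost in the observation map: for each $i$, $f(\mathbf X(i))$ determines $\mathbf X(i)$ uniquely since $f^{-1}$ is well-defined on the image $f(\mathbb R^D)$, so the sigma-algebra generated by $\{f(\mathbf X(1)), \ldots, f(\mathbf X(t))\}$ coincides with that of $\{\mathbf X(1), \ldots, \mathbf X(t)\}$. Consequently the posterior over $\mathbf x^*$ given the transformed observations agrees with the posterior in Lemma~\ref{lemma:x_star_estimate}, so the maximum likelihood estimate is still $\mu(t) = \text{mean}(\mathbf X(1), \ldots, \mathbf X(t))$, with isotropic Gaussian covariance $t^{-1}\sigma^2 \mathbf I_D$. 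The entire probabilistic machinery of Section~\ref{sec:single_boundary}---in particular Lemma~\ref{lemma:zscore} and the monotone link between $\hat Y_i(t)$ and the signed distance $k_i$ from $\mu(t)$ to boundary $i$---therefore carries over unchanged, because those statements are about the posterior, not the raw observations.

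With this reduction in hand, I would invoke Theorem~\ref{thm:optimal_reps_lin} on the Markov chain $\mathbf x^* \to \{f(\mathbf X(i))\}_{i=1}^t \to \mathbf Z(t) \to \hat{\mathbf Y}(t) \to \mu(t)$, where the last arrow is the deterministic trilateration map of Theorem~\ref{thm:trilateration}, valid because $\mathbf C$ is full rank and $N_{task}\geq D$. The data processing inequality then forces $I(\mathbf x^*;\mathbf Z(t)) \geq H(\mu(t))$, so $\mu(t)$ must be a deterministic function of $\mathbf Z(t)$. To make that function explicit, I would chain three substitutions in order: (i) $\hat{\mathbf Y}(t) = g(\mathbf Z(t))$, reading off the class-probability vector from the latent state; (ii) Lemma~\ref{lemma:prob_to_dist} applied componentwise, giving the signed-distance vector $\mathbf k(t) = \tfrac{\sigma}{\sqrt{t}}\,\Phi^{-1}(g(\mathbf Z(t)))$; and (iii) Equation~\ref{eqn:final_lin_sys}, namely $\mathbf C \mu(t) = \mathbf k(t) + \mathbf b$, which, since $\mathbf C^\top \mathbf C$ is invertible under the full-column-rank hypothesis, is solved by left multiplication by the Moore--Penrose pseudoinverse $(\mathbf C^\top \mathbf C)^{-1}\mathbf C^\top$. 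Substituting (i)--(ii) into the solution of (iii) gives exactly Equation~\ref{eqn:optimal_reps}.

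The main obstacle I anticipate is making the injectivity reduction tight enough to preserve the isotropic Gaussian posterior on which Lemmas~\ref{lemma:x_star_estimate} and~\ref{lemma:zscore} depend---one might worry that a nonlinear $f$ distorts the relevant geometry. The point I would emphasize is that the MLE is a property of the posterior over $\mathbf x^*$, and injectivity of $f$ is precisely what keeps this posterior identical (as a distribution on $\mathbb R^D$) to the one in the identity-$f$ case; the geometry that matters for trilateration lives in the latent space $\mathbb R^D$, not in the observation space. A secondary technicality to handle is the saturation of $g$ (e.g., $g=\tanh$ hitting $\pm 1$) driving $\Phi^{-1}$ to $\pm\infty$; I would sidestep this by restricting to the interior of $[0,1]^{N_{task}}$, where the posterior is non-degenerate and both $\Phi^{-1}$ and the extracted $\mu(t)$ are finite.
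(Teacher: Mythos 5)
Your proposal is correct and takes essentially the same approach as the paper: reduce the injective-$f$ case to the linear case via an information-preservation argument (injectivity means $f^{-1}$ exists, so $f(\mathbf X(t))$ and $\mathbf X(t)$ carry identical information about $\mathbf x^*$), apply the data-processing inequality across the Markov chain to conclude $\mu(t)$ is a deterministic function of $\mathbf Z(t)$, and then recover the closed form by chaining Lemma~\ref{lemma:prob_to_dist} with the trilateration system $\mathbf C\mu(t) = \mathbf k + \mathbf b$ solved by the pseudoinverse. You are slightly more explicit than the paper on two technical points — the sigma-algebra equivalence underlying the injectivity reduction, and the need to restrict to the interior of $[0,1]^{N_{task}}$ so $\Phi^{-1}$ is finite — but the underlying argument is the same.
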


\begin{proof}
We use proof by contradiction to extend the linear case of the general representation theorem to account for injective observation maps $f$ that map $\mathbf X(t)$ before they are input to $\mathbf Z(t)$. 
Assume toward a contradiction that there exists a superior way of computing $\hat Y$ based on injectively mapped $f(\mathbf X(t))$ other than learning $f^{-1}$ and following the same procedure as when $\mathbf X(t)$ was fed in directly (which we derived the optimal estimator for in Lemma~\ref{lemma:x_star_estimate} and Lemma~\ref{lemma:zscore}). 
This assumption implies there is some additional information in $f(\mathbf X(t))$ that is not in $\mathbf X(t)$, violating the data processing inequality.

Formally, consider the following Markov chain:
\begin{equation}
    \mathbf x^* \to \{\mathbf X(1), \dots, \mathbf X(t)\} \overset{f}{\to} \mathbf{Z}(t) \to \hat{\mathbf{Y}}(t) \to \mu(t).
\end{equation}
Since $f$ is injective, $f^{-1}$ exists, making $f(\mathbf X(t)) \to \mathbf X(t)$ an equivalent transformation in terms of information content. Hence, any optimal estimator that processes $f(\mathbf X(t))$ can only perform as well as if it had directly processed $\mathbf X(t)$. 

To complete the proof, we show that $\mu(t)$ can be reconstructed from $\mathbf Z(t)$. 
Given the full-rank matrix $\mathbf C$, we can use the same trilateration process as in the linear case. The optimal estimate $\mu(t)$ can be written as:
\begin{equation}
    \label{eqn:closed_form_optimal_representation}
    \mu(t) = (\mathbf C^\top \mathbf C)^{-1} \mathbf C^\top 
    \begin{pmatrix}
        \frac{\sigma}{\sqrt{t}} 
        \Phi^{-1} \big(
            g(\mathbf Z(t))
        \big)
        + \mathbf b
    \end{pmatrix},
\end{equation}
where $g(\mathbf Z(t))$ represents the transformation from the latent state to the classification probabilities. 

Since $\mu(t)$ is the optimal estimator (and sufficient statistic) for $\mathbf x^*$ given measurements $\{\mathbf X(1), \dots, \mathbf X(t)\}$, it contains all information about $\mathbf x^*$ contained in the measurements (\cite{cover1991information}). 
In other words, $\mu(t)$ is a deterministic function of $\mathbf Z(t)$, implying that $\mathbf Z(t)$ will contain all information about $\mathbf x^*$ contained in the measurements $\{\mathbf X(t)\}$. 
\end{proof}

\begin{corollary}[Recovery of \(\mu(t)\) for Sub-Optimal Classifiers]
\label{corr:suboptimal_recovery}

Let \(\hat{\mathbf{Y}}(t) \in [0, 1]^{N_{\text{task}}}\) represent the output of a sub-optimal classifier with zero-mean independent errors, i.e., \(\hat{Y}_i(t) = \Pr\{y_i(\mathbf{x^*}) = 1\} + \epsilon_i\), where \(\mathbb{E}[\epsilon_i] = 0\) and \(\text{Var}[\epsilon_i] = \sigma_\epsilon^2\) for all \(i \in \{1, \dots, N_{\text{task}}\}\).

If \(\mathbf{C} \in \mathbb{R}^{N_{\text{task}} \times D}\) is a full-rank and well-conditioned matrix of decision boundary normal vectors with \(N_{\text{task}} \geq D\), the estimated mean \(\tilde{\mu}(t)\) of \(\mathbf{x^*}\) can be recovered using the Moore-Penrose pseudoinverse:
\[
\tilde{\mu}(t) = (\mathbf{C}^\top \mathbf{C})^{-1} \mathbf{C}^\top (\mathbf{k} + \mathbf{b}),
\]
where \(\mathbf{k} = \frac{\sigma}{\sqrt{t}} \Phi^{-1}(\hat{\mathbf{Y}}(t))\) and \(\Phi^{-1}\) is the inverse CDF of the standard normal distribution.

For sub-optimal classifiers, as the number of tasks \(N_{\text{task}}\) increases:
\begin{itemize}
    \item The redundancy in \(\mathbf{C}\) reduces sensitivity to classification errors.
    \item Under the assumption of independent, zero-mean errors in \(\hat{\mathbf{Y}}(t)\), the residual error in \(\tilde{\mu}(t)\) is expected to decrease at a rate of approximately \(\mathcal{O}(1/\sqrt{N_{\text{task}}})\), driven by the averaging effect of least-squares estimation. 
\end{itemize}
\end{corollary}

Motivated by the similarity between $\Phi(z)$ and sigmoid-like activation functions $g(z)$, we show that the two can be approximately canceled in Equation~\ref{eqn:optimal_reps}, implying that $\mu(t)$ can be reconstructed with high accuracy with a linear-affine transformation (e.g., linear decoding) when $g=\tanh$ or $g=\text{sigmoid}$. This implies that $\mathbf Z(t)$ contains an abstract representation of $\mu(t)$ \citep{Ostojic2024}.

\begin{corollary}
    \label{corr:disentangled_tanh}
    If the readout function $g$ is $\tanh$, then the reconstruction equation for $\mu(t)$ from $\mathbf Z(t)$ can be simplified using the approximation $\Phi(z) \approx \frac 1 2 \tanh(\frac{\pi}{2\sqrt{3}} z) + \frac 1 2$. 
    Consequently, $\mu(t)$ can be expressed directly in terms of $\mathbf Z(t)$ without the need for the inverse CDF. 
    \begin{equation}
        \label{eqn:disentangled_tanh}
        \mu(t) \approx \frac{2\sqrt{3}\sigma}{\pi\sqrt{t}} (\mathbf{C}^\top \mathbf{C})^{-1} \mathbf{C}^\top \mathbf{Z}(t) + (\mathbf{C}^\top \mathbf{C})^{-1} \mathbf{C}^\top \mathbf{b}
    \end{equation}
\end{corollary}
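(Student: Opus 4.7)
The plan is to collapse the composition $\Phi^{-1}\circ g$ inside the general reconstruction formula (Equation~\ref{eqn:optimal_reps}) using a logistic-normal approximation, so that $\mu(t)$ becomes an explicit linear-affine function of $\mathbf Z(t)$. First I would recall the classical fact that the standard normal CDF is tightly approximated by a scaled logistic CDF, equivalent to $\Phi(z) \approx \tfrac{1}{2} + \tfrac{1}{2}\tanh\!\bigl(\tfrac{\pi}{2\sqrt 3}\,z\bigr)$; the scale $\pi/(2\sqrt 3)$ comes from matching the variance of the logistic distribution (namely $\pi^2 s^2/3$ for scale parameter $s$) to the unit variance of the standard normal. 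This is a tighter form of the loose expression $\Phi(z) \approx \tfrac12\tanh(z) + \tfrac12$ cited in the statement, and it is precisely what yields the constant $2\sqrt 3/\pi$ in the target formula.

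Second, I would apply this at the level of each scalar classification. Under the $\{-1,+1\}$ readout convention, $g(Z_i(t)) = \tanh(Z_i(t))$ tracks $2\Pr\{y_i=+1\}-1 = 2\Phi(k_i\sqrt t/\sigma) - 1$ by Lemma~\ref{lemma:zscore}, where $k_i$ is the signed projection distance between $\mu(t)$ and the $i$th decision boundary. Substituting the logistic-normal approximation gives $\tanh(Z_i(t)) \approx \tanh\!\bigl(\tfrac{\pi k_i\sqrt t}{2\sqrt 3\,\sigma}\bigr)$, and injectivity of $\tanh$ on $\mathbb R$ lets me cancel to get $k_i \approx \tfrac{2\sqrt 3\,\sigma}{\pi\sqrt t}\,Z_i(t)$. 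Stacking coordinates yields $\mathbf k \approx \tfrac{2\sqrt 3\,\sigma}{\pi\sqrt t}\,\mathbf Z(t)$.

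Third, I would substitute this vector identity into the trilateration equation $\mathbf C\mu(t) = \mathbf k + \mathbf b$ from Equation~\ref{eqn:final_lin_sys} and solve via the Moore-Penrose pseudoinverse, exactly as in Equation~\ref{eqn:mp_inverse_mu}. This immediately produces the target Equation~\ref{eqn:disentangled_tanh}, with the correct linear coefficient and affine offset. Linear-affine decodability of $\mu(t)$ from $\mathbf Z(t)$ matches the standard notion of a (linear) disentangled representation in the sense of \citet{Higgins2018}, closing the corollary.

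The main obstacle is quantifying the approximation error and justifying the cancellation of $\tanh$. The variance-matched logistic-normal approximation has uniformly small error near the origin but its tails disagree with the Gaussian, and composition with $\tanh^{-1}$ amplifies this error as $|\tanh(Z_i)| \to 1$. I would address this by localizing the argument to a bounded operating regime $|k_i\sqrt t/\sigma| \leq M$ in which the uniform error of the approximation is controlled; outside that regime the classifier is effectively saturated and its contribution to the least-squares reconstruction of $\mu(t)$ is negligible. This is also why the conclusion is stated with $\approx$ rather than equality. A cleaner alternative route, if I wanted a self-contained derivation, is to Taylor expand the smooth monotone map $\Phi^{-1}\circ\tanh$ at the origin to first order; the linear coefficient is then determined by matching slopes, and variance matching simply gives the better global constant.
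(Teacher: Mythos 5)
Your proposal is correct and follows essentially the same route as the paper: both use the logistic-normal approximation $\Phi(z) \approx \tfrac 1 2 \tanh\bigl(\tfrac{\pi}{2\sqrt 3}z\bigr) + \tfrac 1 2$ to cancel $\Phi^{-1}\circ\tanh$ inside the trilateration/pseudoinverse formula and arrive at a linear-affine map. The minor differences are cosmetic or additive: you route explicitly through the per-boundary signed distances $k_i$ of Lemma~\ref{lemma:zscore} and then apply the pseudoinverse to $\mathbf C\mu = \mathbf k + \mathbf b$, whereas the paper substitutes directly into Equation~\ref{eqn:optimal_reps}; and you reconcile the $\{-1,+1\}$ readout with $\hat Y\in[0,1]$ via $2\Pr-1$, whereas the paper silently re-casts $g$ as $\tfrac12\tanh + \tfrac12$. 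You actually improve on the paper in two respects: you derive the constant $\pi/(2\sqrt 3)$ from variance matching of the logistic and standard normal (the paper merely asserts it was "found" to yield a better MSE than the \citet{page1977approximations} constant), and you flag the tail/saturation regime where $\tanh^{-1}$ amplifies the approximation error, which the paper leaves undiscussed.
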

\begin{proof}
    Consider the readout function $g$ given by $g(\mathbf Z(t)) = \hat{\mathbf Y}(t) = \frac 1 2 \tanh(\mathbf Z(t)) + \frac 1 2$. 
    To show that this function allows for linear decoding of $\mathbf x^*$ from $\mathbf Z(t)$, we need to leverage the similarity between $\tanh$ and $\Phi$. 

    The normal distribution CDF \(\Phi(z)\) and the function \(\frac{1}{2} \tanh(z) + \frac{1}{2}\) are known to be very similar, as both functions are sigmoid-like, centered at zero, and asymptotically approach 0 and 1 \citep{choudhury2014simple}. 

    \citet{page1977approximations} proposed a simple approximation of $\Phi$ via $\tanh$. 
    Eliminating higher order terms, their approximation is $\Phi(x) \approx \frac 1 2 \tanh(\sqrt \frac{2}{\pi} x ) + \frac 1 2$. 
    We found the following approximation yielded a superior mean squared error:
    \[
    \Phi(z) \approx \frac{1}{2} \tanh\left(\frac{\pi}{2\sqrt{3}} z\right) + \frac{1}{2}.
    \]
    
    Using this approximation, we can express \(\Phi^{-1}\) in terms of \(\tanh\):
    \[
    \Phi^{-1}\left( \frac{1}{2} \tanh(z) + \frac{1}{2} \right) \approx \frac{2\sqrt{3}}{\pi} z.
    \]
    
    Substituting this approximation into the reconstruction equation for \(\mu(t)\) from Theorem~\ref{thm:optimal_representation}:
    \begin{align*}
        \mu(t) &= (\mathbf{C}^\top \mathbf{C})^{-1} \mathbf{C}^\top 
        \begin{pmatrix}
            \frac{\sigma}{\sqrt{t}} 
            \Phi^{-1} \left( \frac{1}{2} \tanh(\mathbf{Z}(t)) + \frac{1}{2} \right)
            + \mathbf{b}
        \end{pmatrix} \\
        &\approx (\mathbf{C}^\top \mathbf{C})^{-1} \mathbf{C}^\top 
        \begin{pmatrix}
            \frac{\sigma}{\sqrt{t}} 
            \left( \frac{2\sqrt{3}}{\pi} \mathbf{Z}(t) \right)
            + \mathbf{b}
        \end{pmatrix} \\
        &= \frac{2\sqrt{3}\sigma}{\pi\sqrt{t}} (\mathbf{C}^\top \mathbf{C})^{-1} \mathbf{C}^\top \mathbf{Z}(t) + (\mathbf{C}^\top \mathbf{C})^{-1} \mathbf{C}^\top \mathbf{b}.
    \end{align*}
    
    Therefore, we have shown that \(\mu(t)\) can be expressed as a linear transformation of \(\mathbf{Z}(t)\) when the readout function \( g \) is sigmoid-like. This confirms the corollary.
    
\end{proof}

\begin{corollary}
    \label{corr:sigmoid_approx}
    For $g=\textup{sigmoid}$, linear scaling by $a_\sigma=0.5886$ yields a mean absolute error of 0.0038699 from $Z(t)$ in the range $[-10, 10]$, enabling the following accurate linear-affine approximation of $\mu(t)$ from $\mathbf Z(t)$ given $g=\textup{sigmoid}$: 

    \begin{equation}
        \label{eqn:sigmoid_approx}
        \mu(t) \approx \frac{a_\sigma \, \sigma}{\sqrt{t}} (\mathbf C^\top \mathbf C)^{-1} \mathbf C^\top \mathbf Z(t) + (\mathbf C^\top \mathbf C)^{-1} \mathbf C^{\top} \mathbf b
    \end{equation}
\end{corollary}
\begin{proof}
    We found $a_\sigma=0.5886$ by computationally minimizing the mean squared error between $\Phi(a_\sigma z)$ and $\sigma(z)$ (see \texttt{sigmoid\_approx\_gaussianCDF.m} in supporting code).
    Upon computing $a_\sigma$ on successively larger optimization bounds $T = 1, 10, 100, ...$, we found that $a_\sigma$ converged to $0.5886$.
    We note that sigmoid approximations to the Gaussian distribution CDF have existed in the literature for some time \citep{waissi1996sigmoid}.
\end{proof}

\begin{figure}
    \centering
    \includegraphics[width=0.7\linewidth]{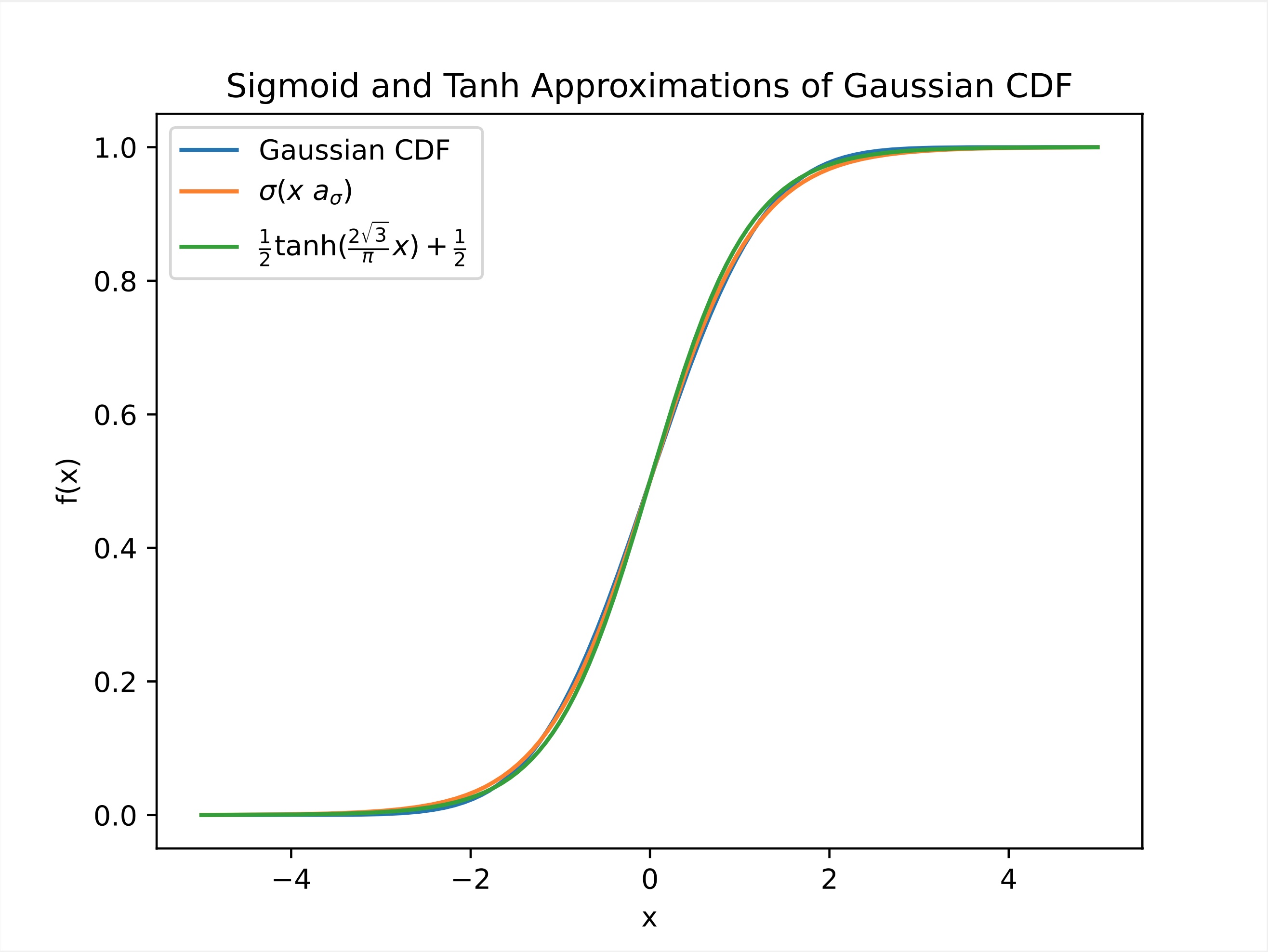}
    \caption{Sigmoid ($\sigma(\cdot)$) and $\tanh$ approximations of the normal distribution CDF $\Phi$ via horizontal scaling.}
    \label{fig:cdf_approx}
\end{figure}

Corollary~\ref{corr:disentangled_tanh} and \ref{corr:sigmoid_approx} are visualized in Figure~\ref{fig:cdf_approx}, showing the close approximations to the Gaussian CDF.

\begin{corollary}
    \label{corr:orthogonal_reps}
    $N_\textit{task} \gg D$ implies orthogonal representations in latent $Z(t)$. 
\end{corollary}

\begin{proof}
Recall \cref{eqn:main_decode} from the disentangled representation theorem: $$\mu(t) = (\mathbf C^\top \mathbf C)^{-1} \mathbf C^\top  \begin{pmatrix} \frac{\sigma}{\sqrt{t}}  \Phi^{-1} \big( g(\mathbf Z(t)) \big) + \mathbf b\end{pmatrix}$$
    
For sigmoid-like $g$, we can approximate $$\mu(t) \approx \frac{a_g \, \sigma}{\sqrt{t}} (\mathbf{C}^\top \mathbf{C})^{-1} \mathbf{C}^\top \mathbf{Z}(t) + (\mathbf{C}^\top \mathbf{C})^{-1} \mathbf{C}^\top \mathbf{b}$$
The orthogonality of the representations in $\mathbf Z(t)$ is therefore governed by the orthogonality of the rows in the matrix $\mathbf A := (\mathbf C^\top \mathbf C)^{-1} \mathbf C^\top \in \mathbb R^{D\times N_\textit{task}}$.
If $\mathbf A \mathbf A^\top \in \mathbb R^{D\times D}$ is diagonal, then the rows of $\mathbf A$ are orthogonal. 
$$\mathbf A \mathbf A^\top = \big((\mathbf C^\top \mathbf C)^{-1} \mathbf C^\top\big) \big((\mathbf C^\top \mathbf C)^{-1} \mathbf C^\top\big)^\top$$
$$ = \big((\mathbf C^\top \mathbf C)^{-1} \mathbf C^\top\big) \big(\mathbf C ((\mathbf C^\top \mathbf C)^{-1})^\top\big)$$
$$ = (\mathbf C^\top \mathbf C)^{-1} (\mathbf C^\top\mathbf C) ((\mathbf C^\top \mathbf C)^{-1})^\top$$
$$ = ((\mathbf C^\top \mathbf C)^{-1})^\top$$
$B^\top B$ is a symmetric matrix for any matrix B. Recall that the inverse of a symmetric matrix is also symmetric. So $(\mathbf C^\top \mathbf C)^{-1}$ is also symmetric. 
Therefore $$\mathbf A \mathbf A^\top = (\mathbf C^\top \mathbf C)^{-1}$$
As the columns of $\mathbf C$ are high-dimensional randomly sampled vectors, their probability of being non-orthogonal vanishes as the dimensionality $N_\textit{task}$ increases.
We can also state the condition in terms of the singular value decomposition (SVD) of $\mathbf C = \mathbf U \Sigma \mathbf V^T$ where $\mathbf U \in \mathbb R^{N_\textit{task} \times D}$, $\Sigma = \text{diag}(\sigma_1, ..., \sigma_D)$, $\mathbf V \in \mathbb R^{D\times D}$, and $\sigma_i$ is the $i$th singular value of $\mathbf C$ and $\mathbf U, \mathbf V$ are orthonormal. 
Then $$\mathbf A \mathbf A^\top = \mathbf V \Sigma^{-2} \mathbf V^T$$
If the singular values are approximately uniform $\sigma_1 \dots \sigma_D \approx \sigma$ then $$\mathbf A \mathbf A^\top \approx \mathbf V (\frac{1}{\sigma^2}\mathbf I_D) \mathbf V^T$$
$$\mathbf A \mathbf A^\top \approx \frac{1}{\sigma^2}\mathbf V \mathbf V^T = \frac{1}{\sigma^2} \mathbf I_D$$
Therefore, uniform singular values in $\mathbf C$ is a sufficient condition to guarantee an orthogonal, disentangled representation of $\mu(t)$ in $\mathbf Z(t)$\footnote{This uniformity condition in the singular value decomposition is analogous to the outcome of the LM damping technique \citep{Levenberg1944,Marquardt1963}, used for least squares inversion problems in various applications.}. 
\end{proof}

As a sidenote, we would like to point out that the setting here relates to the Marchenko-Pastur law while noting important caveats: while the law typically applies to matrices with i.i.d. entries $N(0, \sigma)$, our $\mathbf C$ matrix consists of random row vectors in $\mathbb R^D$ of unit norm. This structure, while not strictly meeting the law’s conditions, still supports our conclusions about orthogonalization.

\subsection{Suitable Noise Distributions}
\label{sec:exotic_noise}

While the original proof leverages Gaussian noise due to its mathematical convenience, the key property required for the proof is more general. Specifically, the essential requirement is that the \textbf{marginal posterior distributions along the decision boundary normals $\mathbf{c}_i$ have invertible cumulative distribution functions (CDFs)}, allowing us to recover the distances from the observed classification probabilities.

We will now provide a precise mathematical description of the class of noise distributions where this key property holds, generalizing the disentangled representation theorem beyond Gaussian noise.

\paragraph{Key Noise Property Required for Proof: } Invertibility of the Marginal Posterior CDFs Along Decision Boundary Normals. 
For each decision boundary normal vector $\mathbf{c}_i$, the marginal posterior distribution of $\mathbf{x}^*$ projected onto $\mathbf{c}_i$ must have an invertible CDF. 
This allows us to map the observed classification probabilities to unique distances between the estimated mean $\mu(t)$ and the decision boundaries.

\paragraph{Mathematical Description of Suitable Noise Distributions}: Let us define a class of noise distributions $\epsilon(t)$ where the key property holds and is straight forward to solve analytically. 

\textbf{Definition}: The proof is immediately generalizable to noise distribution $\epsilon(t)$ if it satisfies the following conditions:
\begin{enumerate}
\item \textbf{Additive noise model}: $$\mathbf X(t) = \mathbf x^* + \epsilon(t)$$
   where $\epsilon(t)$ are i.i.d. random vectors. 
\item \textbf{Posterior Distribution Tractability}: The posterior distribution $P(\mathbf x^* | \{\mathbf X(s)\}_{s=1}^t$ must be analytically tractable or well-approximated, allowing us to compute the posterior mean or maximum a posteriori estimate $\mu(t)$ of $\mathbf x^*$ and understand its properties.
\item \textbf{Existence of Invertible Marginal Posterior CDFs}: For each decision boundary normal vector $\mathbf{c}_i$, the marginal posterior distribution of $\mathbf{c}_i^\top \mathbf{x}^*$ has a continuous and strictly increasing CDF $F_i(k)$, which is invertible.
\item \textbf{Support over $\mathcal X$}: Let $\mathcal X \subseteq \mathbb R^D$ be the connected subset of allowable $\mathbf x^*$ values. The noise distribution must have full support over $\mathcal X$, ensuring that any real-valued $\mathbf{x}^*$ is possible to trilaterate. For our proof, we assume support over the maximally permissible $\mathbb R^D$ is used. 
\end{enumerate}
\paragraph{Implications: }

Any suitable noise distribution allows classification task probability $\hat Y_i(t)$ to be expressed as 
$$
\hat Y_i(t) = \Pr\{\mathbf c_i^\top \mathbf x^* > b_i | [\mathbf X(s)]_{s=1}^T\}
$$

$$
 = 1-F_i (b_i - \mathbf c_i^\top \mu(t))
$$
where $F_i$ is the marginal distribution of $\mathbf c_i^\top \mathbf x^*$. 

Since $F_i$ is invertible, we can solve for distance $k_i = \mathbf c_i^\top \mu(t) - b_i$ as 
$$
k_i = F_i^{-1}(1-\hat Y_i(t)).
$$

This equation allows us to reconstruct decision boundary distances $k_i$ from optimal classification probabilities $\hat Y_i(t)$. 
Thus the proof via trilateration for the disentangled representation theorem is feasible for any suitable noise distribution $\epsilon(t)$ as described above. 

\subsection{Examples of Suitable Noise Distributions}

\subsubsection{Elliptical Distributions}

\paragraph{Definition: } A multivariate distribution (\cite{fang2018symmetric}) is elliptical if its density function $f(\mathbf{x})$ can be expressed as:
$$
f(\mathbf{x}) = |\boldsymbol{\Sigma}|^{-1/2} g\left( (\mathbf{x} - \boldsymbol{\mu})^\top \boldsymbol{\Sigma}^{-1} (\mathbf{x} - \boldsymbol{\mu}) \right)
$$
where $g: [0, \infty) \to [0, \infty)$ is a non-negative function, $\mu \in \mathbb R^D$ is the location parameter, and $\Sigma \in \mathbb R^{D\times D}$ is the scale matrix. 

\paragraph{Properties: } 
\begin{itemize}
    \item Symmetric and unimodal around $\boldsymbol{\mu}$.
    \item Projections onto any direction $\mathbf{c}_i$ yield univariate elliptical distributions.
    \item Marginal distributions along $\mathbf{c}_i$ have invertible CDFs if $g$ leads to such marginals.
\end{itemize}

\paragraph{Examples of Suitable Elliptical Distributions: }
\begin{itemize}
\item \textbf{Multi-variate Gaussians with Full-Rank Covariance Matrix} (Lemma~\ref{lemma:zscore_anisotropic}).
\item \textbf{Multi-variate T-distributions with Full-Rank Scale Matrix}: Heavy-tailed alternative to the Gaussian.
\item \textbf{Multivariate Laplace Distribution With Full-Rank Scale Matrix}: Has exponential tails. 
\end{itemize}

\subsubsection{Exponential Power Distributions}
\textbf{Definition:} Also known as the generalized Gaussian distribution, defined by the density:
$$
f(\mathbf{x}) = \frac{\beta}{2 \alpha \Gamma(1/\beta)} \exp\left( -\left( \frac{|\mathbf{x} - \boldsymbol{\mu}|}{\alpha} \right)^\beta \right),
$$
where $\beta > 0$ controls the kurtosis (\cite{box2011bayesian}). Properties:
\begin{itemize}
\item For $\beta = 2$, it reduces to the Gaussian distribution.
\item For $\beta = 1$, it becomes the Laplace distribution.
\item Symmetric and unimodal.
\item Marginal distributions are of the same form and have invertible CDFs.
\end{itemize}

{\bf Generalizing the Proof}

Given the above, the proof can be generalized to any noise distribution $\epsilon(t)$ satisfying the conditions stated. The key steps are as follows:

\begin{enumerate}
\item Compute the Posterior Distribution:
    \begin{itemize}
        \item Since $\epsilon(t)$ is i.i.d., the likelihood function is:
    $$P(\{\mathbf{X}(s)\}_{s=1}^t \mid \mathbf{x}^*) = \prod_{s=1}^t f_{\epsilon}(\mathbf{X}(s) - \mathbf{x}^*).$$
        \item Without a prior (uniform prior), the posterior is proportional to the likelihood.
        \item The posterior distribution $P(\mathbf{x}^* \mid \{\mathbf{X}(s)\}_{s=1}^t)$ can be found (or approximated) based on the noise distribution.
    \end{itemize}
\item Marginalize Along Decision Boundary Normals:
    \begin{itemize}
        \item For each $\mathbf{c}_i$, compute the marginal posterior distribution of $\mathbf{c}_i^\top \mathbf{x}^*$.
        \item Due to the symmetry and unimodality of the noise distribution, this marginal will also be symmetric and unimodal.
    \end{itemize}
\item Compute Classification Probabilities: The classification probability is:
        $$
        \hat{Y}_i(t) = \Pr\{\mathbf{c}_i^\top \mathbf{x}^* > b_i \mid \{\mathbf{X}(s)\}_{s=1}^t\} = 1 - F_i(b_i - \mathbf{c}_i^\top \mu(t)),
        $$
	   where $F_i$ is the marginal posterior CDF of $\mathbf{c}_i^\top \mathbf{x}^*$.
\item Invert Marginal CDFs to Find Distances: Since $F_i$ is invertible, we can solve for $k_i$:
$$
k_i = F_i^{-1}(1 - \hat{Y}_i(t)).
$$
\item Set Up Linear System to Recover $\mu(t)$:
    \begin{itemize}
        \item The distances $k_i$ relate to $\mu(t)$ via:
        $$
        \mathbf{c}_i^\top \mu(t) = k_i + b_i.
        $$
        \item Collecting all $N_{\text{task}}$ equations:
        $$
        \mathbf{C} \mu(t) = \mathbf{k} + \mathbf{b}.
        $$
    \end{itemize}
\item Solve for $\mu(t)$: If $\mathbf{C}$ is full rank, we can solve for $\mu(t)$:
$$
\mu(t) = (\mathbf{C}^\top \mathbf{C})^{-1} \mathbf{C}^\top (\mathbf{k} + \mathbf{b}).
$$
\end{enumerate}

Implications
\begin{itemize}
\item Optimal Estimator Must Encode $\mu(t)$:
\item The latent state $\mathbf{Z}(t)$ must contain sufficient information to recover $\mu(t)$, as it is essential for optimal classification across all tasks.
\item The theorem holds for any noise distribution satisfying the stated conditions, not just Gaussian noise.
\end{itemize}

\subsubsection{Example with Multivariate t-Distribution}

Suppose $\epsilon(t)$ follows a multivariate t-distribution (\cite{kotz2004multivariate}) with degrees of freedom $\nu > 2$:
\begin{enumerate}
\item \textbf{Posterior Distribution}: The posterior $P(\mathbf{x}^* \mid \{\mathbf{X}(s)\}_{s=1}^t)$ is also a multivariate t-distribution. 
\item Marginal Posterior Distributions: Projections onto $\mathbf{c}_i$ yield univariate t-distributions.
\item Invertible Marginal CDFs: The CDF of the t-distribution is known and invertible.
\item Recover Distances: Use the inverse t-CDF to find $k_i$:
$$
k_i = s_t \cdot T_{\nu}^{-1}(1 - \hat{Y}_i(t)),
$$
where $s_t$ is the scale parameter, and $T_{\nu}^{-1}$ is the inverse CDF of the t-distribution with $\nu$ degrees of freedom.
\item Proceed with the proof: Follow the same steps as before to reconstruct $\mu(t)$.
\end{enumerate}

\subsubsection{Example with Anisotropic Gaussian Noise}
\begin{lemma}
    \label{lemma:zscore_anisotropic}
    Suppose $\epsilon(t)$ follows an anisotropic multi-variate Gaussian distribution with full-rank covariance matrix $\Sigma$ and zero mean.
    Then we can update Equation~\ref{eqn:class_prob} from Lemma~\ref{lemma:zscore} as
    \begin{align}
        \hat Y(t) &\triangleq \Pr(\mathbf c^\top \mathbf x^* > b) \\ &=\Phi\begin{pmatrix}\frac{k\sqrt t}{\sqrt{\mathbf c^\top \Sigma \mathbf c}}\end{pmatrix}
        \label{eqn:class_prob_anisotropic}
    \end{align}
\end{lemma}
\begin{proof}
    Anisotropic noise results in the quadratic form $\mathbf c^\top \Sigma \mathbf c$ in the denominator representing the variance of the marginalized anisotropic noise distribution along decision boundary normal vector $\mathbf c_i$. 
    As long as $\Sigma$ is non-singular, the remainder of the disentangled representation proof may proceed substituting Equation~\ref{eqn:class_prob} with Equation~\ref{eqn:class_prob_anisotropic}.
\end{proof}

\subsubsection{Conclusion}

The key property enabling us to recover distances from classification probabilities is the invertibility of the marginal posterior CDFs along the decision boundary normals. This property is not exclusive to Gaussian noise but is shared by a broader class of noise distributions, including but not limited to:

\begin{itemize}
\item Elliptical distributions (e.g., multivariate t-distributions, Laplace distributions).
\item Exponential power distributions.
\item Other symmetric and unimodal distributions with invertible marginals.
\end{itemize}

Therefore, the proof of the Disentangled Representation Theorem generalizes to any noise distribution satisfying the conditions outlined above. The essential requirement is that we can uniquely map the observed classification probabilities to distances along the normals, allowing us to reconstruct the posterior mean $\mu(t)$ and establish that any optimal estimator must encode this information in its latent state $\mathbf{Z}(t)$.

Correspondence in the structure of noise distribution CDF along marginals $F_i$ and point-wise activation functions $g$ (the activation function $\hat{\mathbf Y}(t) = g(\mathbf Z(t))$).

\subsection{Discussion}

\label{sec:discussion}


The theoretical results presented in this appendix, particularly the Optimal Representation Theorem (Theorem~\ref{thm:optimal_reps}), provide insights into the factors driving representational convergence and alignment in neural networks and, more generally, any optimal multi-task classifier in the setup shown in Figure~\ref{fig:theory_setup}. This theorem establishes a clear connection between the latent representations learned by optimal multi-task classifiers and the true underlying data representation, offering a principled explanation for the emergence of disentangled representations aligned with the intrinsic structure of the data.

\paragraph{Connection to Manifold Hypothesis: } Our theoretical results have important implications for the manifold hypothesis, which posits that real-world high-dimensional data tend to lie on or near low-dimensional manifolds embedded in the high-dimensional space \citep{Fefferman2016, olah2014}. The key insight is that our proofs show an optimal multi-task classifier must encode an estimate of the disentangled coordinates of the true underlying environment state in its latent representation.
Consider the disentangled space in which $\mathbf x^*$ resides, denoted $\mathcal X^*$. 
The injective observation map $f: \mathcal X^* \to \mathcal X$, where decision boundaries $y_i : \mathcal X^* \to \{0,1\}$ are linear. 
Our results imply that an optimal classifier's latent state $\mathbf Z(t)$ must encode disentangled coordinates in $\mathcal X^*$ rather than ambient coordinates $\mathcal X$. 

The injective observation map $f$ aligns closely to the typical conception of a data manifold  (e.g., if $f\in C^1$ or $f\in C^n$, as described in \cite{tu2017differential}). 
The disentangled space $\mathcal X^*$ can be seen as the intrinsic coordinate system of the manifold, while $f$ maps these coordinates to the high-dimensional observation space $\mathcal X$.
Our findings suggest that an optimal classifier will implicitly learn to invert this mapping to recover the disentangled coordinates. 
Moreover, for natural data where the manifold hypothesis holds, the learned latent representation would plausibly capture the manifold structure, as this is essential for disambiguating noisy observations and estimating the true underlying state. 
The low-dimensional manifold structure is a key prior that an optimal classifier can (and in our case must) exploit to improve its performance. 

\paragraph{Relation to Autoregressive Language \& Multi-Modal Transformers: } Consider an analogy with masked autoencoder vision foundation models, where $\mathbf x^*$ is the ``ground truth'' of a scene (objects, positions, states, and relationships), the measurement variable $\mathbf X$ is an image with missing patches \citep{Dosovitskiy2020, he2021masked}, and the model predicts the missing patch data $y(\mathbf x^*)$. The model's latent variable $\mathbf Z$ exhibits some ``understanding'' of $\mathbf x^*$ in the form of abstract representations useful for downstream tasks. This analogy extends to masked language models \citep{devlin2018BERT} and autoregressive language models \citep{radford2019language}, where $\mathbf x^*$ is ``meaning'' in a semantic space, $\mathbf X(t)$ are words, and $y$ is the next word. Localizing $\mathbf x^*$ from $\mathbf Z(t)$ relates to constructing a world model, showing that $\mathbf Z$ represents $\mathbf x^*$ abstractly and with high fidelity.

\paragraph{Ordering of Noise and Observation Map: } The ordering of the noise and the non-linear observation map matters for the latent space representation. When the noise is applied before the observation map, the noisy observations are constrained to lie on a manifold with the same intrinsic dimension as the true latent space $\mathcal X^*$. 
In contrast, when the noise is applied after the observation map, the noisy observations can deviate from the low-dimensional manifold, potentially introducing degeneracy where two noised observations arising from different $\mathbf x^*$ may appear identical (i.e., non-injective). 
Imagine $\mathcal X^*$ as a 2D piece of paper. 
An injective, smooth, continuous observation map $f: \mathcal X^* \to \mathcal X$ where $\mathcal X$ is a 3-dimensional space ``crumples'' the sheet of paper $\mathcal X^*$ into a crumpled ball in $\mathcal X$. 
If you add noise after the mapping, a point on one corner of the paper could get ``popped out'' of the 2D manifold by the noise and end up very far away on the crumpled surface if you were to examine it flattened out (illustrated in Figure~\ref{fig:noise_order}). 

\begin{figure}
    \centering
    \includegraphics[width=\textwidth]{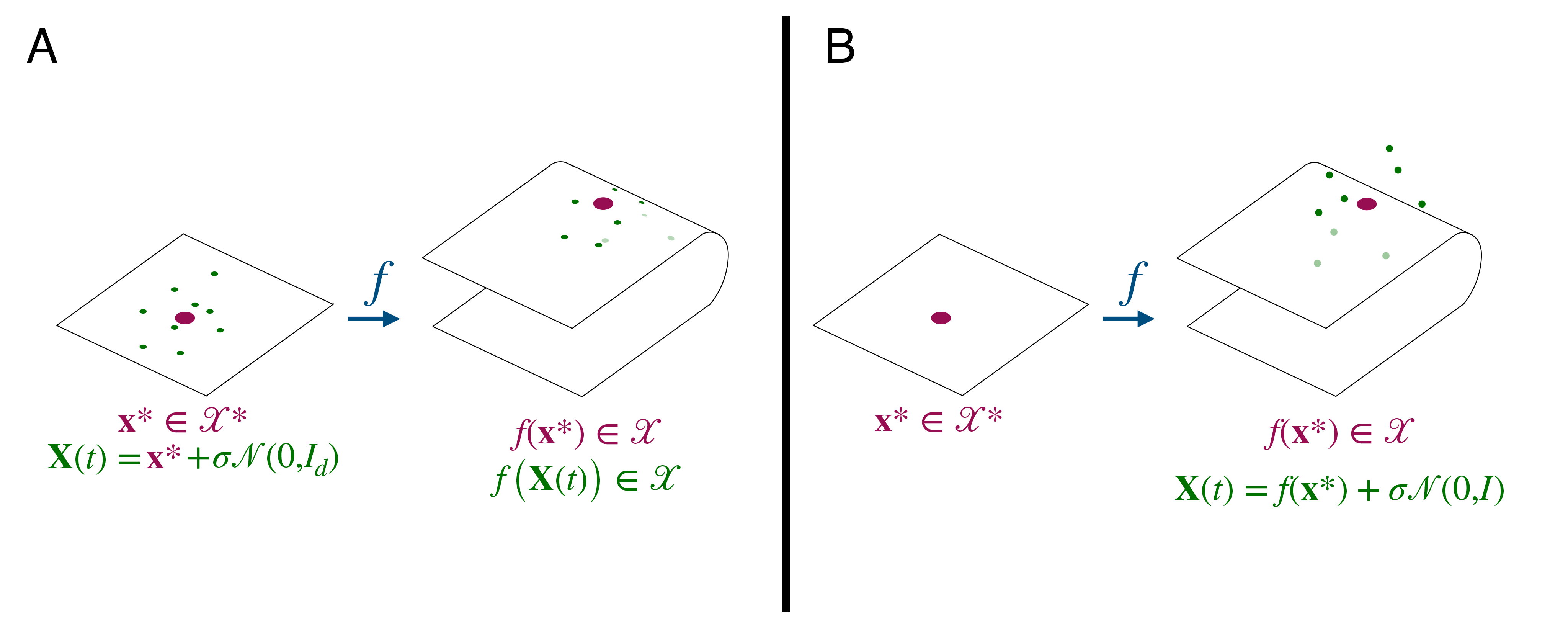}
    \caption{(A) $\mathbf x^*$ is noised before being transformed by injective observation map $f$, resulting in observations $f(\mathbf X(t))$ lying on the image of $f$ (here $f$ is a 2D folded surface). (B) $\mathbf x^*$ is first transformed by injective observation map $f$ and noise is added afterward, resulting in observations $f(\mathbf x^*) + \sigma \mathcal N(0, I)$ that do not lie on the image of $f$.}
    \label{fig:noise_order}
\end{figure}

The curvature of the observation map $f$ and the level of noise $\sigma$ are fundamental factors influencing the extent of the degeneracy introduced by the noise after the observation map. 
High curvature in $f$ can make the intrinsic geometry of the data more challenging to identify (e.g., more tightly crumpled paper). 
Large noise levels can push observations further from the underlying manifold, similarly worsening the potential degeneracy in the observations. 
The reach of the manifold $f$ (\cite{aamari2019estimatingreachmanifold}) can be used as an immediate loose bound for post-observation map noise $\epsilon_2(t)$ to ensure that the derived theorems still hold. 
Interestingly, this observation map degeneracy is a key concern in work on visual predictive coding for map learning by \citet{Gornet2024}. 
They demonstrate that a predictive coding network can learn to map a Minecraft environment with visually degenerate states by integrating information to perform predictive coding along trajectories within the environment.

\paragraph{Connection to the Platonic Representation Hypothesis:} Our results provide a new perspective on the Platonic representation hypothesis \citep{huh2024platonic}. 
The Platonic representation hypothesis suggests that the convergence in deep neural network representations is driven by a shared statistical model of reality, like Plato's concept of an ideal reality.
Convergence of representations is analyzed in terms of similarity of distances between embedded datapoints among AI models trained on various modalities.
While the authors of the hypothesis argue that energy constraints might lead to divergence from a shared representation for specialized tasks, our Optimal Representation Theorem suggests that the key factor driving convergence is the diversity and comprehensiveness of the tasks being learned. 
As long as the tasks collectively span the space of the underlying data representation, convergence to a shared, reality-aligned representation can occur, even in the presence of energy or computational limitations. 
Our theoretical results amount to a necessary condition for optimal multi-task classifiers to represent a disentangled representation of the data within their latent state. 
With energy constraints, extraneous network activity may be regularized out of the model, resulting in greater alignment between disentangled representations in energy constrained models that ``understand'' the Platonic nature of reality. 
The very energy constraints \cite{huh2024platonic} suggest may lead to divergence could actually facilitate convergence of the platonic representations, as they may encourage the learning of simple, generalizable features that capture the essential structure of the data. 
This insight opens up interesting avenues for future research on the interplay between task diversity, energy constraints, and the emergence of shared representations. Finally, energy constraints have been shown to naturally lead to predictive coding \cite{Rao1999,Ali2022}, tightening the relationship between energy efficiency, prediction, and cognitive map learning. A relationship between predictive coding and optimal Bayesian estimation has also been established \citep{Rao1999Bayes}.

\paragraph{Implications and Future Directions:} The theoretical analysis presented in this appendix sheds light on the factors driving the emergence of disentangled representations in neural networks and their alignment with the intrinsic structure of the data. 
By formalizing the conditions under which learned representations recover the true underlying data manifold, our work provides a foundation for understanding the remarkable success of representation learning across diverse domains. 
Avenues for future research include exploring the sample complexity of learning under different observation maps and noise levels, and empirically validating the convergence of representations across models and modalities in the context of task diversity and energy constraints.

\newpage

\section{Supplementary figures \& tables}

\vfill
\begin{table}[ht!]
\caption{\textbf{Hyperparameter values for RNN training.} These values apply to all simulations, unless otherwise stated. $\tau=100 \textit{ms}$ was chosen as a conservative estimate of membrane time constant. $\sigma$ was varied in some simulations (e.g. \cref{fig:rel_theory}c). We found that free RT tasks benefited from a higher learning rate. Other hyperparameters worked out of the box.}
\label{tab:params}
\vskip 0.1in
\centering
\begin{small}
\begin{sc}
\begin{tabular}{lcccr}
\toprule
{Parameter} & Value  & Explanation     \\
\midrule
$\Delta t$ & $100 \, \textrm{ms}$ & Euler integration step size\\
$\tau$     & $100 \, \textrm{ms}$ & Neuronal time constant     \\$N_{\textit{neu}}$     & $64$ & Number of hidden neurons \\
$\sigma$     & $0.2$ & Input noise standard deviation    \\
$T$ & $20$ & Trial duration (in $\Delta t$s)\\
$\eta_0$ & $0.001/0.003$ & Adam learning rate fixed/free RT\\
$B$ & $16$ & Batch size\\
$N_\textit{batch}$ & $10^5$ & Number of
training batches \\
$D$ & $2$ & Dimensionality of latent space \\
$N_\textit{layer}$ & $1$ & RNN/LSTM number of layers \\
\bottomrule
\end{tabular}
\end{sc}
\end{small}
\vskip -0.1in
\end{table}
\vfill

\cleardoublepage

\begin{figure}[ht]
\vskip 0.2in
\begin{center}
\centerline{\includegraphics[width=0.8\columnwidth]{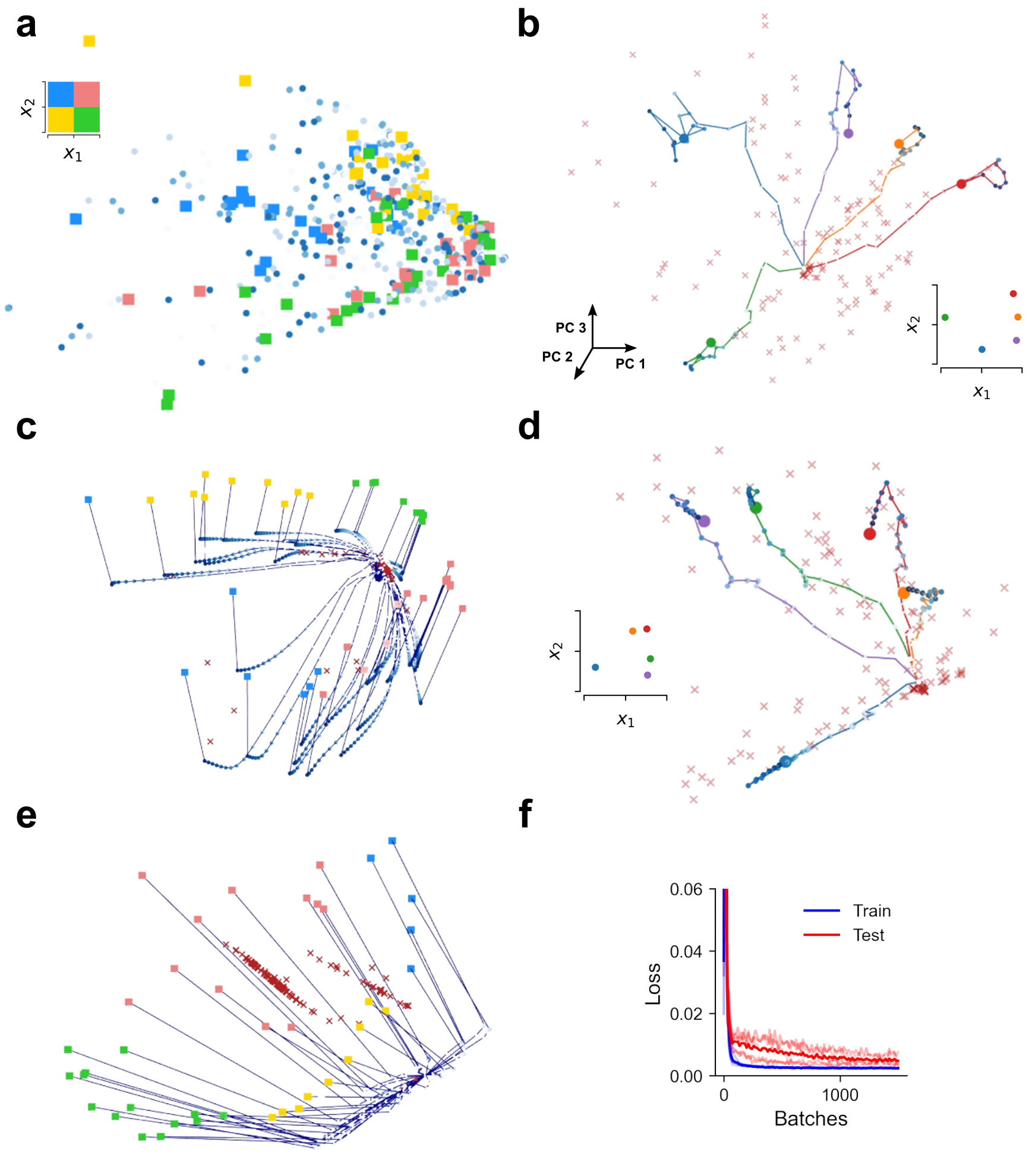}}
\caption{\textbf{Details of learned representations and of learning.} \textbf{(a)} Representation after the decoder. 3 out of 40 features were chosen randomly to be displayed. Compared to \cref{fig:fixed_rt}d the representations wrap around non-linearly, and the quadrants are overlapping. The RNN needs to invert the non-linear mapping and remove the noise to arrive at disentangled representations. \textbf{(b)} Individual trial examples from network in \cref{fig:data_arch}b. Plotting conventions same as in \cref{fig:fixed_rt}d, except here every trial has been mapped to a separate color (lines and final dot). The ground truth $\mathbf x^*$ for these example trials is shown in the bottom right, and the attractors have been made transparent for better visibility.  Note that these trials include noise, like the ones the network has been trained on. As can be seen, the network maintains a sense of metric distances in the 2D space: examples close in state space are also close in representation space. \textbf{(c)} Representation early in learning, for a network trained with 1/4 of the examples compared to \cref{fig:fixed_rt}d. The representation is not disentangled yet, however it is visible how the quadrants start separating and the attractors start spreading in the 2D manifold. \textbf{(d)} Same as in \textbf{b}, but for network with a delay period of 500 ms (5 darker dots at the end of trajectories). Activity remains localized after the removal of the evidence streams, maintaining a short-term memory of the joint evidence with only minor leaks. \textbf{(e)} Representation learned in a network trained without input noise ($\sigma=0$). Trajectories separate from the beginning, and there is no pressure to learn a 2D continuous attractor anymore. \textbf{(f)} Train and test errors for linear decoder for the OOD generalization task. Transparent lines correspond to different quadrants while opaque lines to the average across quadrants for one network.}
\label{fig:details}
\end{center}
\vskip -0.2in
\end{figure}

\cleardoublepage

\begin{figure}[]
\vskip 0.2in
\begin{center}
\centerline{\includegraphics[width=0.7\columnwidth]{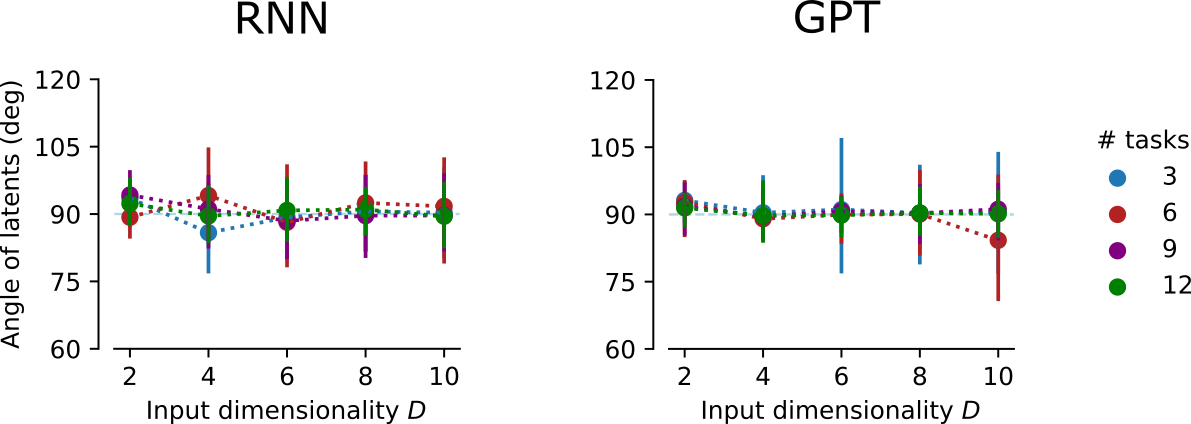}}
\caption{\textbf{Angles between latent factor decoders in higher dimensions.} Angles converge to 90 degrees as $N_{\text{task}} \gg D$ for RNNs, and as early as $N_{\text{task}} \geq D$ for transformers (see \cref{app:angles} for angle estimation details). This confirms that multi-task learning leads to orthogonal, disentangled representations, in some cases even earlier than our theoretical guarantees.}
\label{fig:angles}
\end{center}
\vskip -0.2in
\end{figure}

\end{document}